\newtheorem{example}{Example}
\newtheorem{theorem}{Theorem}
\newtheorem{lemma}{Lemma}
\let\llncs@addcontentsline\addcontentsline
\patchcmd{\maketitle}{\addcontentsline}{\llncs@addcontentsline}{}{}
\patchcmd{\maketitle}{\addcontentsline}{\llncs@addcontentsline}{}{}
\patchcmd{\maketitle}{\addcontentsline}{\llncs@addcontentsline}{}{}
\newcommandx{\unsure}[2][1=]{\todo[linecolor=red,backgroundcolor=red!25,bordercolor=red,#1]{#2  }}
\newcommandx{\change}[2][1=]{\todo[linecolor=blue,backgroundcolor=blue!25,bordercolor=blue,#1]{#2}}
\newcommandx{\info}[2][1=]{\todo[linecolor=OliveGreen,backgroundcolor=OliveGreen!25,bordercolor=OliveGreen,#1]{#2}}
\newcommandx{\improvement}[2][1=]{\todo[linecolor=Plum,backgroundcolor=Plum!25,bordercolor=Plum,#1]{#2}}
\newcommand{\displayskips}{%
  \setlength{\abovedisplayskip}{4pt}%
  \setlength{\belowdisplayskip}{4pt}%
  \setlength{\abovedisplayshortskip}{4pt}%
  \setlength{\belowdisplayshortskip}{4pt}}
\appto{\normalsize}{\displayskips}
\appto{\small}{\displayskips}
\appto{\footnotesize}{\displayskips}
\newcommand{\rfn}{\mathrel{\underline{\mn{refines}}}}
\newcommand{\abs}{\mathrel{\underline{\mn{abstracts}}}}
\title{Description Logics with Abstraction and Refinement}
\author{%
  Carsten Lutz$^{1,2}$
  \and
Lukas Schulze$^1$ \\
\affiliations
$^1$
Department of Computer Science, Leipzig University, Germany\\
$^2$Center for Scalable Data Analytics and Artificial Intelligence (ScaDS.AI)\\
\emails
\{clu, lschulze\}@informatik.uni-leipzig.de
}
\begin{document}

\maketitle

\begin{abstract}
  Ontologies often require knowledge representation on multiple
  levels of abstraction, but description logics (DLs) are not
  well-equipped for supporting this. We propose an extension of DLs in
  which abstraction levels are first-class citizens and which provides
  explicit operators for the abstraction and refinement of concepts
  and roles across multiple abstraction levels, based on conjunctive
  queries. We prove that reasoning in the resulting family of DLs is
  decidable while several seemingly harmless variations turn out
  to be undecidable.  We also pinpoint the precise complexity of our
  logics and several relevant fragments.
\end{abstract}

\section{Introduction}

Abstraction and refinement is an important topic in many subfields of
computer science such as systems
verification~\cite{DBLP:reference/mc/DamsG18}. The same is true
for ontology design because ontologies often refer to different levels
of abstraction (or equivalently, levels of granularity). To name only
one example, the widely known medical ontology {\sc SnoMed CT}
contains the concepts \mn{Arm}, \mn{Hand}, \mn{Finger}, \mn{Phalanx},
\mn{Osteocyte}, and \mn{Mitochondrion} which intuitively all belong to
different (increasingly finer) levels of abstraction
\cite{DBLP:conf/amia/StearnsPSW01}.
%
%
Existing ontology languages, however, do not provide explicit support
for modeling across different abstraction levels. The aim of this
paper is to introduce a family of description logics (DLs) that provide
such support in the form of abstraction and refinement operators.

We define the \emph{abstraction DL} $\ALCHI^{\mn{abs}}$ as an extension of
the familiar description logic \ALCHI, which may be viewed as a modest
and tame core of the OWL 2 DL ontology language. 
In
principle, however, the same extension can be applied to any other DL,
both more and less expressive than \ALCHI. Abstraction levels are
explicitly named and referred to in the ontology, and we provide
explicit operators for the abstraction and refinement of concepts and
roles.  For example, the concept refinement
  $$
    L_2{:}q_A \rfn L_1{:}\mn{Arm},
  $$
  where $q_A$ denotes the conjunctive query (CQ)
  $$
  \begin{array}{r@{\;}c@{\;}l}
     q_A &=&\mn{UArm}(x_1) \wedge \mn{LArm}(x_2) \wedge \mn{Hand}(x_3)\, \wedge
    \\[1mm]
     &&\mn{joins}(x_1,x_2) \wedge \mn{joins}(x_2,x_3),
  \end{array}
  $$
  expresses that every instance of \mn{Arm} on the coarser
  abstraction level $L_1$ decomposes into an ensemble of three objects on
  the finer level $L_2$ as described by $q_A$.
%
  Concept abstractions are dual to refinements and state that every
  ensemble of a certain kind on a finer level gives rise to an
  abstracting object on a coarser level.  Semantically, there is
  one classical DL interpretation for each abstraction level and a
  (partial) refinement function that associates objects on coarser
  levels with ensembles on finer levels.  Every object may
  participate in at most one ensemble and we require that abstraction
  levels are organized in the form of a tree. We believe that the abstraction
  DLs defined along these lines are useful for many application
  domains, examples are given in the paper. Though not
  limited to it, our DLs are particularly well-suited for capturing
  the mereological (part-whole) aspect of abstraction and refinement \cite{artale1996part}.

  Our main technical contribution is to show that adding abstraction
  and refinement to \ALCHI preserves decidability of the base logic,
  and to provide a detailed analysis of its complexity, also
  considering many relevant fragments. It turns out that
  satisfiability in $\ALCHI^{\mn{abs}}$ is 2\ExpTime-complete.  Note
  that this is in line with the fact that CQ evaluation in \ALCHI is
  2\ExpTime-complete \cite{DBLP:conf/cade/Lutz08}. For the fragments,
  however, such parallels to evaluation cease to hold. We use
  $\ALCHI^{\mn{abs}}[\textnormal{cr}]$ to denote $\ALCHI^{\mn{abs}}$
  in which only concept refinement is admitted, and likewise ca
  denotes concept abstraction and rr, ra denote role refinement and
  abstraction. We recall that \ALC is \ALCHI without inverse roles
  and role hierarchies.

  We show that satisfiability in the natural fragment
  $\ALCHI^{\mn{abs}}[\textnormal{cr}]$ is only \ExpTime-complete,
  despite the fact that it still comprises CQs. Moreover,
  2\ExpTime-hardness already holds for $\ALC^{\mn{abs}}$ in contrast
  to the fact that CQ evaluation in \ALC is in \ExpTime. There are
  actually three different sources of complexity as
%
\begin{figure}[t]
  \centering
    \begin{tabular}{|c|c|c|c|c|c|}
      \hline
      \rule{0pt}{1.8ex}Base DL  & cr & ca & rr & ra &  Complexity \\\hline \hline
      \rule{0pt}{2.1ex}\ALCHI    & X  &    &    &    &  \hyperref[sec:alci_cr_exp_ub]{in \ExpTime} \\\hline
      \rule{0pt}{2.1ex}\ALCHI    & X  &  X & X  &  X &  \hyperref[sec:alci_all_ar_statements_ub]{in 2\ExpTime} \\\hline
      \rule{0pt}{2.1ex}\ALC     &    &  X &    &    &  \hyperref[sec:alc_ca_2exp_lb]{2\ExpTime-hard} \\\hline
      \rule{0pt}{2.1ex}\ALC     &    &    &  X &    &  \hyperref[sec:rr_2exp_lb]{2\ExpTime-hard} \\\hline
      \rule{0pt}{2.1ex}\ALC     &    &    &    & X  &  \hyperref[sec:alc_ra_2exp_lb]{2\ExpTime-hard} \\\hline
    \end{tabular}
  \caption{The complexity of satisfiability in abstraction DLs.}
  \label{overview}
\vspace*{-5mm}
\end{figure}
satisfiability is 2\ExpTime-hard already in each of the fragments
$\ALC^{\mn{abs}}[\textnormal{ca}]$,
$\ALC^{\mn{abs}}[\textnormal{rr}]$, and
$\ALC^{\mn{abs}}[\textnormal{ra}]$.
In
$\ALC^{\mn{abs}}[\textnormal{ra}]$, role abstractions allow us to
recover inverse roles. The same is true for
$\ALC^{\mn{abs}}[\textnormal{ca}]$ that, however, requires a more
subtle reduction relying on the fact that ensembles must not overlap.
Finally, $\ALC^{\mn{abs}}[\textnormal{rr}]$ is 2\ExpTime-hard because
role refinements allow us to generate objects interlinked in a complex
way. See Figure~\ref{overview} for a summary.

We then observe that the decidability of $\ALCHI^{\mn{abs}}$ is more
fragile than it may seem on first sight and actually depends on a
number of careful design decisions. In particular, we consider three
natural extensions and variations and show that each of them is
undecidable.  The first variation is to drop the requirement that
abstraction levels are organized in a tree. The second is to add the
requirement that ensembles (which are tuples rather than sets) must
not contain repeated elements. And the third is to drop the
requirement that CQs in abstraction and refinement statements must be
full, that is, to admit quantified variables in such CQs.

Proofs are in the appendix.

\medskip
\noindent
{\bf Related Work.}
A classic early article on granularity in AI is
\cite{DBLP:conf/ijcai/Hobbs85}.  Granularity has later been studied in
the area of foundational ontologies, focussing on a philosophically
adequate modeling in first-order logic. Examples include granular
partitions \cite{DBLP:books/tf/03/Bittner003}, the descriptions and
situations framework \cite{DBLP:conf/coopis/GangemiM03}, and
domain-specific approaches
\cite{DBLP:journals/amai/FonsecaEDC02,DBLP:conf/mie/SchulzBS08,DBLP:journals/biomedsem/Vogt19}.

Existing approaches to representing abstraction and refinement /
granularity in DLs and in OWL are rather different in spirit. Some
are based on rough or fuzzy set theory
\cite{DBLP:journals/wias/KlinovTM08,DBLP:journals/fuin/LisiM18}, some
provide mainly a modeling discipline
\cite{DBLP:journals/ijar/CalegariC10}, some aim at the spatial domain
\cite{DBLP:conf/esws/HbeichRB21} or at speeding up reasoning
\cite{DBLP:conf/aaai/GlimmKT17}, and some take abstraction to mean the
translation of queries between different data schemas
\cite{DBLP:conf/aaai/CimaCLP22}. We also mention description logics of
context \cite{DBLP:journals/logcom/KlarmanG16}; an abstraction level
can be seen as a context, but the notion of context is more general
and governed by looser principles. A categorization
of different forms of granularity is in \cite{Keet}.

There is a close connection between DLs with
abstraction as proposed in this paper and the unary negation fragment
of first-order logic (UNFO). In fact, UNFO encompasses ontologies
formulated in DLs such as \ALCI and conjunctive queries. UNFO
satisfiability is decidable and 2\ExpTime-complete
\cite{DBLP:journals/corr/SegoufinC13}. This does, however, not imply any of the
results in this paper due to the use of refinement functions
in the semantics of our DLs and the fact that UNFO extended with
functional relations is undecidable \cite{DBLP:journals/corr/SegoufinC13}.

\section{Preliminaries}
\label{sect:prelims}

\paragraph{Base DLs.}
Fix countably infinite sets \Cbf and \Rbf of \emph{concept names}
and \emph{role names}. A \emph{role} is a role name or an
\emph{inverse role}, that is, an expression $r^-$ with $r$ a role
name. If $R=r^-$ is an inverse role, then we set $R^- =
r$. \emph{\ALCI-concepts} $C,D$ are built according to the syntax rule
$$
     C,D::= A \mid \neg C \mid C \sqcap D \mid C \sqcup D \mid \exists
     R . C \mid \forall R . C
$$
where $A$ ranges over concept names and $R$ over roles. We use $\top$
as an abbreviation for $A \sqcup \neg A$ with $A$ a fixed concept name
and $\bot$ for $\neg \top$. An \emph{\ALC-concept} is an \ALCI-concept
that does not use inverse roles and an \emph{\EL-concept} is an
\ALC-concepts that uses none of $\neg$, $\sqcup$, and $\forall$.

An \ALCHI-\emph{ontology} is a finite set of \emph{concept inclusions
  (CIs)} $C \sqsubseteq D$ with $C$ and~$D$ \ALCI-concepts and
\emph{role inclusions (RIs)}, $R \sqsubseteq S$ with $R,S$ roles. The
letter \Imc indicates the presence of inverse roles and \Hmc indicates
the presence of role inclusions (also called role hierarchies), and
thus it should also be clear what we mean e.g.\ by an \ALCI-ontology
and an $\mathcal{ALCH}$-ontology. An \emph{\EL-ontology} is a finite
set of CIs $C \sqsubseteq D$ with $C,D$ \EL-concepts.

An \emph{interpretation} is a pair $\Imc = (\Delta^\Imc, \cdot^\Imc)$
with $\Delta^\Imc$ a non-empty set (the \emph{domain}) and
$\cdot^\Imc$ an \emph{interpretation function} that maps every concept
name $A \in \textbf{C}$ to a set
$A^\Imc \subseteq \Delta^\Imc$ and every role name $r \in \textbf{R}$ to a binary relation \mbox{$r^\Imc \subseteq \Delta^\Imc \times \Delta^\Imc$}.
			The interpretation function is extended to
                        compound concepts as usual, c.f.~\cite{DBLP:books/daglib/0041477}.
%
%
An interpretation \Imc \emph{satisfies} a CI $C \sqsubseteq D$ if
$C^\Imc \subseteq D^\Imc$ and
an RI $R \sqsubseteq S$ if $R^\Imc \subseteq S^\Imc$. It is a
\emph{model} of an ontology \Omc if it satisfies all CIs and RIs
in it.

For any syntactic object $O$ such as an ontology or a concept,
we use $||O||$ to denote the \emph{size} of $O$,
that is, the number of symbols needed to write $O$ over a suitable
alphabet.

\medskip
\noindent
{\bf Conjunctive Queries.}
Let \Vbf be a countably infinite set of variables.  A
\emph{conjunctive query (CQ)} takes the form
$q(\bar x) = \exists \bar y \, \vp(\bar x, \bar y)$ with $\varphi$ a
conjunction of \emph{concept atoms} $C(x)$ and \emph{role atoms}
$r(x,y)$, 
$C$ a (possibly compound) concept, $r$ a role name, and $x,y$
variables from $\bar x \cup \bar y$. We may write $\alpha \in q$ to
indicate that $\alpha$ is an atom in~$\varphi$ and $r^-(x,y) \in q$ in place
of $r(y,x) \in q$. The variables in $\bar x$ are the \emph{answer
  variables} of~$q$. We require that every answer variable $x$ occurs
in some atom of~$q$, but omit this atom in writing in case it is
$\top(x)$.  With $\mn{var}(q)$, we denote the set of all (answer and
quantified) variables in $q$. If $q$ has no answer variables then it
is \emph{Boolean}.
%
 We mostly restrict our
attention to CQs $q$ that are \emph{full}, meaning that $q$ has
no quantified variables.  A CQ $q$ is
\emph{connected} if the undirected graph with node set $\mn{var}(q)$
and edge set
$\{\{v, v'\} \mid r(v, v') \in q \text{ for any } r \in \Rbf\}$ is.
%
A CQ $q$ is a \emph{subquery} of a CQ $q'$ if $q$ can be obtained
from $q'$ by dropping atoms.

Let $q(\bar x) = \exists \bar y \, \vp(\bar x, \bar y)$ be a CQ and
$\Imc$ an interpretation. A mapping
$h:\bar x \cup \bar y \rightarrow \Delta^\Imc$ is a
\emph{homomorphism} from $q$ to \Imc if $C(x) \in q$ implies
$h(x) \in C^{\Imc}$ and $r(x,y) \in q$ implies
$(h(x),h(y)) \in r^{\Imc}$.  A tuple
$\bar d \in (\Delta^\Imc)^{|\bar x|}$
is an \emph{answer} to $q$ on \Imc 
if there is a homomorphism $h$ from $q$ to
\Imc with $h(\bar x)=\bar d$. 
We use $q(\Imc)$ to denote the set of all answers to $q$ on \Imc.
If $q$ is Boolean, we write $\Imc \models q$ to indicate the existence
of a homomorphism from $q$ to \Imc.




\section{DLs with Abstraction and Refinement}
\label{sect:abstractionDLs}

We extend \ALCIH to the DL $\ALCIH^{\mn{abs}}$ that supports
abstraction and refinement.
%
Fix a countable set $\Abf$ of \emph{abstraction
  levels}. 
%
%
An $\ALCIH^{\mn{abs}}$-ontology is a finite set of statements of the
following form:
\begin{itemize}
\item \emph{labeled concept inclusions} $C \sqsubseteq_L D$, 
\item \emph{labeled role inclusions} $R \sqsubseteq_L S$, 
\item \emph{concept refinements}
$L{:}q(\bar x) \rfn L'{:}C$,
\item \emph{concept abstractions}
  $L'{:}C \abs L{:}q(\bar x)$,
\item \emph{role refinements} $L{:}q(\bar x, \bar y)
  \rfn L'{:}q_R(x,y)$,
\item \emph{role abstractions}
  $L'{:}R \abs  L{:}q(\bar x, \bar y)$
\end{itemize}
where $L,L'$ range over \Abf, $C,D$ over
\ALCI-concepts, $R,S$ over roles, $q$ over full conjunctive queries,
and $q_R$ over full conjunctive queries of the form
$C_1(x) \wedge R(x,y) \wedge C_2(y)$. In concept and role abstraction
statements, we additionally require the CQ $q$ to be connected.
We may write $C \equiv_L D$ as shorthand for the two CIs
$C \sqsubseteq_L D$ and $D \sqsubseteq_L C$. We underline abstraction
and refinement operators to ensure better readability throughout the paper.

Intuitively, a concept refinement $L{:}q(\bar x) \rfn L'{:}C$
expresses that any instance of $C$ on abstraction level $L'$ refines
into an \emph{ensemble} of $|\bar x|$ objects on abstraction level
$L$ which satisfies all properties expressed by CQ
$q$. Conversely, a concept abstraction $L'{:}C \abs L{:}q(\bar x)$
says that any ensemble of $|\bar x|$ objects on abstraction level $L$
that satisfies $q$ abstracts into a single instance of $C$ on
abstraction level $L'$. Role refinements and abstractions can be 
understood in a similar way, where each of the two elements
that participate in a role relationship refines into its own ensemble.

Note that in role refinements, we consider CQs 
$q_R= C_1(x) \wedge R(x,y) \wedge C_2(y)$ rather than only the role 
$R$. This is because roles are often of a general 
kind such as \mn{partOf} or \mn{interactsWith} and need proper context 
to be meaningfully refined. This context is provided by the concepts 
$C_1,C_2$. 
\begin{example}
  \label{ex:first}
  Granularity is important in many domains. Anatomy has already
  been mentioned in the introduction. The concept refinement
  given there may be complemented by choosing $q_A$ as in the
  introduction and adding
  the concept abstraction
  %
  $$
    L_1{:}\mn{Arm} \abs L_2{:}q_A.
  $$

  We next consider bikes as a simple example for a technical domain.  Let
  us first say how wheels refine into components:
  $L_2{:}q_W \rfn L_1{:}\mn{Wheel}$ 
  where
  $$
  \begin{array}{r@{\;}c@{\;}l}
     q_W &=&\mn{Axle}(x_1) \wedge \mn{Spokes}(x_2) \wedge
             \mn{Rim}(x_3) \wedge \mn{Tire}(x_4)\, \wedge
    \\[1mm]
         &&\mn{join}(x_2,x_1) \wedge \mn{join}(x_2,x_3) \wedge
            \mn{carries}(x_3,x_4).
  \end{array}
  $$
  We may then use the following role refinement to express how
  frames connect to wheels:
  $$
  L_2{:}q_{FW} \rfn L_1{:}\, \mn{Wheel}(x) \wedge \mn{connTo}(x,y)
  \wedge \mn{Frame}(y)
  $$
  where, for $\bar x = x_1 \cdots x_4$ and $\bar y = 
   y_1 \cdots y_7$ (assuming that frames have seven components),
  $$
  \begin{array}{@{}r@{\;}c@{\;}l}
  q_{FW}(\bar x,\bar y) &=& \mn{Axle}(x_1) \wedge
  \mn{connTo}(x_1,y_1) \wedge
    \mn{Dropout}(y_1).
  \end{array}
  $$
  This expresses that if a wheel is connected to a frame, then the
  axle of the wheel is connected to the dropout of the frame.
\end{example}
Extensions $\Lmc^{\mn{abs}}$ of other DLs \Lmc introduced in
Section~\ref{sect:prelims}, such as \ALC and $\mathcal{ALCH}$, may be
defined in the expected way.  We also consider various fragments of
$\ALCIH^{\mn{abs}}$. With $\ALCIH^{\mn{abs}}[\textnormal{cr,rr}]$, for
example, we mean the fragment of $\ALCIH^{\mn{abs}}$ that admits concept
refinement and role refinement, but neither concept abstraction nor
role abstraction (identified by ca and ra).

%

We next define the semantics of $\ALCIH^{\mn{abs}}$, based on
\emph{A-interpretations} which include one traditional DL
interpretation for each abstraction level. Formally, an
A-interpretation takes the form
\newcommand{\AI}{\Abf_\Imc}
$\Imc = (\AI,\prec,(\Imc_L)_{L \in {\AI}},\rho)$, where
\begin{itemize}

\item $\AI \subseteq \Abf$ is the set of relevant abstraction
  levels;
  
\item ${\prec} \subseteq \AI \times \AI$ is such that the
  directed graph $(\AI,\{(L',L)\mid L \prec L' \})$ is a
  tree; intuitively, $L \prec L'$ means that $L$ is less abstract than
  $L'$ or, in other words, that the modeling granularity of $L$ is
  finer than that of $L'$;
  
  \item $(\Imc_L)_{L \in \AI}$ is a collection of interpretations
    $\Imc_L$, one for every $L \in \AI$, with pairwise disjoint
    domains; we use $L(d)$ to denote the unique $L \in \AI$ with
    $d \in \Delta^{\Imc_L}$; 

  \item $\rho$ is the \emph{refinement function}, a partial function
    that associates pairs $(d,L) \in \Delta^\Imc \times \AI$ such that
    $L \prec L(d)$ with an \emph{$L$-ensemble} $\rho(d,L)$, that is,
    with a non-empty tuple over $\Delta^{\Imc_L}$. We want every
    object to participate in only one ensemble and thus require
    that
    \begin{itemize}

    \item[$(*)$] for all $d \in \Delta^\Imc$, there
    is at most one $e \in \Delta^{\Imc}$ such that $d$ occurs in
    $\rho(e,{L(d)})$.
    
    \end{itemize}
    For readability, we may write $\rho_L(d)$ in place of $\rho(d,L)$.

\end{itemize}
%
%
%
An A-interpretation $\Imc = (\AI,\prec,(\Imc_L)_{L \in \AI},\rho)$
\emph{satisfies} a
\begin{itemize}

\item labeled
  concept or role inclusion $\alpha \sqsubseteq_L \beta$
  if
  $L \in \AI$ and 
  $\alpha^{\Imc_L} \subseteq \beta^{\Imc_L}$;

\item concept refinement
  $L{:}q(\bar x) \rfn L'{:}C$ if $L \prec L'$ and  for all
  $d \in C^{\Imc_{L'}}$, there is an $\bar e \in q(\Imc_L)$ such that
  $\rho_L(d) =\bar e$;

\item concept abstraction
  $L'{:}C \abs L{:}q(\bar x)$ if $L \prec L'$ and  for all
  $\bar e \in q(\Imc_{L})$, there is a $d \in C^{\Imc_{L'}}$ s.t.\
  $\rho_L(d) = \bar e$;
  
\item role refinement
  $L{:}q(\bar x, \bar y) \rfn L'{:}q_R(x,y)$ if $L \prec L'$ and  for all
  $(d_1,d_2) \in q_R(\Imc_{L'})$, there is an
  $(\bar e_1,\bar e_2) \in q(\Imc_L)$ such that $\rho_L(d_1) =\bar e_1$
  and $\rho_L(d_2) =\bar e_2$;

\item role abstraction
  $L'{:}R \abs  L{:}q(\bar x, \bar y)$ if $L \prec L'$ and  for
  all $(\bar e_1,\bar e_2) \in q(\Imc_{L})$, there is a
  $(d_1,d_2) \in R^{\Imc_{L'}}$ such that
  $\rho_L(d_1) =\bar e_1$ and
  $\rho_L(d_2) =\bar e_2$.

\end{itemize}
An A-interpretation is a \emph{model} of an
$\ALCHI^{\mn{abs}}$-ontology if it satisfies all inclusions,
refinements, and abstractions in it.  
\begin{example}
  We consider the domain of (robotic) actions. Assume that there is a
  \mn{Fetch} action that refines into subactions:
  $L_2{:}q_F \rfn L_1{:}\mn{Fetch}$ where
  $$
  \begin{array}{r@{\;}c@{\;}l}
     q_F &=&\mn{Locate}(x_1) \wedge \mn{Move}(x_2) \wedge
             \mn{Grasp}(x_3) \, \wedge
    \\[1mm]
         &&\mn{precedes}(x_1,x_2) \wedge \mn{precedes}(x_2,x_3).
  \end{array}
  $$
  We might have a safe version of the fetching action and a
  two-handed grasping action:
  $$
  \begin{array}{r@{\;}c@{\;}l}
    \mn{SFetch} &\sqsubseteq_{L_1}& \mn{Fetch} \\[1mm]
    \mn{TwoHandedGrasp} &\sqsubseteq_{L_1}& \mn{Grasp}
  \end{array}
  $$
  A safe fetch requires a two-handed grasping subaction:
  $L_2{:}q_S \rfn L_1{:}\mn{SFetch}$ where for $\bar x = x_1 x_2 x_3$,
  $$
  q_S(\bar x) = \mn{TwoHandedGrasp}(x_3).
  $$
  We remark that abstraction statements need to be used with care
  since ensembles may not overlap, c.f.\ Condition~($*$). For example,
  the reader may want to verify that the following CI and concept
  abstraction have no model:
  $$
  \top \sqsubseteq_{L_2} \exists r . \exists r . \top
  \qquad
    L_1{:}\top \abs L_2{:}r(x,y).
  $$
\end{example}
We are interested in the problem of \emph{(concept)
  satisfiability} 
which means to decide, given an $\ALCHI^{\mn{abs}}$-ontology \Omc, an
\ALCI-concept $C$, and an abstraction level $L \in \Abf$, whether
there is a model $\Imc$ of $\Omc$ such that
$C^{\Imc_{L}} \neq \emptyset$. We then say that $C$ is
\emph{$L$-satisfiable w.r.t.~$\Omc$}.  As usual, the related
reasoning problems of subsumption can be reduced to satisfiability in
polynomial time, and vice versa \cite{DBLP:books/daglib/0041477}.
%

\section{Upper Bounds}

We prove that satisfiability in $\ALCHI^\mn{abs}$ is decidable in
2\ExpTime. Before
approaching this general case, however, we consider the fragment
$\ALCHI^\mn{abs}[\textnormal{cr}]$ and show that it is only
\ExpTime-complete.

\subsection{\texorpdfstring{$\ALCHI^\mn{abs}[\textnormal{cr}]$ in ExpTime}{ALCHI\textasciicircum abs[cr] in ExpTime}}
\label{sec:alci_cr_exp_ub}

Our aim is to prove the following.
\begin{theorem}
  Satisfiability in $\ALCHI^\mn{abs}[\textnormal{cr}]$ is \ExpTime-complete.
\end{theorem}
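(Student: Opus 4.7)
The \ExpTime lower bound is inherited from standard \ALCHI-satisfiability, so the real work is the upper bound. My plan is to adapt type elimination to the abstraction setting. I would first extract the finite set of abstraction levels mentioned in $\Omc$ together with the target level, and check that the $\prec$-constraints induced by the refinement statements of $\Omc$ extend to a tree; otherwise reject immediately. For each such level $L$, I compute the set $T_L$ of \emph{types}: Boolean-consistent subsets of the closure of subconcepts of $(\Omc,C_0)$ that are closed under the CIs labeled with $L$. This yields at most $2^{|\Omc|+|C_0|}$ types per level.

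I would then iteratively remove types on two grounds. The \emph{standard} rule discards $\tau \in T_L$ whenever some existential $\exists R.D \in \tau$ has no $R$-compatible $\tau' \in T_L$ with $D \in \tau'$ (respecting the $\forall$-restrictions and the role hierarchy labeled~$L$). The \emph{refinement} rule handles concept refinements: for $\tau \in T_{L'}$, I collect every $\delta = L{:}q_\delta(\bar x_\delta) \rfn L'{:}C_\delta \in \Omc$ whose LHS concept $C_\delta$ lies in $\tau$. Since $\rho_L(d)$ is a single tuple, all arities $|\bar x_\delta|$ must coincide on some $n$; otherwise $\tau$ is discarded. In the other case I check whether the merged full CQ $\bigwedge_\delta q_\delta$ (aligning the $i$-th variables across the $\delta$'s) admits a tuple $(\tau_1,\ldots,\tau_n) \in T_L^n$ with $A \in \tau_i$ for every concept atom $A(x_i)$ and with $\tau_i,\tau_j$ being $r$-compatible for every role atom $r(x_i,x_j)$. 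Because the CQs are full, this is a binary CSP over the domain $T_L$ and can be solved by brute force in $|T_L|^n \le 2^{O(|\Omc|^2)}$ time per $\tau$. Accept iff some $\tau \in T_{L_0}$ with $C_0 \in \tau$ survives; the total running time is $2^{O(|\Omc|^2)}$, i.e., \ExpTime.

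For correctness, the easy direction extracts surviving types from any model. For the converse, I would construct an A-interpretation from surviving types by creating fresh witnesses for $\exists R.D$ as in the standard \ALCHI-case and, for every concept instance that triggers refinements, freshly creating an ensemble whose element types realize the witnessing tuple. Using fresh elements automatically enforces the non-overlap condition~$(*)$. The main obstacle I expect is this model-construction step: one has to argue that realizing one ensemble cannot damage the type of a pre-existing element or invalidate a previously discharged refinement. This will rely on $q$ being full, so that fresh ensemble elements attach as leaves to the already-built structure and the newly added internal $r$-edges among them are controlled solely by the $r$-compatibility condition, which in turn must be carefully formulated to respect role hierarchies and inverse roles at level $L$.
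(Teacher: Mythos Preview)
Your approach is correct in spirit and very close to the paper's, which also uses an elimination procedure, but the paper eliminates \emph{mosaics} (pairs $(L,\Jmc)$ with $\Jmc$ a small interpretation of size at most $||\Omc||$) rather than single-element types. The refinement check in the paper asks, for each relevant finer level, whether some surviving mosaic at that level admits a single tuple that simultaneously answers all triggered queries; your version replaces ``mosaic with answer'' by ``tuple of pairwise $r$-compatible types'', which is equivalent once the ontology is normalized so that the only edge constraints are of the form $\exists R.B \sqsubseteq_L A$. Your approach is arguably a bit more elementary; the paper's mosaic formulation is chosen because it scales to the full $\ALCHI^{\mn{abs}}$ case treated next.

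Two small gaps are worth fixing. First, refinements must be grouped by their \emph{target} level: for $\tau\in T_{L'}$ and each $L$ with $L\prec L'$ you collect only the statements $L{:}q_\delta \rfn L'{:}C_\delta$ with $C_\delta\in\tau$ and require a common arity and a witnessing type-tuple \emph{per such $L$}, since $\rho_{L_1}(d)$ and $\rho_{L_2}(d)$ are independent tuples. As written, your rule merges refinements across different~$L$ and may wrongly discard~$\tau$ on an arity mismatch between levels. Second, acceptance must also require that every level occurring in $\Omc$ retains at least one type; the semantics demands non-empty domains at each level, and the paper's Condition~(ii) in Lemma~1 enforces exactly this. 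With these fixes your argument goes through, and the model construction you sketch (fresh ensemble elements, edges only from the CQ atoms, further witnesses/ensembles added recursively) is exactly the one the paper carries out in its soundness proof.
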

The lower bound is inherited from \ALCHI without abstraction and
refinement \cite{DBLP:books/daglib/0041477}. We prove the upper bound
by a mosaic-based approach, that is, we decide the existence of a
model \Imc by trying to assemble \Imc from small fragments called
mosaics. Essentially, a mosaic describes a single ensemble on a single
level of abstraction.

Assume that we are given as input an
$\ALCHI^\mn{abs}[\textnormal{cr}]$-ontology \Omc, an \ALCI-concept
$C_0$, and an abstraction level~$L_0$. We may assume w.l.o.g.\ that
$C_0$ is a concept name as we can extend \Omc with
$A_0 \sqsubseteq_{L_0} C_0$ and test satisfiability of the fresh
concept name $A_0$. We also assume w.l.o.g.\ that \Omc is in
\emph{normal form}, meaning that
\begin{enumerate}
  
\item every CI has one of the forms
	\[
	\begin{array}{r@{\qquad}l@{\qquad}r}
		\top \sqsubseteq_L A
		&
		A \sqsubseteq_L \exists R . B
		&
		\exists R . B \sqsubseteq_L A
		\smallskip\\
		A_1 \sqcap A_2 \sqsubseteq_L A
		&
		A \sqsubseteq_L \neg B
		&
		\neg B \sqsubseteq_L A
	\end{array}
	\]
	where $A, A_1, A_2, B$ are concept names and $R$ is a
        role;

      \item in every concept refinement $L{:}q(\bar x) \rfn L'{:}C$,
        $C$ is a concept name, and so is $D$ in all concept atoms $D(x) \in q$.

\end{enumerate}
%
           %
It is in fact routine to show that every
$\ALCHI^\mn{abs}[\textnormal{cr}]$ ontology \Omc can be converted in
polynomial time into an $\ALCHI^\mn{abs}[\textnormal{cr}]$ ontology
$\Omc'$ in normal form that is a conservative extension of \Omc, see
e.g.\ \cite{maniere:thesis}.
We also assume that (i)~\Omc contains $R \sqsubseteq_L R$ for all
roles $R$ and abstraction levels $L$ in~\Omc,
(ii)~$R \sqsubseteq_L S$, $S \sqsubseteq_L T \in \Omc$ implies
$R \sqsubseteq_L T \in \Omc$, and 
(iii)~$R \sqsubseteq_L S \in \Omc$ implies
\mbox{$R^- \sqsubseteq_L S^- \in \Omc$}.  With ${\prec}$ we denote the
smallest relation on $\Abf_\Omc$ such that $L \prec L'$ for all
$L{:}q(\bar x) \rfn L'{:}C$ in \Omc.






Fix a domain $\Delta$ of cardinality $||\Omc||$.
A \emph{mosaic} is a pair $M = (L, \Imc)$ where
$L \in \Abf_\Omc$ is the abstraction level of the mosaic and
$\Imc$ is an interpretation with $\Delta^\Imc \subseteq \Delta$
such that \Imc satisfies all CIs $C \sqsubseteq_L D$ in
\Omc and all RIs $R \sqsubseteq_L S$ in \Omc, with the
possible exception of CIs of the  form $A \sqsubseteq_L \exists r
  . B$.
%
  %
%
%
%
%
We may write $L^M$ to denote $L$, 
and likewise for $\Imc^M$.
%
Let $\Mmc$ be a set of mosaics. 
We say that a mosaic $M = (L, \Imc)$ is \emph{good} in $\Mmc$ if for all 
$d \in \Delta^\Imc$ the following hold:
\begin{enumerate}
\item if $A \sqsubseteq_L \exists R . B \in \Omc$, $d \in A^\Imc$, and
  $d \notin (\exists R.B)^\Imc$, then there is an $M'=(L,\Imc') \in \Mmc$ and a
  $d' \in \Delta^{\Imc^{M'}}$ such that
  \begin{enumerate}
    \item $d' \in B^{\Imc'}$,
    \item if  $\exists S . A \sqsubseteq_L B \in \Omc$, $R \sqsubseteq_L S \in \Omc$, and $d' \in A^{\Imc'}$, then $d \in B^\Imc$;
    \item if $\exists S . A \sqsubseteq_L B \in \Omc$, $R^- \sqsubseteq_L S \in \Omc$, and $d \in A^{\Imc}$, then $d' \in B^{\Imc'}$;
  \end{enumerate}
  \item for every level $L' \in \Abf_\Omc$ such that 
   $$Q = \{q \mid L'{:}q(\bar x) \rfn L{:}A \in \Omc \text{ and } d
   \in A^\Imc\} \neq \emptyset,$$
  there is a mosaic $M' \in \Mmc$ with $M' = (L', \Imc')$ and a tuple $\bar e$ over $\Delta^{\Imc'}$ 
  such that  $\bar e \in q(\Imc')$ for all $q \in Q$. 
\end{enumerate}
%

We now formulate the actual decision procedure.
If the directed graph $(\Abf_\Omc,\{(L',L)\mid L \prec L' \})$
is not a tree, we directly 
return `unsatisfiable'. 
Our algorithm first computes the set $\Mmc_0$ of all mosaics for $\Omc$
and
then repeatedly and exhaustively eliminates mosaics that are not good. Let
$\Mmc^*$ denote the set of mosaics at which this
process stabilizes.
\begin{restatable}{lemma}{lemcorrone}
  \label{lem:corr1}
  $C_0$ is $L_0$-satisfiable w.r.t.\ \Omc iff $\Mmc^*$ contains (i)~a mosaic
  $M$ with $L^M= L_0$ and $C_0^{\Imc^M} \neq \emptyset$ and (ii)~a mosaic $M$ with $L^M=L$, for every
  $L$ in $\Abf_\Omc$.
\end{restatable}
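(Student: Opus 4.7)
The plan is to prove the equivalence by the standard two-direction argument for mosaic-elimination procedures.

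For the ``only if'' direction, I would start from an arbitrary model $\Imc = (\AI, \prec, (\Imc_L)_{L \in \AI}, \rho)$ of \Omc with $d_0 \in C_0^{\Imc_{L_0}}$ and extract one mosaic per element: for each $d \in \Delta^{\Imc_L}$ I pick, for every existential CI $A \sqsubseteq_L \exists R.B \in \Omc$ that applies to $d$, one $R$-successor $e$ of $d$ lying in $B^{\Imc_L}$, and I let $M_d = (L, \mathcal{J}_d)$ be the restriction of $\Imc_L$ to $\{d\}$ together with all picked witnesses. Since the number of relevant existentials is bounded by $||\Omc||$, $M_d$ is a legal mosaic and hence belongs to $\Mmc_0$. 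I would then argue by induction on elimination steps that no $M_d$ is ever removed: goodness~1 at $d$ is met by taking the neighbour-mosaic $M_e$ for the picked witness $e$, with back-conditions (b) and (c) holding because $\Imc$ satisfies every CI of the form $\exists S . A \sqsubseteq_L B$ together with the role-hierarchy closure assumed for \Omc; goodness~2 at $d$ is met, for each finer level $L'$, by taking the mosaic $M_f$ of any component $f$ of the ensemble $\rho_{L'}(d)$ in $\Imc$, which satisfies every $q \in Q$ because $\Imc$ satisfies every refinement of \Omc. This yields $M_{d_0} \in \Mmc^*$, establishing part~(i), and since every $L \in \Abf_\Omc$ must lie in $\AI$ (otherwise some labelled inclusion or refinement involving $L$ would be unsatisfied), $\Delta^{\Imc_L}$ is non-empty and any extracted mosaic at level $L$ witnesses part~(ii).

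For the ``if'' direction I would build a model from $\Mmc^*$ by a chase-like unraveling. Set $\AI := \Abf_\Omc$ and inherit $\prec$ from \Omc, which is a tree by the algorithm's initial check; seed each $\Imc_L$ with a fresh copy of a mosaic at level $L$ from $\Mmc^*$, using~(i) for $L_0$ so that a designated element is placed in $C_0^{\Imc_{L_0}}$ and~(ii) for every other level. Then repeatedly repair defects. For an element $d$ at level $L$ and an unsatisfied existential $A \sqsubseteq_L \exists R.B \in \Omc$ with $d \in A^{\Imc_L}$, invoke goodness~1 to obtain a mosaic $M' = (L, \Imc')$ with witness $d' \in B^{\Imc'}$, glue a fresh copy of $M'$ into $\Imc_L$, add an $R$-edge from $d$ to the fresh copy of $d'$, and close under role hierarchies at level $L$. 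For an element $d$ at level $L$ with an unfulfilled refinement set $Q$ at some finer level $L' \prec L$, invoke goodness~2 to obtain a mosaic $M' = (L', \Imc')$ and a tuple $\bar e$ with $\bar e \in q(\Imc')$ for every $q \in Q$, glue a fresh copy of $M'$ into $\Imc_{L'}$, and define $\rho(d, L')$ to be the fresh copy of $\bar e$.

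The hard part is verifying that the limit of this construction really is a model of \Omc and really is an A-interpretation. Inside each glued copy, every CI and RI of \Omc at the relevant level holds by the definition of a mosaic, except possibly the existentials, which are all eventually repaired by the chase. The delicate step is that adding a fresh $R$-edge between $d$ and a freshly glued $d'$ could in principle create a new violation of an existentially-quantified CI of the form $\exists S . A \sqsubseteq_L B$ at one of the endpoints via role-hierarchy propagation; this is precisely what the back-conditions (b) and (c) of goodness~1 have been engineered to exclude. Concept refinements are satisfied by construction because goodness~2 delivers a single ensemble witnessing the whole set $Q$ in one shot. Finally, the fresh-copy discipline combined with the tree structure of $\prec$ secures Condition~($*$): every element of $\Delta^\Imc$ lives in exactly one glued copy, and that copy is either a seed (so the element participates in no ensemble) or was introduced to serve as the $L'$-refinement of exactly one element $d$ at a strictly coarser level, so no element can ever belong to two distinct ensembles; the tree structure additionally ensures that refinement dependencies form a forest, so the chase is well-defined.
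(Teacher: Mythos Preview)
Your ``if'' direction matches the paper's soundness construction: both build a model by a chase that glues fresh copies of mosaics to repair unsatisfied existentials (via goodness~1) and refinements (via goodness~2), with the fresh-copy discipline securing Condition~($*$). The only difference is cosmetic---the paper seeds with the disjoint union of all mosaics in $\Mmc^*$ rather than one per level.

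The ``only if'' direction has a genuine gap in your handling of goodness~2. Your mosaic $M_d$ consists of $d$ together with one chosen witness per applicable existential CI, and to discharge goodness~2 at $d$ for a finer level $L'$ you propose to use $M_f$ for ``any component $f$'' of the ensemble $\rho_{L'}(d)$. But goodness~2 demands a mosaic $M'=(L',\Imc')$ and a tuple $\bar e$ over $\Delta^{\Imc'}$ with $\bar e \in q(\Imc')$ for \emph{every} $q \in Q$. These $q$ are full CQs that may contain role atoms linking several answer variables, so $\Imc'$ must contain \emph{all} elements of the ensemble together with the role edges among them. Your $M_f$ contains only the single element $f$ plus $f$'s existential successors at level $L'$, which in general have nothing to do with the remaining ensemble members; hence your family $\{M_d\}$ is too sparse to survive the elimination procedure. (A related minor confusion: in your $M_d$, goodness~1 at the centre $d$ is vacuous because you already included its witnesses; it is the \emph{other} elements of $M_d$ whose existentials need external witnessing.)

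The paper's fix is to extract a mosaic $M_V$ for \emph{every} subset $V \subseteq \Delta^{\Imc_L}$ with $|V| \le ||\Omc||$, not just singletons-with-witnesses. Goodness~2 at any $d$ is then witnessed by $M_W$ with $W = \{e \mid e \text{ occurs in } \rho_{L'}(d)\}$: since each $q$ is full, every atom of $q$ already holds in the restriction of $\Imc_{L'}$ to $W$. Goodness~1 at $d$ is witnessed by the singleton $M_{\{e\}}$ for an actual $R$-successor $e$ of $d$ in $\Imc$.
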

The algorithm thus returns `satisfiable' if Conditions~(i) and~(ii)
from Lemma~\ref{lem:corr1} are satisfied and `unsatisfiable'
otherwise. It is easy to see that the algorithm runs in single
exponential
time.


\subsection{\texorpdfstring{$\ALCHI^\mn{abs}$}{ALCHI\textasciicircum
    abs} in 2ExpTime}
\label{sec:alci_all_ar_statements_ub}

Our aim is to prove the following.
\begin{theorem}
  \label{thm:twoexpupper}
  Satisfiability in $\ALCHI^\mn{abs}$ is decidable in 2\ExpTime.
\end{theorem}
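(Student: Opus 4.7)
The plan is to generalise the mosaic-based procedure of Section~\ref{sec:alci_cr_exp_ub} to handle ca, rr, and ra statements as well, this time allowing mosaics whose ``type'' is of singly exponential size, so that the total number of distinct mosaics is doubly exponential and the elimination loop runs in 2\ExpTime. Concretely, a mosaic $M=(L,\Imc)$ still describes a single candidate ensemble on level $L$ (of size bounded by the largest CQ arity in $\Omc$), but is now additionally annotated with a \emph{CQ-match profile}: for every CQ $q$ appearing in an abstraction or refinement statement of $\Omc$, the set of tuples $\bar e$ over $\Delta^\Imc$ with $\bar e \in q(\Imc)$, plus the set of ``half-matches'' that $\Imc$ offers to a potential partner mosaic. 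The half-match component is needed for rr and ra statements, whose CQs straddle the two ensembles refining the endpoints of a role edge at the coarser level.

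Second, I would tighten the goodness conditions. Beyond the cr-clause from Section~\ref{sec:alci_cr_exp_ub}, we add: for (ca), every answer tuple recorded in the profile of $M$ to a CQ $q$ with $L'{:}C \abs L{:}q \in \Omc$ must be claimed by some parent-level mosaic $M' \in \Mmc$ at level $L'$, via an element $d \in C^{\Imc^{M'}}$ whose refinement is that tuple; for (rr), whenever a mosaic $M'=(L',\Imc')$ contains $(d_1,d_2) \in q_R(\Imc')$ for some role refinement $L{:}q \rfn L'{:}q_R \in \Omc$, the set $\Mmc$ must contain mosaics $M_1,M_2$ at level $L$ refining $d_1,d_2$ whose half-matches compose into a full match of~$q$; and (ra) dually, every composable pair of half-matches forming an answer to an ra-CQ must be claimed by a role edge at the parent level. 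Each goodness check still inspects only a bounded number of mosaics, but now scrutinises match profiles of exponential size.

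Soundness and completeness follow from an adaptation of Lemma~\ref{lem:corr1}: a stable $\Mmc^*$ meeting conditions analogous to (i)--(ii) can be unfolded into an A-model by iteratively realising the demands recorded in the mosaics' profiles, with the $\prec$-tree guiding the refinement levels; conversely, any A-model induces mosaics that cover all levels and pass every goodness check, thanks to condition~$(*)$ ensuring that ensembles do not overlap and can therefore be read off locally. Because the underlying interpretation is of polynomial size while the CQ-match profile contributes singly exponentially many bits, there are doubly exponentially many mosaic types in $||\Omc||$, yielding the claimed 2\ExpTime bound. The hard part is the rr case, where a single CQ straddles two finer-level mosaics: designing the half-match bookkeeping so that no model-building obligation is lost, while keeping each mosaic's description only exponential (rather than doubly exponential) in size, is the delicate core of the construction.
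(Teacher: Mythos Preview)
Your plan misses the central difficulty in handling abstraction statements, and this is not a detail that can be filled in later.

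For a concept abstraction $L'{:}C \abs L{:}q(\bar x)$, \emph{every} match of $q$ in $\Imc_L$ must give rise to an ensemble.  The hard case is not the match that falls entirely inside one ensemble-mosaic (which your ``must be claimed by a parent'' clause covers); it is the match whose image hits elements from \emph{two or more} distinct ensembles.  Such a match would force a new ensemble overlapping the existing ones, contradicting Condition~$(*)$.  Hence the constructed model must make these cross-mosaic matches \emph{impossible}, and this has to be certified locally.  Your proposal contains no mechanism for this: recording answer tuples and half-matches tells you what matches \emph{do} occur inside a mosaic, not how to rule out matches that straddle several mosaics along edges added later.  The same issue arises for role abstractions when the $\bar x$-part (or $\bar y$-part) alone spans multiple ensembles.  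The paper solves this with an elaborate ``forbidden query'' apparatus: each mosaic carries sets $f_\mn{in}$ and $f_\mn{out}$ of partial matches that must \emph{not} be completed, Conditions~2--4 on mosaics decompose every partial match into components and commit to blocking at least one of them, and Conditions~3/3$'$ on edge candidates propagate these commitments across edges.  Without something equivalent, soundness fails: the unfolding of a stable $\Mmc^*$ may introduce edges that create fresh cross-ensemble matches of an abstraction CQ, yielding an A-structure that is not a model.

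A second issue is the shape of your mosaics.  Keeping a mosaic at a single level with a polynomial-size domain leaves the refinement function~$\rho$ nowhere to live; you would need additional cross-level links and consistency conditions that you have not specified.  The paper instead lets a mosaic contain one interpretation per level together with a local~$\rho$, so that an element and its entire chain of iterated refinements sit in the same mosaic.  Because the refinement tree below one element can have size up to $||\Omc||^{||\Omc||}$, the mosaic domain is singly exponential (not polynomial), and the number of mosaics is doubly exponential---this, together with the forbidden-query annotations, is what drives the $2\textsc{ExpTime}$ bound.
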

A matching lower bound will be provided later on. We prove
Theorem~\ref{thm:twoexpupper} by a mosaic-based approach which is,
however, significantly more complex than the one used in the previous
section. In particular, a mosaic now represents a `slice' through an
A-interpretation that includes multiple abstraction levels and
multiple ensembles. 

Assume that we are given as input an $\ALCHI^\mn{abs}$-ontology \Omc,
an \ALCI-concept $C_0$, and an abstraction level~$L_0$. We again
assume that $C_0$ is a concept name. and \Omc is in normal form,
defined as in the previous section, but with the obvious counterparts
of Point~2 for role refinements and (concept and role)
abstractions. We also define the relation $\prec$ on $\Abf_\Omc$ as in
the previous section, except that we now consider concept and role
refinements, as well as concept and role abstractions, in the obvious
way.  Again, if the directed graph $(\Abf_\Omc,\{(L',L)\mid L \prec L' \})$
is not a tree, then we directly
return `unsatisfiable'.

Fix a set $\Delta$ of cardinality $||\Omc||^{||\Omc||}$.
A \emph{mosaic} is a tuple
$$M = ((\Imc_{L})_{L \in \Abf_\Omc},\rho, 
f_\mn{in}, f_\mn{out})
$$
where
\begin{itemize}


\item $(\Imc_{L})_{L \in \Abf_\Omc}$ is a collection of
    interpretations and $\rho$ is a partial function such that
    $(\Abf_\Omc,\prec, (\Imc_{L})_{L \in \Abf_\Omc},\rho)$ is an
    A-interpretation except that some interpretation domains
    $\Delta^{\Imc_L}$ may be empty; the length of tuples in the range
    of $\rho$ may be at most $||\Omc||$;
  
  
  
  
  \item $f_\mn{in}$ and $f_\mn{out}$ are functions that associate
    every $L \in \Abf_\Omc$ with a set of pairs $(q,h)$ where $q$ is a
    CQ from an abstraction 
    statement in \Omc or a subquery thereof, and $h$ is a partial
    function from $\mn{var}(q)$ to $\Delta^{\Imc_L}$; we call these
    pairs the \emph{forbidden incoming queries} in the case of
    $f_\mn{in}$ and the \emph{forbidden outgoing queries} in the case
    of $f_\mn{out}$.

\end{itemize}
%
%
We may write $\Imc_L^M$ to denote $\Imc_L$, for any $L \in \Abf_\Omc$,
and likewise for $\rho^M$, $f^M_\mn{in}$, and $f^M_\mn{out}$.

Every mosaic has to satisfy several additional conditions. Before we
can state them, we introduce some notation.
For $V \subseteq \mn{var}(q)$, we use $q|_V$ to denote the restriction
of $q$ to the variables in $V$ and write $\overline{V}$ as shorthand
for \mbox{$\overline V = \mn{var}(q) \setminus V$}.  A \emph{maximally
  connected component (MCC)} of $q$ is a CQ $q|_V$ that is connected
and such that $V$ is maximal with this property.  A 
CQ $p = E \uplus p_0$ is a \emph{component of $q$ w.r.t.\
  $V \subseteq \mn{var}(q)$} if $p_0 $ is an MCC of $q|_{\overline V}$
and $E$ is the set of all atoms from $q$ that contain one variable
from $V$ and one variable from~$\overline V$.
%
\begin{example}
  The following CQ has two components w.r.t.\ $V=\{x,y\}$, which are
  displayed in dashed and dotted lines:
  \begin{center} 
  \includegraphics[width=0.55\columnwidth]{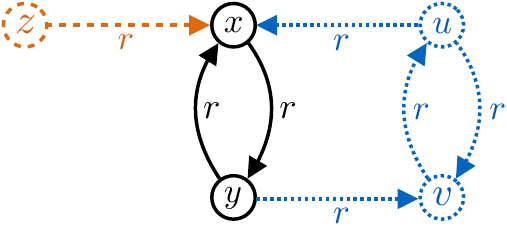}
  \end{center}
  For example, the dotted component is defined by $p = E \uplus p_0$ with $E = r(u,x) \land r(y,v)$ and $p_0 = r(u,v) \land r(v,u)$.
  %
\end{example}
With these notions at hand, let us explain the intuition of the
$f_\mn{in}$ and $f_\mn{out}$ components of mosaics.  Our decomposition
of A-interpretations into sets of mosaics is such that every ensemble
falls within a single mosaic. This means that we must avoid
homomorphisms from the CQs in concept abstractions that hit multiple
mosaics: such homomorphisms would hit elements from multiple ensembles
while also turning the set of all elements that are hit into an
ensemble; they thus generate overlapping ensembles which is forbidden.
Almost the same holds for role abstractions where however the CQ takes
the form $q(\bar x,\bar y)$ with each of $\bar x$ and $\bar y$
describing an ensemble, and we must only avoid homomorphisms that hit
multiple mosaics from the variables in $\bar x$, or from the
variables in $\bar y$.

Query avoidance is implemented by the $f_\mn{in}$ and $f_\mn{out}$
components. In brief and for CQs $q(\bar x)$ from concept
abstractions, we consider any non-empty subset
$V \subsetneq \mn{var}(q)$ of variables and homomorphism $h$ from
$q|_V$ to the current mosaic.  We then have to avoid any homomorphism
$g$ from $q \setminus q|_V$ that is compatible with $h$ and hits at
least one mosaic other than the current one. The choice of $V$
decomposes $q$ into remaining components, which are exactly the
components of $q$ w.r.t.\ $V$ defined above. We choose one such
component $p$ and put $(p,h')$ into $f_\mn{out}$, $h'$ the restriction
of $h$ to the variables in $p$, to `send' the information to other
mosaics that this query is forbidden. The $f_\mn{in}$ component, in
contrast, contains forbidden queries that we `receive' from
other mosaics.



We now formulate the additional conditions on mosaics.
We require that
$M = ((\Imc_{L})_{L \in \Abf_\Omc},\rho, 
f_\mn{in},
f_\mn{out})$ satisfies the following conditions, for all
$L \in \Abf_\Omc$: 
\begin{enumerate}

\item the $A$-interpretation
  $(\Abf_\Omc,\prec, (\Imc_{L})_{L \in \Abf_\Omc},\rho)$ satisfies all
  inclusions, refinements, and abstractions in \Omc with the possible
  exception of CIs the form $A \sqsubseteq_L \exists r
  . B$;

\item for all concept abstractions
  $ L'{:}A \abs L{:}q(\bar x)$ in $\Omc$, 
  all non-empty $V \subsetneq \mn{var}(q)$, and all homomorphisms $h$
  from $q|_V$ to $\Imc_L$:
  there is a component $p$ of $q$
  w.r.t.\ $V$ such that
  $(p,h|_{V \cap \mn{var}(p)}) \in f_{\mn{out}}(L)$
\item for all role abstractions
  $L'{:}R \abs L{:}q(\bar x, \bar y)$ in $\Omc$, 
  all non-empty $V \subsetneq \mn{var}(q)$ with $V \neq \bar x$ and $V
  \neq \bar y$,\footnote{Here we view $\bar x$ and $\bar y$ as sets.}
  and all homomorphisms $h$ from $q|_V$ to $\Imc_L$:
  there is a component $p$ of $q$
  w.r.t.\ $V$ such that
  $(p,h|_{V \cap \mn{var}(p)}) \in f_{\mn{out}}(L)$;
\item for all $(q, h) \in
  f_\mn{in}(L)$, all $V \subseteq
  \mn{var}(q)$, and all homomorphisms $g$ from
  $q|_V$ to $\Imc_L$ that extend
  $h$, there is a component $p$ of $q$ w.r.t.\ $V$ such that
  $(p,g|_{V \cap \mn{var}(p)}) \in f_{\mn{out}}(L)$.

\end{enumerate} 
We next need a mechanism to interconnect mosaics. This is driven by
concept names $A$ and elements $d \in A^{\Imc_L}$ such that
$A \sqsubseteq_L \exists R . B \in \Omc$ and $d$ lacks a witness
inside the mosaic. In principle, we would simply like to find a
mosaic $M'$ that has some element $e$ on level $L$ such that
$e \in B^{\Imc^{M'}}$ and an $R$-edge can be put between $d$ and
$e$. The situation is complicated, however, by the presence of role
refinements and role abstractions, which might enforce
additional edges that link the two mosaics. We must also be careful to
synchronize the $f_\mn{in}, f_\mn{out}$ components of the two involved
mosaics across the connecting edges.

Consider mosaics
$M = ((\Imc_{L})_{L \in \Abf_\Omc},\rho, f_\mn{in}, f_\mn{out})$
and
$M' = ((\Imc'_{L})_{L \in \Abf_\Omc},\rho', f'_{\mn{in}},f'_{\mn{out}})$. 
An \emph{$M,M'$-edge} is an expression
$R(d,d')$ such that $R$ is a role, $d \in \Delta^{\Imc_L}$, and
$d' \in \Delta^{\Imc'_L}$ for some $L \in \Abf_\Omc$. A set $E$ of
$M,M'$-edges is an \emph{edge candidate} if the following conditions
are satisfied:
\begin{enumerate}

\item $R(d,e) \in E$ and $L(d) = L$ implies $S(d,e) \in E$, for all
  $R\sqsubseteq_L S \in \Omc$;


\item if $\exists R . A \sqsubseteq_L B \in \Omc$, $R(d,d') \in E$, 
  and  $d' \in A^{\Imc'_L}$, then $d \in B^{\Imc_L}$;
    
  \item for all $L \in \Abf_\Omc$, all $(q, h) \in f_\mn{out}(L)$, where 
    $q = E_q \uplus q|_{\overline V}$ for $V = \mn{dom}(h)$,
    and all functions $g$ from $\overline V \cap \mn{var}(E_q)$ to $\Delta^{\Imc'_L}$
    such that $R(h(x), g(y)) \in E$ for all $R(x,y) \in E_q$,
    we have $(q|_{\overline V}, g) \in f'_{\mn{in}}(L)$;
  %
  \item for all $R(d,d') \in E$ and all $L{:}q(\bar x, \bar y) \rfn L'{:}q_R(x,y) \in \Omc$
 such that $q = q|_{\bar x} \uplus E_q \uplus q|_{\bar y}$,
   $q_R=C_x(x) \wedge R(x,y) \wedge C_y(y)$, 
   $d \in C_x^{\Imc_{L'}}$, and $d' \in C_y^{\Imc_{L'}}$:
  \begin{enumerate}
    \item $\rho_{L}(d)$ and $\rho'_{L}(d')$ are defined;
    \item $h:\bar x \mapsto \rho_{L}(d)$ is a homomorphism from
      $q|_{\bar x}$ to $\Imc_L$;
    \item $h':\bar y \mapsto \rho'_{L}(d')$ is a homomorphism
      from $q|_{\bar y}$ to $\Imc'_L$;
    \item $\{R(h(x), h'(y)) \mid R(x,y) \in E_q\} \subseteq E$;
  \end{enumerate}
  \item for all role abstractions $L'{:}R \abs L{:}q(\bar x, \bar y) \in \Omc$,
  where $q = q|_{\bar x} \uplus E_q \uplus q|_{\bar y}$,
  all homomorphisms $h$ from $q|_{\bar x}$ to $\Imc_L$,
  and all homomorphisms $g$ from $q|_{\bar y}$ to $\Imc'_L$
  such that $\{S(h(x), g(y)) \mid S(x,y) \in E_q\} \subseteq E$, 
  there are $d \in \Delta^{\Imc_{L'}}$ and
  $d' \in \Delta^{\Imc'_{L'}}$ with $\rho_L(d) = h(\bar x)$, 
  $\rho'_L(d') = g(\bar y)$, and $R(d,d') \in E$;
  %

\item Converses of Conditions~2-5 above that go from $M'$ to $M$
  instead of from $M$ to $M'$; details are in the appendix.

\end{enumerate}
Let \Mmc be a set of mosaics. A mosaic $M$ is \emph{good in \Mmc} if
for all $A \sqsubseteq_L \exists R . B \in \Omc$ and
$d \in (A \sqcap \neg \exists R . B)^{\Imc^M_L}$:
\begin{enumerate}
  \item[$(*)$] there is a mosaic $M' \in \Mmc$,  
  a $d' \in B^{\Imc^{M'}_{L}}$, 
  and an edge candidate $E$ such that $R(d,d') \in E$.
\end{enumerate}

The actual algorithm is now identical to that from the previous
section.  We first compute the set $\Mmc_0$ of all mosaics and
then repeatedly and exhaustively eliminate mosaics that are not
good. Let $\Mmc^*$ denote the set of mosaics at which this process
stabilizes.
\begin{restatable}{lemma}{lemcorrtwo}
  \label{lem:corr2}
  $C_0$ is $L_0$-satisfiable w.r.t.\ \Omc iff $\Mmc^*$ contains (i)~a mosaic
  $M$ with $C_0^{\Imc^M_{L_0}} \neq \emptyset$ and
  (ii)~a mosaic $M$ with \mbox{$\Delta^{\Imc^M_L} \neq \emptyset$}, for every $L$ in $\Abf_\Omc$.
\end{restatable}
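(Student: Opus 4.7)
\medskip\noindent\textbf{Proof plan.} The plan is to prove the lemma in two directions. For the $(\Rightarrow)$ direction, starting from a model $\Imc$ of $\Omc$ with $C_0^{\Imc_{L_0}}\neq\emptyset$, I would associate with every element $d$ of $\Imc$ a mosaic $M_d$ by a bounded selection argument, keeping $d$, the ensembles along the chain of refinement ancestors and descendants of $d$, and at most one witness per existential restriction, using the cardinality bound $||\Omc||^{||\Omc||}$ on the fixed domain $\Delta$. The $f_\mn{out}^{M_d}(L)$ component would collect every pair $(p,h)$ where $p$ is a component (w.r.t.\ some variable set $V$) of a CQ from a concept or role abstraction statement in $\Omc$ and $h$ is the restriction to $\Imc_L^{M_d}$ of a homomorphism from $q|_V$ into $\Imc_L$ that extends to a full homomorphism of $q$ whose image escapes $\Imc_L^{M_d}$; $f_\mn{in}^{M_d}$ would be populated dually. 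Mosaic Conditions~1--4 are direct consequences of $\Imc$ being a model, and goodness holds because each existential witness inside $\Imc$ packages into an edge candidate connecting $M_d$ to some $M_{d'}$. Hence $\Mmc^*$ contains these mosaics, including those required by~(i) and~(ii).

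For $(\Leftarrow)$, I would assemble an A-interpretation $\Imc$ iteratively from mosaics in $\Mmc^*$. Seed $\Imc$ with the mosaic from~(i) on level $L_0$ and, for any level $L$ not yet populated, a mosaic from~(ii). Thereafter, process each unsatisfied existential: if $d \in A^{\Imc_L^M}$ and $A \sqsubseteq_L \exists R.B \in \Omc$ but $d$ has no $R$-successor in $B$, goodness of $M$ hands us $M' \in \Mmc^*$, an element $d' \in B^{\Imc_L^{M'}}$, and an edge candidate $E$ with $R(d,d') \in E$; I would attach a fresh disjoint copy of $M'$ and install the edges of $E$. The uniqueness Condition~$(*)$ on $\rho$ follows from disjointness of domains across fresh copies; labelled CIs, RIs, and concept/role \emph{refinements} transfer from mosaic Condition~1 and edge-candidate Conditions~1, 2, 4, and~6.

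The hard part will be verifying concept and role \emph{abstractions} in the constructed $\Imc$, since these require an abstracting element or role edge for every match of the CQ body, and the body may in principle stretch across many mosaics while such witnesses are only promised locally. The key invariant I would establish is that every homomorphism $h$ from the CQ $q$ of a concept abstraction factors through the interpretation $\Imc_L^M$ of a \emph{single} mosaic $M$, and likewise that the $\bar x$-part of every match of a role-abstraction CQ lies within a single mosaic and the $\bar y$-part in an adjacent one connected by an edge candidate. Mosaic Condition~1 then supplies the abstracting object in the concept case, and edge-candidate Condition~5 produces the required abstracting role edge in the role case. To prove the invariant I would argue by contradiction: a crossing homomorphism would split along some gluing edge between mosaics $M$ and $M'$, yielding a non-empty proper $V \subsetneq \mn{var}(q)$ with $h(V) \subseteq \Delta^{\Imc_L^M}$; mosaic Condition~2 or~3 then forces a component $p$ of $q$ w.r.t.\ $V$ into $f_\mn{out}^M(L)$, edge-candidate Condition~3 propagates $(p,\cdot)$ into $f_\mn{in}^{M'}(L)$, and mosaic Condition~4 iterates the argument one step deeper into the tree. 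Since the tree of mosaics is acyclic and $h$ has finite image, this descent must terminate, yielding the desired contradiction.
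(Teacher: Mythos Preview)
Your overall architecture matches the paper's: extract mosaics $M_d$ from a model for $(\Rightarrow)$, and assemble a model by iteratively gluing mosaics along edge candidates for $(\Leftarrow)$, with the crux being that abstraction-CQ matches never straddle mosaics. However, two steps in your plan do not work as stated.

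\textbf{Construction of $M_d$.} In the $(\Rightarrow)$ direction you propose to include in $M_d$ ``at most one witness per existential restriction''. The paper does \emph{not} do this, and your variant breaks the size bound: the fixed domain has cardinality $||\Omc||^{||\Omc||}$, which is exactly what is needed to accommodate the transitive refinement/abstraction closure $\mu^*(d)$ of $d$ (depth $\leq |\Abf_\Omc|$, branching $\leq ||\Omc||$). Existential witnesses live in \emph{other} mosaics $M_e$ and are connected to $M_d$ via edge candidates; that is precisely what goodness certifies. Relatedly, your description of $f_\mn{out}^{M_d}$ only covers the case where the partial homomorphism $h$ extends to a full match of $q$ in $\Imc_L$. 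The paper needs a second case: when $h$ \emph{cannot} be extended to a match of $q|_V \cup p$ at all, $(p,h|_{V\cap\mn{var}(p)})$ also goes into $f_\mn{out}$. Without this, mosaic Condition~4 (propagation of forbidden incoming queries) fails for $M_d$, because an incoming forbidden query may be unmatchable rather than escaping.

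\textbf{Termination of the single-mosaic argument.} In the $(\Leftarrow)$ direction your contradiction argument is right in spirit (bounce between $f_\mn{out}$ via Condition~2/3, edge-candidate Condition~3, $f_\mn{in}$, and mosaic Condition~4), but your termination reason---``the tree of mosaics is acyclic and $h$ has finite image''---is not the right invariant. The paper explicitly observes that the iteration can return to an earlier mosaic (e.g.\ $M_2 = M_0$, handled by the converse edge-candidate Condition~3$'$), so acyclicity of the mosaic tree does not by itself bound the walk. What does terminate is the \emph{query}: at each step one passes from a forbidden pair $(p_i,\cdot)$ to $(p_{i+1},\cdot)$ with $p_{i+1} = p_i|_{\overline{V_i}}$ a strict sub-CQ of $p_i$, so after at most $||\Omc||$ steps $p_i$ has all its variables mapped into the current mosaic, and then mosaic Condition~4 demands a component of $p_i$ w.r.t.\ $V=\mn{var}(p_i)$, which does not exist---that is the contradiction.
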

The algorithm thus returns `satisfiable' if Conditions~(i) and~(ii)
from Lemma~\ref{lem:corr2} are satisfied and `unsatisfiable'
otherwise. It can be verified that the algorithm runs in double
exponential time.

\section{Lower Bounds}

We have seen that the fragment $\ALCHI^\mn{abs}[\textnormal{cr}]$ of
$\ALCHI^{\mn{abs}}$ which focusses on concept refinement is only
\ExpTime-complete. Here we show that all other fragments that contain
only a single form of abstraction/refinement are
2\ExpTime-hard, and consequently 2\ExpTime-complete. This of course
also provides a matching lower bound for
Theorem~\ref{thm:twoexpupper}---actually three rather different lower
bounds, each one exploiting a different effect. All of our lower
bounds apply already when \ALCHI is replaced with \ALC as the
underlying DL.

\subsection{\texorpdfstring{Role Abstraction: $\ALC^\mn{abs}[\textnormal{ra}]$}{Role Abstraction}}
\label{sec:alc_ra_2exp_lb}

The 2\ExpTime-hardness of satisfiability in $\ALCHI^\mn{abs}$ is not
entirely surprising given that we have built conjunctive queries into
the logic and CQ evaluation on \ALCI knowledge bases is known to be
2\ExpTime-hard~\cite{DBLP:conf/cade/Lutz08}. In fact, this is already
the case for the following \emph{simple} version of the latter
problem: given an \ALCI ontology \Omc, a concept name $A_0$, and a
Boolean CQ $q$, decide whether $\Imc \models q$ for all models \Imc of
\Omc with $A_0^\Imc \neq \emptyset$. We write $\Omc, A_0 \models q$ if
this is the case.

It is easy to reduce the (complement of the) simple CQ evaluation
problem to satisfiability in $\ALCI^\mn{abs}[\textnormal{ca}]$.
%
%
Fix two abstraction levels $L \prec L'$, let $\widehat q$ be the CQ obtained
from $q$ by dequantifying all variables, thus making all variables
answer variables, and let $\Omc'$ be the set of all concept inclusions
$C \sqsubseteq_L D$ with $C \sqsubseteq D \in \Omc$ and the concept
abstraction
$$L'{:}\bot \abs L{:}\widehat q.$$
It is straightforward to show that $A_0$ is $L$-satisfiable
w.r.t.\ $\Omc'$ iff $\Omc, A_0 \not \models q$.

Our aim, however, is to prove 2\ExpTime-hardness without using inverse
roles. The above reduction does not help for this purpose since CQ
evaluation on \ALC knowledge bases is in
\ExpTime~\cite{DBLP:conf/cade/Lutz08}. Also, we wish to use role
abstractions in place of the concept abstraction. 

As above, we reduce from the complement of simple CQ evaluation on
$\ALCI$ ontologies. Given an input $\Omc, A_0, q$, we construct an
$\ALC^\mn{abs}[\textnormal{ra}]$ ontology $\Omc'$ such that $A_0$ is
$L_1$-satisfiable w.r.t.\ $\Omc'$ iff $\Omc, A_0 \not \models q$. The
idea is to use multiple abstraction levels and role
abstractions to simulate inverse roles.  Essentially, we replace every
inverse role $r^-$ in \Omc with a fresh role name $\widehat r$,
obtaining an \ALC-ontology $\Omc_\ALC$, use three abstraction levels
$L_1 \prec L_2 \prec L_3$, and craft $\Omc'$ so that in its models,
the interpretation on level $L_1$ is a model of $\Omc_\ALC$, the
interpretation on level $L_2$ is a copy of the one on level $L_1$, but
additionally has an $r$-edge for every $\widehat r^-$-edge, and $L_3$
is an additional level used to check that there is no homomorphism
from $q$, as in the initial reduction above. Details are in the
appendix.
\begin{restatable}{theorem}{thmfirsthard}
  \label{thm:firsthard}
  Satisfiability in $\ALC^\mn{abs}[\textnormal{ra}]$ is 2\ExpTime-hard.
\end{restatable}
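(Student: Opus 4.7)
The plan is to reduce in polynomial time from the complement of simple CQ evaluation over $\ALCI$-ontologies, which is 2\ExpTime-hard~\cite{DBLP:conf/cade/Lutz08}: given an $\ALCI$-ontology $\Omc$, a concept name $A_0$, and a connected Boolean CQ $q$, decide whether there is a model $\Jmc$ of $\Omc$ with $A_0^\Jmc\neq\emptyset$ and $\Jmc\not\models q$. From $(\Omc, A_0, q)$ I will build an $\ALC^\mn{abs}[\mn{ra}]$-ontology $\Omc'$ over three levels $L_1 \prec L_2 \prec L_3$ so that $A_0$ is $L_1$-satisfiable w.r.t.~$\Omc'$ iff $\Omc, A_0 \not\models q$.

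The central idea is to simulate inverse roles by role abstractions from $L_1$ to $L_2$. After normalizing $\Omc$ and replacing each inverse role $r^-$ by a fresh role name $\widehat r$ to obtain an $\ALC$-ontology $\Omc_\ALC$, I place $\Omc_\ALC$ at $L_1$. I next make $L_2$ a faithful $\ALCI$-style copy of $L_1$ via singleton-ensemble role abstractions: the axiom $\top\sqsubseteq_{L_1}\exists s.\top$ for a fresh role $s$, together with $L_2{:}s \abs L_1{:}s(x,y)$ and the no-overlap condition~$(*)$, forces a bijection between $\Delta^{\Imc_{L_1}}$ and $\Delta^{\Imc_{L_2}}$. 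For each original role $r$ of $\Omc$ I add $L_2{:}r \abs L_1{:}r(x,y)$ and $L_2{:}r \abs L_1{:}\widehat r(y,x)$, so that $r^{\Imc_{L_2}}$ contains $r^{\Imc_{L_1}} \cup (\widehat r^{\Imc_{L_1}})^{-1}$, precisely the interpretation obtained by viewing $\widehat r$ as $r^-$. To transfer concept-name interpretations faithfully, for every concept name $A$ I introduce fresh roles $R_A, R_{\neg A}$ and add the role abstractions $L_2{:}R_A \abs L_1{:}A(x)\wedge s(x,y)$ and $L_2{:}R_{\neg A} \abs L_1{:}\neg A(x)\wedge s(x,y)$ together with the ALC-CIs $\exists R_A.\top \sqsubseteq_{L_2} A$ and $\exists R_{\neg A}.\top \sqsubseteq_{L_2} \neg A$; this double propagation pins $A^{\Imc_{L_2}}$ to the $\rho$-image of $A^{\Imc_{L_1}}$. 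Finally, I also place at $L_2$ the $\ALC$-rewriting of $\Omc$ (axioms rewritten using ALCI-equivalences to eliminate $r^-$ where possible, e.g.\ $\exists r^-.A\sqsubseteq B$ becoming $A\sqsubseteq\forall r.B$), which constrains $\Imc_{L_2}$ to obey the $\ALCI$-semantics directly, including any ``extra'' role edges a model might introduce in $r^{\Imc_{L_2}}$.

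To forbid homomorphisms of $q$ into $\Imc_{L_2}$, I dequantify $q$ to a full CQ $\widehat q$, split $\mn{var}(\widehat q)$ into two disjoint non-empty tuples $\bar x,\bar y$ (e.g.\ one variable versus the rest) keeping $\widehat q$ connected, and add $L_3{:}R_q \abs L_2{:}\widehat q(\bar x,\bar y)$ together with $\top\sqsubseteq_{L_3}\forall R_q.\bot$; any match would either force a forbidden $R_q$-edge in $\Imc_{L_3}$ or cause the $\bar x$- and $\bar y$-images to overlap, which is forbidden by $(*)$. The $(\Leftarrow)$-direction is by direct construction: from $\Jmc$ take $\Imc_{L_1}:=\Jmc$ with $\widehat r^{\Imc_{L_1}}:=(r^\Jmc)^{-1}$, let $\Imc_{L_2}$ be the isomorphic singleton-refinement copy, and leave $\Imc_{L_3}$ empty. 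The main obstacle is the $(\Rightarrow)$-direction: from a model $\Imc$ of $\Omc'$ with $A_0^{\Imc_{L_1}}\neq\emptyset$, I read off $\Jmc$ from $\Imc_{L_2}$ by interpreting $r^-$ as $(r^{\Imc_{L_2}})^{-1}$, and then verify $\Jmc\models\Omc$ by a careful induction on concept structure that uses the $L_2$-axioms for the $\ALC$-expressible part of $\Omc$ (handling both copied and model-introduced $r$-edges uniformly) and the $L_1$-axioms together with the role abstraction $L_2{:}r \abs L_1{:}\widehat r(y,x)$ for existential inverse axioms $A\sqsubseteq\exists r^-.B$; the non-modelhood $\Jmc\not\models q$ then follows from the $L_3$-trap.
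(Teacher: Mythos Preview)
Your overall strategy matches the paper's: reduce from the complement of simple CQ evaluation over $\ALCI$, place an $\ALC$-version of $\Omc$ on $L_1$ using surrogate roles $\widehat r$ for $r^-$, use role abstractions $L_2{:}r \abs L_1{:}r(x,y)$ and $L_2{:}r \abs L_1{:}\widehat r(y,x)$ to build a copy on $L_2$ in which $\widehat r$ has been ``un-inverted'' into $r$, and kill query matches with a further abstraction to $L_3$. The paper also transfers concept names via auxiliary roles, much as you do. So the architecture is right.

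There is, however, a genuine gap in your $(\Rightarrow)$-argument. Your claim that $\top\sqsubseteq_{L_1}\exists s.\top$ together with $L_2{:}s \abs L_1{:}s(x,y)$ ``forces a bijection between $\Delta^{\Imc_{L_1}}$ and $\Delta^{\Imc_{L_2}}$'' is false: it forces only an \emph{injection} from $\Delta^{\Imc_{L_1}}$ into $\Delta^{\Imc_{L_2}}$ (each $L_1$-element has a unique abstracting $L_2$-element by~$(*)$), but nothing prevents $\Imc_{L_2}$ from containing additional elements that are not in the range of $\rho_{L_1}^{-1}$. For such an extra element $d\in A^{\Imc_{L_2}}$ you cannot verify $d\in(\exists r^-.B)^{\Imc_{L_2}}$: your $L_2$-axioms do not include $A\sqsubseteq\exists r^-.B$ (it is not $\ALC$-expressible), and your fallback via ``the $L_1$-axioms together with $L_2{:}r \abs L_1{:}\widehat r(y,x)$'' applies only to elements that \emph{do} arise as abstractions of $L_1$-elements. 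Hence taking $\Jmc:=\Imc_{L_2}$ need not give a model of $\Omc$. The paper avoids this by reading the $\ALCI$-model off level $L_1$, setting $r^\Jmc:=r^{\Imc_{L_1}}\cup(\widehat r^{\Imc_{L_1}})^{-1}$, verifying $\Jmc\models\Omc$ from the $L_1$-axioms (which requires also adding $\exists\widehat r.A\sqsubseteq B$ on $L_1$ for each $A\sqsubseteq\forall r.B\in\Omc$, so the new $\widehat r^{-1}$-edges respect universal restrictions), and only then observing that $\Jmc$ embeds into $\Imc_{L_2}$ so that the $L_3$-trap on $L_2$ transfers to~$\Jmc$. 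Either adopt that route, or explicitly restrict $\Jmc$ to the $\rho_{L_1}$-image in $\Imc_{L_2}$ and argue that all existential witnesses stay inside the image using the $L_1$-axioms; your ``$\ALC$-rewriting on $L_2$'' is then unnecessary.

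Two minor points: in the $(\Leftarrow)$-direction you cannot ``leave $\Imc_{L_3}$ empty'' since A-interpretations require non-empty domains on every level (take a singleton with $R_q$ empty); and your variable split for $\widehat q$ presupposes $|\mn{var}(q)|\geq 2$, so either note that the hard instances from \cite{DBLP:conf/cade/Lutz08} satisfy this, or add a fresh dummy variable as the paper does.
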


\subsection{\texorpdfstring{Concept Abstraction: $\ALC^\mn{abs}[\textnormal{ca}]$}{Concept Abstraction}}
\label{sec:alc_ca_2exp_lb}

When only concept abstraction is available, the simulation of inverse
roles by abstraction statements presented in the previous section does
not work. We are nevertheless able to craft a
reduction from the simple CQ evaluation problem on \ALCI ontologies,
though in a slightly different version. What is actually proved
in~\cite{DBLP:conf/cade/Lutz08} is that simple CQ evaluation is
2\ExpTime-hard already in the DL $\ALC^\mn{sym}$, which is \ALC with a
single role name $s$ that must be interpreted as a reflexive and
symmetric relation.

We simulate $\ALC^\mn{sym}$-concepts $\exists s . C$ by \ALCI-concepts
$\exists r^-.\exists r.C$, and likewise for $\forall s. C$.  The
reduction then exploits the following effect. Consider an
$\ALC^\mn{abs}[\textnormal{ca}]$-ontology \Omc that contains the CI
$\top \sqsubseteq_L \forall r . \exists \widehat r. \top$ and the concept
abstractions
$$
L'{:}\, \top \abs L{:}\, r(x,y) \ \text{ and } \
L'{:}\, \top \abs L{:}\, \widehat r(y,x).
$$
Then in any model $\Imc$ of \Omc, $(d,e) \in r^{\Imc_L}$ implies
$(e,d) \in \widehat r^{\Imc_L}$ and thus we can use $\widehat r$ as
the inverse of $r$. This is because the first concept abstraction
forces $(d,e)$ to be an ensemble on level $L$, $e$ must have a
$\widehat r$-successor $f$, and the second concept abstraction forces
$(f,e)$ to be an ensemble. But since ensembles cannot overlap, this is
only possible when $d=f$.

We have to be careful, though, to not produce
undesired overlapping ensembles which would result in
unsatisfiability.  As stated, in fact, the ontology \Omc does not
admit models in which $\Imc_L$ contains an $r$-path of length
two. This is why we resort to $\ALC^\mn{sym}$. Very briefly, the
$r^-$-part of the role composition $r^-;r$ falls inside an ensemble
and is implemented based on the trick outlined above while the
$r$-part connects two different ensembles to avoid overlapping. The
details are somewhat subtle and presented in the appendix.
\begin{restatable}{theorem}{thmsecondhard}
  \label{thm:secondhard}
  Satisfiability in $\ALC^\mn{abs}[\textnormal{ca}]$ is 2\ExpTime-hard.
\end{restatable}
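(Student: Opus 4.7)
The plan is to reduce from the simple Boolean CQ evaluation problem on $\ALC^\mn{sym}$-ontologies, which is 2\ExpTime-hard. Given an instance $(\Omc, A_0, q)$ with $\Omc$ an $\ALC^\mn{sym}$-ontology and $q$ a Boolean CQ, I build an $\ALC^\mn{abs}[\textnormal{ca}]$-ontology $\Omc'$ and an abstraction level $L$ such that $A_0$ is $L$-satisfiable w.r.t.\ $\Omc'$ iff $\Omc, A_0 \not\models q$. Throughout, I use two abstraction levels $L \prec L'$: the level $L$ carries the encoded $\ALC^\mn{sym}$-model, and the level $L'$ serves the dual purpose of installing a pseudo-inverse on $L$ and of blocking a homomorphism from $q$.

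The central ingredient is the converse-by-abstraction gadget sketched in the excerpt. For a fresh role~$r$ and auxiliary role~$\widehat r$, I add
\[
\top \sqsubseteq_L \forall r.\exists \widehat r.\top,
\quad
L'{:}\top \abs L{:}r(x,y),
\quad
L'{:}\top \abs L{:}\widehat r(y,x).
\]
The two abstractions force every $r$-edge $(d,e)$ and every $\widehat r$-edge $(f,e)$ to become an ensemble on level~$L$; since ensembles cannot share elements by~$(*)$ and both contain $e$, these tuples coincide, yielding $d=f$. Thus $\widehat r$ behaves as the converse of $r$ in every model. The same gadget is so restrictive that no element may have two outgoing $r$-edges and no $r$-path of length two can exist. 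This is exactly why the reduction starts from $\ALC^\mn{sym}$ rather than from $\ALCI$: each $s$-step will be encoded via an ``internal'' $\widehat r$-hop inside a two-element ensemble to a canonical ``partner'', followed by an outgoing hop carried by a separate, unconstrained role that crosses to a different ensemble.

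Concretely, I introduce the roles $r,\widehat r$ subject only to the gadget above, plus a further role $t$ used to transport $s$-information. A CI on level $L$ forces every element $a$ to have a unique partner $a'$ with $(a',a)\in r$ (equivalently $(a,a')\in\widehat r$), and every $s$-edge from $a$ to $b$ is encoded by a $t$-edge from $a'$ to $b$. Each occurrence of $\exists s.C$ in~$\Omc$ is then translated to $C\sqcup \exists\widehat r.\exists t.C$ (the $C$-disjunct handles reflexivity of $s$), with $\forall s.C$ translated dually; the resulting inclusions are asserted at level~$L$. Symmetry of $s$ is enforced by an additional CI requiring that whenever $a$ has such a $t$-successor $b$, the partner $b'$ of $b$ has a $t$-successor back to $a$. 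The CQ obstruction is added as in the warm-up of Section~\ref{sec:alc_ra_2exp_lb}: translate $q$ to a CQ $q'$ over the new signature by rewriting every atom $s(x,y)$ into the encoded pattern $\widehat r(x,z_{xy})\wedge t(z_{xy},y)$, dequantify all variables to get a full CQ $\widehat{q'}$, and append the concept abstraction $L'{:}\bot \abs L{:}\widehat{q'}$, so that no model of $\Omc'$ admits a homomorphism from $q$ into $\Imc_L$.

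The main obstacle I expect is the forward direction: turning an arbitrary $\ALC^\mn{sym}$-countermodel of $(\Omc,A_0,q)$ into an A-interpretation of $\Omc'$ that respects the very tight non-overlap condition~$(*)$. Each element must receive exactly one partner, so the countermodel first has to be unravelled into a forest-shaped structure in which every element has a canonical partner freshly created for it, and the encoded $t$-edges must be arranged so that no element ends up shared between two $r$-ensembles. Auxiliary concept names distinguishing ``original'' elements, their partners, and $t$-targets are needed to keep the translations of $\exists s.C$ and $\forall s.C$ faithful and to prevent the CQ $\widehat{q'}$ from matching unintended $r,\widehat r$-edges produced by the gadget. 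The backward direction is routine once the converse-by-abstraction gadget is known to collapse $\widehat r$ to the genuine inverse of $r$, since then the partner-based encoding reads off a genuine $\ALC^\mn{sym}$-model on which $q$ has no homomorphism.
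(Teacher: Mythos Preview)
Your high-level plan (reduce from simple CQ evaluation over $\ALC^\mn{sym}$, simulate $s$ via an $r/\widehat r$ partner gadget plus an unconstrained ``crossing'' role, then block $q$ with a $\bot$-abstraction) matches the paper's strategy, but your concrete encoding has a gap that breaks the backward direction of the reduction.

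The problem is the asymmetric composition $\widehat r;t$. You write that ``symmetry of $s$ is enforced by an additional CI requiring that whenever $a$ has such a $t$-successor $b$, the partner $b'$ of $b$ has a $t$-successor back to $a$.'' But this CI is not expressible in \ALC: from $b$ (or $b'$) you cannot name the specific element $a$ without navigating $t$ backwards. Consequently, in an \emph{arbitrary} model $\Jmc$ of $\Omc'$ nothing forces the encoded relation to be symmetric. When you then extract an $\ALC^\mn{sym}$-interpretation $\Imc$ you are stuck: if you take $s^\Imc$ to be the encoded relation it need not be symmetric, and if you take the symmetric closure then your translation of $\forall s.C$ (the dual $C \sqcap \forall\widehat r.\forall t.C$) no longer guarantees $C$ at the ``incoming'' neighbours---so $\Imc$ need not be a model of $\Omc$. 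The same asymmetry also spoils the query direction: a homomorphism from $q$ into $\Imc$ that uses an $s$-edge coming from the symmetric closure need not yield a homomorphism from your $\widehat{q'}$ into~$\Jmc_L$.

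The paper avoids this entirely by encoding $s$ as the \emph{symmetric} composition $r^-;r$: each endpoint $d$ owns a collection of midpoints (one per child), the concept abstractions make the ensemble be $(d,\text{its midpoints})$ rather than a two-element pair, and further abstractions force each midpoint's $r$-successor of the ``same colour'' $E^i$ to coincide with $d$. Then $s(x,y)$ becomes $r(z,x)\wedge r(z,y)$, and $\forall s.C$ is captured by the single \ALC-CI $\exists r.A_{\forall s.C}\sqsubseteq_L \forall r.A_C$, which handles both directions at once precisely because $r^-;r$ is symmetric by construction. Your two-element partner idea cannot replicate this: with a single partner per element you would need the partner to carry \emph{all} $s$-neighbours via the \emph{same} role $r$ (to regain symmetry), but then each such $r$-edge becomes its own two-element ensemble overlapping at the partner, forcing collapse. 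That is why the paper's ensembles must have variable arity and why the midpoint machinery is not optional.
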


\subsection{\texorpdfstring{Role Refinement: $\ALC^\mn{abs}[\textnormal{rr}]$}{Role Refinement}}
\label{sec:rr_2exp_lb}

While concept and role abstractions enable reductions from CQ
evaluation, this does not seem to be the case for concept and role
refinements. Indeed, we have seen in Section~\ref{sec:alci_cr_exp_ub}
that concept refinements do not induce 2\ExpTime-hardness. Somewhat
surprisingly, role refinements behave differently and are a source of
2\ExpTime-hardness, though for rather different reasons than
abstraction statements.

It is well-known that there is an exponentially space-bounded
alternating Turing machine (ATM) that decides a 2\ExpTime-complete
problem and on any input $w$ makes at most $2^{|w|}$ steps
\cite{chandra1981alternation}. We define ATMs in detail in the
appendix and only note here that our ATMs have a one-side infinite
tape and a dedicated accepting state $q_a$ and rejecting state $q_r$,
no successor configuration if its state is $q_a$ or $q_r$,
and exactly two successor configurations otherwise.

Let $M = (Q, \Sigma, \Gamma, q_0, \Delta)$ be a concrete such ATM with
$Q=Q_\exists \uplus Q_\forall \uplus \{q_a,q_r\}$. We may assume
w.l.o.g\ that $M$ never attempts to move left when the head is
positioned on the left-most tape cell.  Let
$w = \sigma_1 \cdots \sigma_n \in \Sigma^*$ be an input for $M$.  We
want to construct an $\ALC^\mn{abs}[\textnormal{rr}]$-ontology~$\Omc$ and
choose a concept name $S$ and abstraction level $L_1$ such that $S$ is
$L_1$-satisfiable w.r.t.\ $\Omc$ iff $w \in L(M)$.
%
Apart from~$S$, which indicates the starting configuration, we use the
following concept names:
\begin{itemize}
  \item $A_\sigma$, for each $\sigma \in \Gamma$, to
    represent tape content;
  \item $A_q$, for each $q \in Q$, to represent state
    and head position;
  \item $B_{q, \sigma, M}$ for $q \in Q$, $\sigma \in
    \Gamma$, $M \in \{L,R\}$, serving to choose a transition;
  \item $H_\shortleftarrow, H_\shortrightarrow$
    indicating whether a tape cell is to the right or left of the head.
\end{itemize}
plus some auxiliary concept names whose purpose shall be obvious. We
use the role name $t$ for next tape cell and $c_1, c_2$ for successor
configurations.

The ontology $\Omc$ uses the abstraction levels
$\Abf = \{L_1, \dots, L_n\}$ with $L_{i+1} \prec L_{i}$ for
$1 \leq i < n$. While we are interested in $L_1$-satisfiability of
$S$, the computation of $M$ is simulated on level $L_n$. We start
with generating an infinite computation tree on level $L_1$:
$$
  S \sqsubseteq_{L_1} \exists c_1.N \sqcap \exists c_2.N \quad
  N \sqsubseteq_{L_1} \exists c_1.N \sqcap \exists c_2.N.
$$
In the generated tree, each configuration is represented by a single
object. On levels $L_2,\dots,L_n$, we generate similar trees where,
however, configurations are represented by $t$-paths. The length of
these paths doubles with every level and each node on a path is
connected via $c_1$ to the corresponding node in the path that
represents the first successor configuration, and likewise for $c_2$
and the second successor configuration. This is illustrated in
Figure~\ref{fig:rr_lb_structure} where for simplicity we only show a
first successor configuration and three abstraction levels. We use the
following role refinements:
%
%
\begin{figure}
  \begin{center}
  \includegraphics[width=0.9\columnwidth]{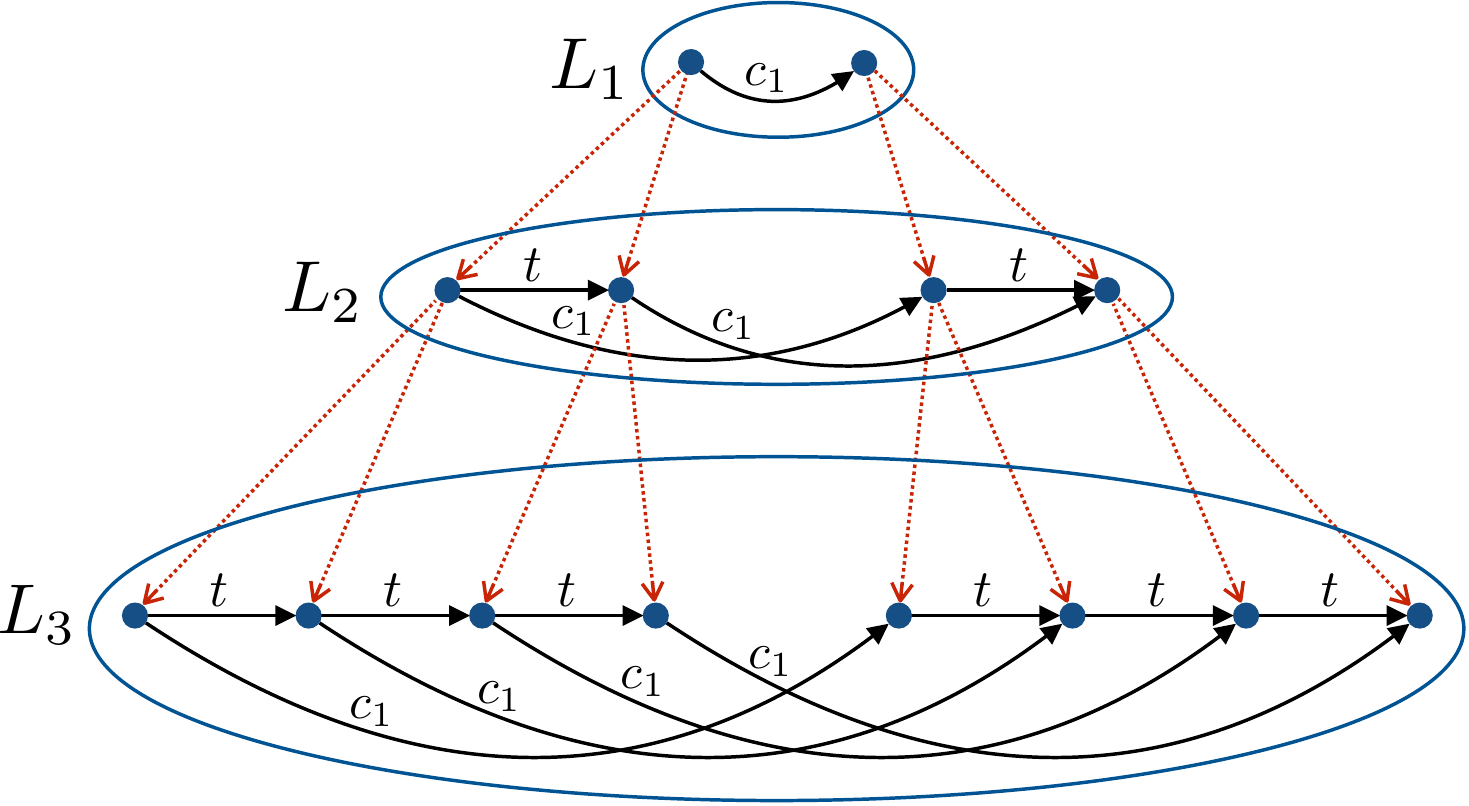}
  \end{center}
  \caption{Dotted lines indicate refinement.}
  \label{fig:rr_lb_structure}
  \vspace*{-5mm}
\end{figure}
%
$$
\begin{array}{r@{\;}l}
L_{i+1}{:}\,q(\bar x, \bar y) &\rfn L_i{:}\,t(x,y) \\[1mm]
L_{i+1}{:}\,q_j(\bar x, \bar y) &\rfn L_i{:}\,c_i(x,y)
\end{array}
$$
for $0 \leq i < n$ and $j \in \{1,2\}$, and where $\bar x = x_1x_2$,
\mbox{$\bar y = y_1y_2$} and
$$
\begin{array}{r@{\,}c@{\,}l}
  q(\bar x, \bar y) &=& t(x_1, x_2) \land t(x_2, y_1) \land t(y_1,
                          y_2) \\[1mm]
  q_j(\bar x, \bar y) &=& t(x_1, x_2) \land t(y_1, y_2) \land c_j(x_1, y_1) \land c_j(x_2, y_2).
\end{array}
$$
To make more precise what we want to achieve,  let the
\emph{$m$-computation tree}, for $m > 0$, be the interpretation
$\Imc_m$ with
$$
\begin{array}{@{}r@{\;}c@{\;}l}
  \Delta^{\Imc_m} &=& \{ c_0,c_1\}^* \cdot \{ 1,\dots,m\} \\[1mm]
  t^{\Imc_m} &=& \{ (wi,wj) \mid w \in \{c_0,c_1\}^*, 1 \leq i < m, j=i+1 
                 \} \\[1mm]
  c_\ell^{\Imc_m} &=& \{ (wj,wc_ij) \mid w \in \{c_0,c_1\}^*, 1 \leq j
                      \leq m, i \in \{0,1\} \}
\end{array}
$$
for $\ell \in \{ 1,2\}$. It can be shown that for any model \Imc
of the $\ALC^\mn{abs}[\textnormal{rr}]$-ontology $\Omc$ constructed so far
and for all $i \in \{1,\dots,n\}$, we must find a (homomorphic image
of a) $2^i$-computation tree in the interpretation $\Imc_{L_i}$. This crucially relies on the fact that ensembles cannot
overlap. In Figure~\ref{fig:rr_lb_structure}, for example, the role refinements
for $t$ and for $c_1$ both apply on level~$L_2$, and
for attaining the structure displayed on level~$L_3$ it is crucial
that in these applications each object on level~$L_2$ refines into
the same ensemble on level~$L_3$.

On level~$L_n$, we thus find a $2^n$-computation tree which we use to
represent the computation of $M$ on input $w$. To start, the concept
name $S$ is copied down from the root of the $1$-computation tree on
level~$L_1$ to that of the $2^n$-computation tree on level $L_n$. To
achieve this, we add a copy of the above role refinements for $c_1$,
but now using the CQ
$$
q_1(\bar x,\bar y) =S(x_1) \land c_1(x_1, y_1) \land c_1(x_2, y_2).
$$
We next describe the initial configuration:
\begin{align*}
  S &\sqsubseteq_{L_n} A_{q_0} \sqcap A_{\sigma_1} \sqcap \forall t.A_{\sigma_2} \sqcap 
  \cdots \sqcap\forall t^{n-1}.(A_{\sigma_n} \sqcap B_\shortrightarrow) \\
  B_\shortrightarrow &\sqsubseteq_{L_n} \forall t.(A_\square \sqcap B_\shortrightarrow)
\end{align*}
For existential states, we consider one of the two possible successor
configurations:
$$
  A_q \sqcap A_\sigma \sqsubseteq_{L_n} (\forall c_1.B_{q', \sigma', M'}) \sqcup
  (\forall c_2.B_{\bar q, \bar \sigma, \bar M})
$$
for all $q \in Q_\exists$ and $\sigma \in \Gamma$ such that 
$\Delta(q, \sigma) = \{(q', \sigma', M'),$ $(\bar q, \bar \sigma, \bar M)\}$.
For universal states, we use both successors:
$$
  A_q \sqcap A_\sigma \sqsubseteq_{L_n} (\forall c_1.B_{q', \sigma', M'}) \sqcap
  (\forall c_2.B_{\bar q, \bar \sigma, \bar M})
$$
for all $q \in Q_\forall$ and $\sigma \in \Gamma$ such that 
$\Delta(q, \sigma) = \{(q', \sigma', M'),$ \\$(\bar q, \bar \sigma, \bar M)\}$.
We next implement the transitions:
$$
\begin{array}{c}
  B_{q, \sigma, M} \sqsubseteq_{L_n} A_\sigma \quad
  \exists t.B_{q, \sigma, L} \sqsubseteq_{L_n} A_q \\[1mm]
  B_{q, \sigma, R} \sqsubseteq_{L_n} \forall t.A_q
\end{array}
$$
for all $q \in Q$, $\sigma \in \Gamma$, and $M \in \{L, R\}$.
We  mark cells that are not under the head:
$$
\begin{array}{r@{\;}c@{\;}lcr@{\;}c@{\;}l}
  A_q &\sqsubseteq_{L_n}& \forall t.H_{\shortleftarrow} &&
                                                              \exists t.A_q &\sqsubseteq_{L_n}& H_{\shortrightarrow} \\[1mm]
    H_{\shortleftarrow} &\sqsubseteq_{L_n}& \forall
                                               t.H_{\shortleftarrow} &&
\exists t.H_{\shortrightarrow} &\sqsubseteq_{L_n}& H_{\shortrightarrow}
\end{array}
$$
for all $q \in Q$.
Such cells do not change:
$$
  (H_\shortleftarrow \sqcup H_\shortrightarrow) \sqcap A_\sigma \sqsubseteq_{L_n} \forall c_i.A_\sigma
$$
for all $\sigma \in \Gamma$ and $i \in \{1,2\}$.
State, content of tape, and head position must be unique:
$$
\begin{array}{c}
  A_q \sqcap A_{q'} \sqsubseteq_{L_n} \bot
  \quad  A_\sigma \sqcap A_{\sigma'} \sqsubseteq_{L_n} \bot \\[1mm]
(H_\shortleftarrow \sqcup H_\shortrightarrow) \sqcap A_q \sqsubseteq_{L_n} \bot
\end{array}
$$
for all $q, q' \in Q$ and $\sigma,\sigma' \in \Gamma$ with $q \neq q'$
and $\sigma \neq \sigma'$.
Finally, all followed computation paths must be accepting: 
$$
  A_{q_r} \sqsubseteq_{L_n} \bot.
$$
This finishes the construction of $\Omc$. 
%
\begin{lemma}
  \label{lem:2exp_lb_alc_ref}
  $S$ is $L_1$-satisfiable w.r.t.\ $\Omc$ iff $w \in L(M)$.
\end{lemma}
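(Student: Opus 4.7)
The plan is to prove the biconditional by two independent arguments, with the substantive content being a careful structural analysis in the ($\Rightarrow$) direction.

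For the ($\Leftarrow$) direction, assuming $w \in L(M)$, I fix an accepting computation tree $T$ of $M$ on $w$ and build an A-interpretation $\Imc$ whose level-$L_i$ component is a copy of the $2^{i-1}$-computation tree $\Imc_{2^{i-1}}$: the $\{c_0,c_1\}^*$-skeleton mirrors the branching of $T$, and the $\{1,\dots,2^{i-1}\}$-chain attached at each branching word enumerates the first $2^{i-1}$ tape cells. The refinement function sends a node $wj$ at $L_i$ to the pair $\bigl(w(2j{-}1),\,w(2j)\bigr)$ at $L_{i+1}$, so that each cell splits into its two subcells. Choosing $n$ slightly larger than $|w|$ makes the tape at $L_n$ long enough to carry the space-bounded computation. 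It then remains to check that the $t$- and $c_j$-role refinements are satisfied (their CQs were designed to match exactly the doubling operation: $q$ contributes the three $t$-edges spanning two consecutive ensembles, $q_j$ contributes the ensemble-internal $t$-edges and the pointwise $c_j$-links) and that the level-$L_n$ CIs hold once $A_q$, $A_\sigma$, $B_{q,\sigma,M}$, $H_{\shortleftarrow}$, $H_{\shortrightarrow}$ are populated from the actual configurations of $T$.

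For the ($\Rightarrow$) direction, let $\Imc$ be a model with $d_0 \in S^{\Imc_{L_1}}$. The central technical claim, proved by induction on $i$, is that for each $i \in \{1,\dots,n\}$ there is a homomorphism $h_i$ from the $2^{i-1}$-computation tree into $\Imc_{L_i}$ mapping the root to an element in $S^{\Imc_{L_i}}$ (with $S$ propagated down from $L_1$ via the extra $c_1$-refinement whose CQ carries the atom $S(x_1)$). The base case $i=1$ is routine from the two CIs that generate the $c_1$-, $c_2$-branching under $S$ and $N$. In the inductive step, every node in the image of $h_i$ is an endpoint of a $c_j$-edge and, for $i \geq 2$, also of a $t$-edge, so the relevant role refinements force $\rho_{L_{i+1}}$ to be defined on it and to produce an ensemble of length two whose internal and cross-ensemble $t$- and $c_j$-edges match the CQs $q$ and $q_j$. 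I define $h_{i+1}(w(2j-1))$ and $h_{i+1}(w(2j))$ to be the two components of $\rho_{L_{i+1}}(h_i(wj))$.

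The hard part is showing that $h_{i+1}$ is a well-defined homomorphism rather than a patchwork of local ensembles, and this is the deepest use of the construction. Two things could go wrong: distinct $L_i$-image nodes could share elements in their refinements at $L_{i+1}$, collapsing the intended tree shape; or a node that is simultaneously the endpoint of a $t$-edge and a $c_j$-edge at $L_i$ could be forced into two inconsistent ensembles at $L_{i+1}$. The non-overlap condition $(*)$ and the functionality of $\rho$ rule these out respectively: distinct nodes give pairwise disjoint ensembles (no collapsing), and a single node has a unique ensemble that must simultaneously satisfy all applicable role refinements, which is exactly the combined $t$-path plus pointwise $c_j$-link pattern depicted in Figure~\ref{fig:rr_lb_structure}. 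Once the inductive claim is established, the image of $h_n$ contains, for each $\{c_1,c_2\}$-branch, a $t$-path long enough to represent the tape of $M$; the level-$L_n$ CIs (initial configuration via $S$, transition choice via $B_{q,\sigma,M}$ respecting $Q_\exists$ vs.\ $Q_\forall$, head movement, tape preservation away from the head, uniqueness, and $A_{q_r}\sqsubseteq_{L_n}\bot$) then force the resulting labeled $\{c_1,c_2\}$-tree to be an accepting computation tree of $M$ on $w$, so $w \in L(M)$.
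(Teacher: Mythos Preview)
Your proposal is correct and follows essentially the same approach as the paper. Note that the paper does not give a formal proof of this lemma in the appendix; the argument is only sketched in the main body, where it states that ``for any model $\Imc$ \ldots\ we must find a (homomorphic image of a) $2^i$-computation tree in the interpretation $\Imc_{L_i}$'' and emphasises that ``this crucially relies on the fact that ensembles cannot overlap'' and that ``each object on level~$L_2$ refines into the same ensemble on level~$L_3$'' regardless of which role refinement triggers the refinement. Your inductive construction of the $h_i$, together with your identification of the two failure modes (distinct image nodes sharing refinement elements; a single node forced into incompatible ensembles) and their resolution via Condition~$(*)$ and functionality of $\rho$, is exactly a fleshed-out version of that sketch. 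Your indexing $2^{i-1}$ at level $L_i$ is in fact the one consistent with the construction (the paper's $2^i$ appears to be off by one), and your remark about choosing $n$ large enough is an appropriate acknowledgement of the space-bound bookkeeping.
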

%
We have thus obtained the announced result.
\begin{theorem}
  \label{thm:thirdhard}
  Satisfiability in $\ALC^\mn{abs}[\textnormal{rr}]$ is 2\ExpTime-hard.
\end{theorem}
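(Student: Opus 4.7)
The theorem will follow immediately from Lemma \ref{lem:2exp_lb_alc_ref}: the ontology $\Omc$ is constructible in polynomial time from $M$ and $w$ by direct inspection of the construction in the excerpt, and combined with the existence of an exponentially space-bounded ATM deciding a 2\ExpTime-complete problem (from \cite{chandra1981alternation}), the lemma yields a polynomial reduction from a 2\ExpTime-hard problem to satisfiability in $\ALC^\mn{abs}[\textnormal{rr}]$. So the substantive task is to prove Lemma \ref{lem:2exp_lb_alc_ref}, which I would approach in the two standard directions.

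For the ``if'' direction I would start with an accepting computation tree $T$ of $M$ on $w$ (of depth at most $2^n$) and build an A-interpretation as follows. For each $i \in \{1,\dots,n\}$ let $\Imc_{L_i}$ be (a homomorphic image of) the $2^i$-computation tree defined in the excerpt, so that each node of $T$ is represented on level $L_i$ by a $t$-path of length $2^i$, with $c_1, c_2$ edges linking corresponding positions in successor configurations. I would label level $L_n$ with the concept names $A_q, A_\sigma, B_{q,\sigma,M}, H_\shortleftarrow, H_\shortrightarrow$ to encode $T$, and define $\rho_{L_{i+1}}$ so that each consecutive pair $(a_k, a_{k+1})$ on a configuration path at level $L_i$ refines into a four-element tuple $(b_{2k-1}, b_{2k}, b_{2k+1}, b_{2k+2})$ at level $L_{i+1}$, with shared endpoints between consecutive ensembles chosen so as to respect Condition $(*)$. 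Verifying the CIs, the role refinements, and the ATM-simulation CIs is then routine.

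For the ``only if'' direction I would fix a model $\Imc$ with $d_0 \in S^{\Imc_{L_1}}$ and prove by induction on $i$ that $\Imc_{L_i}$ contains a homomorphic image of the $2^i$-computation tree rooted at a representation of the initial configuration. The base case follows from the CIs generating $S$ and $N$. For the step, whenever there is a $t$-edge $(a,a')$ on level $L_i$, the role refinement with CQ $q$ forces a length-three $t$-path on level $L_{i+1}$ inside the combined ensembles $\rho_{L_{i+1}}(a)$ and $\rho_{L_{i+1}}(a')$; moreover, $a$ and $a'$ are also linked by $c_j$-edges to nodes of the successor configuration, and the refinement with $q_j$ forces a matching $c_j$-linked pattern on level $L_{i+1}$. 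Because each object participates in at most one ensemble, the refinements of adjacent $t$-linked elements must share an endpoint, doubling the configuration path length from $2^i$ to $2^{i+1}$. At level $L_n$ the resulting $2^n$-computation tree, labeled via the transition and head-marking CIs, yields an accepting run of $M$ on $w$.

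The main obstacle will be the gluing argument in the step case of the ``only if'' direction: I must show that the refinement ensembles of $t$-adjacent elements on level $L_i$ really do share their endpoint element on level $L_{i+1}$, producing a single contiguous path rather than disjoint fragments whose union would not form a homomorphic image of the $2^{i+1}$-computation tree. This relies on a delicate interplay between the refinements for $t$ and for $c_j$: the pointwise $c_j$-links forced by $q_j$, combined with the uniqueness of the ensemble an object belongs to (Condition $(*)$), pin down the required identifications. Once this geometric invariant is established, both the doubling argument and the ATM correctness are routine.
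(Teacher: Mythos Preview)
Your proposal follows the paper's approach: derive the theorem from Lemma~\ref{lem:2exp_lb_alc_ref}, prove the ``if'' direction by instantiating the $2^i$-computation trees on levels $L_i$ and labelling level $L_n$ with the ATM run, and prove the ``only if'' direction by showing inductively that any model contains a homomorphic image of the $2^i$-computation tree in $\Imc_{L_i}$. The paper leaves the lemma at the level of a sketch, and yours matches it.

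One point in your account of the gluing step is misdiagnosed, however, and would trip you up when writing details. You say ``the refinements of adjacent $t$-linked elements must share an endpoint'' and attribute this to Condition~$(*)$. Neither is right. For $t$-adjacent $a,a'$ on level $L_i$, the ensembles $\rho_{L_{i+1}}(a)$ and $\rho_{L_{i+1}}(a')$ are \emph{disjoint}; they are joined by the edge atom $t(x_2,y_1)$ already present in the CQ~$q$, so contiguity of the length-$4$ $t$-path is automatic. The actual gluing mechanism is that $\rho$ is a \emph{function}: an element $a$ on level $L_i$ that has both a $t$-neighbour and a $c_j$-neighbour is subject to both role refinements, and each must use the \emph{same} tuple $\rho_{L_{i+1}}(a)$. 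This is what synchronises the doubled $t$-paths with the pointwise $c_j$-links on level $L_{i+1}$ and yields the $2^{i+1}$-computation tree. Condition~$(*)$ (no element lies in two ensembles) is the dual statement and plays no role in the ``only if'' direction; you need it only in the ``if'' direction to check that your constructed A-interpretation is legal. The paper's phrase ``ensembles cannot overlap'' is loose, but its next sentence (``each object on level~$L_2$ refines into the same ensemble on level~$L_3$'') names functionality as the operative property.
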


\section{Undecidability}

One might be tempted to think that the decidability of
$\ALCHI^{\mn{abs}}$ is clear given that only a finite number
of abstraction levels can be mentioned in an ontology. However,
achieving decidability of DLs with abstraction and refinement requires
some careful design choices. In this section, we consider three
seemingly harmless extensions of $\ALCHI^{\mn{abs}}$ and show that
each of them results in undecidability. This is in fact already the
case for $\EL^{\mn{abs}}$ where the underlying DL \ALCHI is replaced
with \EL.

\subsection{Basic Observations}
\label{sect:undecbasic}
We make some basic observations regarding the DL $\EL^{\mn{abs}}$ and
its fragments.  In classical  \EL, concept satisfiability is
not an interesting problem because every concept is satisfiable
w.r.t.\ every ontology.  This is not the case in $\EL^{\mn{abs}}$
where we can express concept inclusions of the form
$C \sqsubseteq_L \bot$ with $C$ an \EL-concept, possibly at the
expense of introducing additional abstraction levels. More precisely,
let $L'$ be the unique abstraction level with $L \prec L'$ if it
exists and a fresh abstraction level otherwise. Then
$C \sqsubseteq_L \bot$ can be simulated by
%
the following CI and concept abstraction:
%
$$
\begin{array}{l}
  C \sqsubseteq_L \exists r_C . \exists r_C . \top \\[1mm]
  L'{:}\top \abs L{:}\, r_C(x,y)
\end{array}
$$
where $r_C$ is a fresh role name.  Note that this again relies on the
fact that ensembles cannot overlap, and thus an $r_C$-path of length
two in level $L$ results in unsatisfiability. The same can be achieved by
using a role abstraction in place of the concept abstraction.

To prove our undecidability results, it will be convenient to have
available concept inclusions of the form
$C \sqsubseteq_L \forall r . D$ with $C, D$ \EL-concepts. Let
$L'$ be a fresh abstraction level. Then
$C \sqsubseteq_L \forall r . D$ can be simulated by the following
role refinement and concept abstraction:
$$
\begin{array}{l}
   L'{:}\,r(x,y) \land A(y) \rfn L{:}\, C(x) \land r(x,y)\\[1mm]
   L'{:}\, D \abs L{:}\, A(x)
\end{array}
$$
where $A$ is a fresh concept name. It is easy to see that the same can
be achieved with a role abstraction in place of the concept
abstraction.

In the following, we thus use inclusions of the forms
$C \sqsubseteq_L \bot$ and $C \sqsubseteq \forall r . D$ in the
context of $\EL^{\mn{abs}}$ and properly keep track of the required
types of abstraction and refinement statements.

\subsection{Repetition-Free Tuples}
\label{sect:repfree}

In the semantics of $\ALCHI^{\mn{abs}}$ as defined in
Section~\ref{sect:abstractionDLs}, ensembles are tuples in which 
elements may occur multiple times. It would arguably be more natural to
define ensembles to be
repetition-free tuples. We refer to this version of the semantics as
the \emph{repetition-free} semantics.

If only concept and role refinement are admitted, then there is no
difference between satisfiability under the original semantics and
under the repetition-free semantics. In fact, any model of \Omc
under the original semantics can be converted into a model of \Omc
under repetition-free semantics by duplicating elements. This gives
the following.
\begin{restatable}{proposition}{thmreprefok}
  \label{thm:reprefok}
  For every \ALCI-concept $C$, abstraction level $L$, and
  $\ALCHI^{\mn{abs}}[\textnormal{cr},\textnormal{rr}]$-ontology \Omc:
  $C$ is $L$-satisfiable w.r.t.\ \Omc iff $C$ is $L$-satisfiable
  w.r.t.\ \Omc under the repetition-free semantics.
\end{restatable}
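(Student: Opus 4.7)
The $(\Leftarrow)$ direction is immediate: every repetition-free model is a model in the original sense. For $(\Rightarrow)$, starting from a model $\Imc = (\Abf_\Imc,\prec,(\Imc_L)_{L \in \Abf_\Imc},\rho)$ of $\Omc$ with $C^{\Imc_L} \neq \emptyset$, I would construct a repetition-free model $\Imc'$ by taking countably many copies of every element. Concretely, set $\Delta^{\Imc'_L} := \Delta^{\Imc_L} \times \mathbb{N}$ with concept and role memberships lifted pointwise: $(d,n) \in A^{\Imc'_L}$ iff $d \in A^{\Imc_L}$, and $((d,m),(e,n)) \in r^{\Imc'_L}$ iff $(d,e) \in r^{\Imc_L}$. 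An easy induction on concept structure then shows that every \ALCI-concept evaluates in $\Imc'_L$ as the pointwise lift of its evaluation in $\Imc_L$, so all labeled CIs and RIs carry over, and $C^{\Imc'_L} \neq \emptyset$.

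The refinement function $\rho'$ is defined by traversing the tree of abstraction levels top-down. At each level $L'$, for each $(d,m) \in \Delta^{\Imc'_{L'}}$ and each child level $L \prec L'$ such that $\rho_L(d) = (e_1,\ldots,e_k)$ is defined in $\Imc$, choose pairwise distinct indices $n_1,\ldots,n_k \in \mathbb{N}$ that have not appeared in any ensemble defined so far, and set $\rho'_L((d,m)) := ((e_1,n_1),\ldots,(e_k,n_k))$. Since $\mathbb{N}$ is infinite, this choice is always possible; pairwise distinctness of the $n_i$ yields a repetition-free ensemble, and global freshness preserves condition~$(*)$. Both refinement axioms are then easily verified: for a concept refinement $L{:}q(\bar x) \rfn L'{:}D$ and $(d,m) \in D^{\Imc'_{L'}}$, the original witness $\rho_L(d) \in q(\Imc_L)$ lifts atom-by-atom to $\rho'_L((d,m)) \in q(\Imc'_L)$; role refinements $L{:}q(\bar x,\bar y) \rfn L'{:}q_R(x,y)$ are handled symmetrically, since any $q_R$-match $((d_1,m_1),(d_2,m_2))$ in $\Imc'_{L'}$ projects to a $q_R$-match $(d_1,d_2)$ in $\Imc_{L'}$, and the original joint witness in $q(\Imc_L)$ lifts along the two refined ensembles.

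The conceptual heart of the argument—and the reason the proposition is restricted to \textnormal{cr}, \textnormal{rr}—is that refinements impose only existential requirements on ensembles, and such existence is preserved under the lifting $d \mapsto (d,n)$. Abstraction statements would, by contrast, demand that \emph{every} CQ-match on the finer level be witnessed by an abstractor on the coarser level; duplication creates many spurious new matches (mixing copies from different original ensembles) that cannot all be witnessed without introducing overlapping ensembles and violating $(*)$. I do not expect any serious obstacle beyond bookkeeping: the main care is simply to ensure the top-down choice of fresh indices so that $(*)$ is preserved globally, which the infiniteness of $\mathbb{N}$ trivially accommodates.
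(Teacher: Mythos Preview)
Your argument is correct and takes a different route from the paper. The paper removes repetitions one at a time: it picks a single element $e_0$ that repeats in a single ensemble $\rho_{L_2}(d_0)$, adjoins finitely many fresh copies $e_1,\dots,e_m$ of $e_0$ at level~$L_2$, patches that one ensemble, and then iterates. You instead replace every $\Imc_L$ by the product $\Imc_L \times \mathbb{N}$ in one shot and define $\rho'$ globally. The global product is cleaner in two respects: it handles all repetitions simultaneously rather than relying on a (potentially transfinite) iteration, and it automatically equips every new element with its own refinements to lower levels---a point the paper's local step does not explicitly address, since the fresh $e_j$ may themselves fall into the extension of some $D$ with $L_3{:}q' \rfn L_2{:}D \in \Omc$ and hence need $\rho'_{L_3}(e_j)$ to be defined. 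One minor remark: your phrase ``indices that have not appeared in any ensemble defined so far'' presupposes an enumeration and hence a countable model. This is harmless here, but you can also dispense with it: since $(*)$ in $\Imc$ guarantees that each $e$ lies in $\rho_L(d)$ for at most one $d$, an explicit assignment such as putting index $k\cdot m+i$ in position~$i$ of $\rho'_L((d,m))$, with $k$ the arity of $\rho_L(d)$, works uniformly without any ordering. The top-down traversal is then also unnecessary; $\rho'$ can be defined independently at each level.
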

%
The situation changes once we admit abstraction.
\begin{restatable}{theorem}{thmfirstundec}
  \label{thm:firstundec}
  Under the repetition-free semantics, satisfiability is undecidable
  in $\ALC^\mn{abs}[\textnormal{ca}]$,
  $\ALC^\mn{abs}[\textnormal{ra}]$,
  $\EL^\mn{abs}[\textnormal{rr},\textnormal{ca}]$, and
  $\EL^\mn{abs}[\textnormal{rr},\textnormal{ra}]$.
\end{restatable}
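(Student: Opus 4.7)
The plan is to reduce from the $\mathbb{N} \times \mathbb{N}$ domino tiling problem, which is undecidable. The essential use of the repetition-free semantics is that, together with the non-overlap condition $(*)$, it provides two complementary effects: repetition-freeness forces the components of an ensemble to be pairwise distinct, while $(*)$ forces two ensembles that share a component to coincide. Combined, these allow us to encode a commutative $\mathbb{N} \times \mathbb{N}$ grid on a fine abstraction level, at which point tile assignment and local compatibility can be enforced by standard DL axioms and the reduction completes in the routine way.

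For $\ALC^\mn{abs}[\textnormal{ca}]$, I would use two abstraction levels $L_f \prec L_c$. On $L_f$ an infinite, binary branching skeleton of $r$- and $u$-successors is built from CIs $A \sqsubseteq_{L_f} \exists r.A$ and $A \sqsubseteq_{L_f} \exists u.A$. On $L_c$ I would place one or more concept abstractions whose CQs describe the square pattern $r(x_1,x_2) \wedge u(x_2,x_3) \wedge u(x_1,x_4) \wedge r(x_4,x_3)$ together with suitable auxiliary patterns, orchestrated so that (i) repetition-freeness rules out degenerate squares in which corners would coincide, and (ii) whenever the two ``half-squares'' $r\cdot u$ and $u\cdot r$ emanating from a common point would otherwise produce distinct endpoints, condition $(*)$ forces a sharing of some component and thereby collapses those endpoints to a single grid vertex. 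Tiles are then modeled by fresh concept names with CIs encoding horizontal and vertical compatibility, using the $\bot$-simulation of Section~\ref{sect:undecbasic} where needed to forbid illegal combinations.

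For $\ALC^\mn{abs}[\textnormal{ra}]$, the concept abstractions above are replaced by role abstractions. Role abstractions can be used to simulate inverse roles in the spirit of Section~\ref{sec:alc_ra_2exp_lb}, while repetition-freeness plus $(*)$ additionally yields a form of functionality of the refinement function: an element may appear in at most one ensemble, and under repetition-freeness this behaves very much like an injective functional map. Inverses plus functionality are already enough for undecidability, echoing the remark in the introduction that UNFO with functional relations is undecidable. For $\EL^\mn{abs}[\textnormal{rr},\textnormal{ca}]$ and $\EL^\mn{abs}[\textnormal{rr},\textnormal{ra}]$ I would first appeal to the simulations in Section~\ref{sect:undecbasic} to obtain $C \sqsubseteq_L \bot$ and $C \sqsubseteq_L \forall r.D$; role refinement then plays the role of $\ALC$-existentials in building the tree skeleton, while concept or role abstraction drives the grid closure as above. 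A small point to check is that the Section~\ref{sect:undecbasic} simulations remain faithful under the repetition-free semantics: this holds because the auxiliary role names used there occur only in tightly restricted patterns that cannot introduce spurious repetitions.

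The hard part will be designing the grid-closure CQs so that they simultaneously (a) force commutativity of $r$ and $u$ on $L_f$ for every vertex of the generated skeleton, and (b) still admit a satisfying model whenever the tiling instance has a solution, i.e.~do not generate spurious overlapping or repeating ensembles inside a genuine infinite grid. I expect this balance to require several coordinated abstraction statements rather than a single CQ, together with careful bookkeeping about which $L_f$-elements end up inside which $L_c$-ensemble, so that every application of $(*)$ produces a useful identification and every application of repetition-freeness produces a useful distinctness, without the two mechanisms interfering with each other.
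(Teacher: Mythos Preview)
Your overall architecture---build a tree of $r$/$u$-successors on a fine level, use abstraction to force grid closure, then encode a tiling/TM run---matches the paper's. The gap is in the grid-closure mechanism, which you leave as ``the hard part'' and describe incorrectly.

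Your four-variable closed-square CQ $r(x_1,x_2)\wedge u(x_2,x_3)\wedge u(x_1,x_4)\wedge r(x_4,x_3)$ has a homomorphism only when the cell \emph{already} closes, so abstracting over it cannot force closure. Your proposed mechanism---that $(*)$ ``forces a sharing of some component and thereby collapses those endpoints''---does not work as stated: $(*)$ only says an element lies in at most one ensemble, so if you abstract over the two half-squares $r;u$ and $u;r$ from a common source $d$, $(*)$ forces the two $3$-tuples to coincide \emph{entirely}, which would identify the $r$-successor of $d$ with its $u$-successor, not the two far corners.

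The paper's trick is different and sharper. It uses abstractions of the form $L'{:}\bot \abs L{:}q$, so that \emph{any} repetition-free answer to $q$ on $L$ is a contradiction. The main $q$ is a \emph{five}-variable open-cell pattern
\[
c(x_1,x_2)\wedge t(x_1,x_3)\wedge c(x_3,x_4)\wedge t(x_2,x'_4),
\]
with $x_4$ and $x'_4$ distinct variables. If the cell closes, every answer identifies $x_4$ with $x'_4$ and is repetitive, hence harmless under the repetition-free semantics; if it does not close, there is a repetition-free answer and the $\bot$-abstraction fires. Six further $\bot$-abstractions over small two- and three-variable CQs (self-loops, shared successors, etc.) rule out all degenerate identifications among the five positions except $x_4=x'_4$. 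Condition~$(*)$ plays no role in the closure argument; repetition-freeness plus the $\bot$-concept on the coarse side do all the work. Once you have the grid, the TM (or tiling) encoding is routine, and the $\EL$ variants follow via the simulations in Section~\ref{sect:undecbasic} exactly as you say.
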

In the following, we prove undecidability for satisfiability in
$\ALC^\mn{abs}[\textnormal{ca}]$. The result for
$\ALC^\mn{abs}[\textnormal{ra}]$ is a minor variation and
the results for $\EL^{\mn{abs}}$ are obtained by applying the
observations from Section~\ref{sect:undecbasic}.

We reduce the complement of the halting problem for deterministic
Turing machines (DTMs) on the empty tape.  Assume that we are given a
DTM $M = (Q, \Sigma, \Gamma, q_0, \delta)$. 
As in
Section~\ref{sec:rr_2exp_lb}, we assume that $M$ has a one-side
infinite tape and never attempts to move left when the head is on the
left end of the tape. We also assume that there is a dedicated halting
state $q_h \in Q$.

We want to construct an $\ALC^\mn{abs}[\textnormal{ca}]$-ontology
$\Omc$ and choose a concept name $S$ and abstraction level $L$ such
that $S$ is $L$-satisfiable w.r.t.\ $\Omc$ iff $M$ does not halt on
the empty tape. We use essentially the same concept and role names as
in Section~\ref{sec:rr_2exp_lb}, except that only a single role name $c$ is
used for transitions to the (unique) next configuration. 
Computations are represented in the form of a grid as
shown on the left-hand side of Figure~\ref{fig:undec_part_ord_grid},
where the concept names $X_i$ must be disregarded as they belong to a
different reduction (and so do the queries). We use two abstraction
levels $L$ and $L'$ with $L \prec L'$.  The computation of $M$ is
represented on level~$L$.
%
%

We first generate an infinite binary tree in which every node has one
$t$-successor and one $c$-successor:
$$
\top \sqsubseteq_L \exists t.\top \sqcap \exists c.\top
$$
To create the desired grid structure, it remains to enforce that grid
cells close, that is, the $t$-successor of a $c$-successor of any node
coincides with the $c$-successor of the $t$-successor of that node.
We add the concept abstraction
$$
\begin{array}{l}
L'{:}\, \bot \abs L{:}\,q(\bar x)\text{ where}\\[1mm]
  q(\bar x)=c(x_1,x_2) \wedge t(x_1,x_3) \wedge c(x_3,x_4) \wedge t(x_2,x'_4). 
\end{array}
$$
The idea is that any non-closing grid cell admits a repetition-free
answer to $q$ on $\Imc_L$, thus resulting in unsatisfiability. If all
grid cells close, there will still be answers, but all of them are
repetitive. The above abstraction alone, however, does not suffice to
implement this idea. It still admits, for instance, a non-closing grid
cell in which the two left elements have been identified. We thus need
to rule out such unintended identifications and add the concept
abstraction $L'{:}\, \bot \abs L{:}\, q$ for the following six CQs $q$:
$$
\begin{array}{l}
  t(x_1,x_2) \wedge c(x_1,x_2) \qquad t(x_1,x_2) \wedge c(x_2,x_1)
  \\[1mm]
  t(x_1,x_2) \wedge c(x_1,x_3) \wedge t(x_3,x_2) \qquad c(x_1,x_1)\\[1mm]
  t(x_1,x_2) \wedge c(x_1,x_3) \wedge c(x_2,x_3) \qquad t(x_1,x_1)
\end{array}
$$
\begin{figure}
  \centering
  \includegraphics[width=\columnwidth]{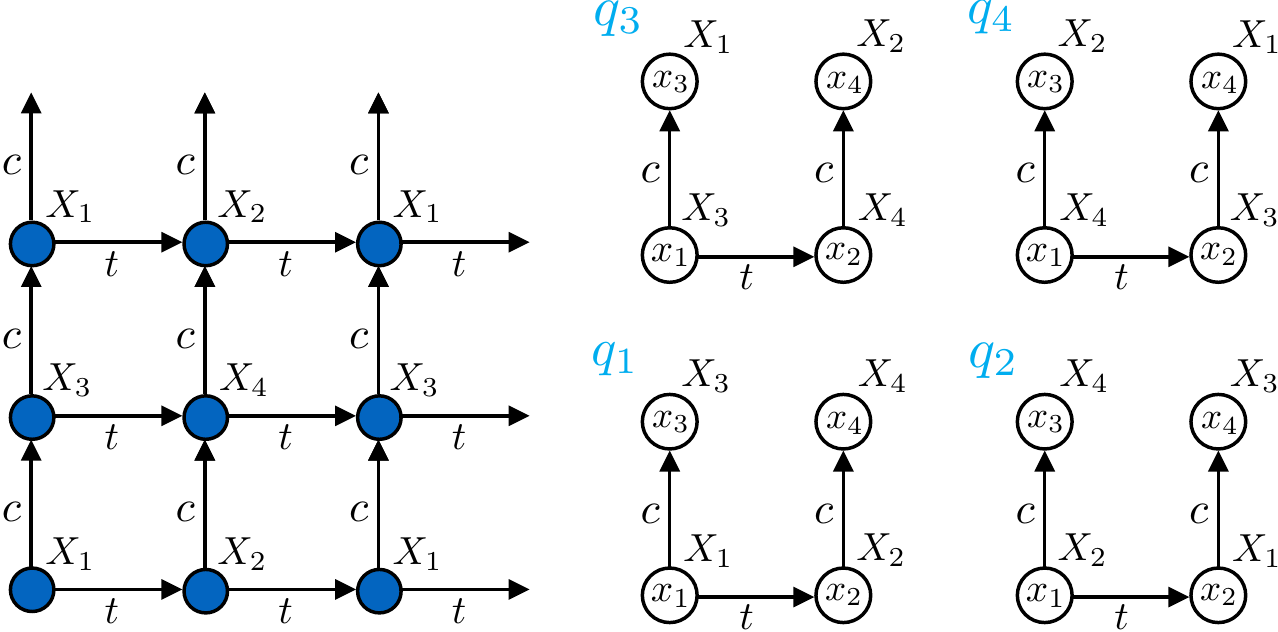}
  \caption{Grid structure and queries for the DAG Semantics.}
  \label{fig:undec_part_ord_grid}
  \vspace*{-5mm}
\end{figure}
The rest of the reduction is now very similar to that given in
Section~\ref{sec:rr_2exp_lb}, details are in the appendix. 

\subsection{DAG Semantics}
\label{sect:nontree}

Our semantics requires abstraction levels to be organized in a
tree. While this is very natural, admitting a DAG structure might also
be useful. This choice, which we refer to as the \emph{DAG semantics},
leads to undecidability.
\begin{restatable}{theorem}{thmnontreeundec}
  \label{thm:nontreeundec}
  Under the DAG semantics, satisfiability is undecidable
  in $\ALC^\mn{abs}[\textnormal{ca,cr}]$ and
  $\EL^\mn{abs}[\textnormal{ca},\textnormal{cr},\textnormal{rr}]$.
\end{restatable}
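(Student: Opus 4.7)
The plan is to reduce from the halting problem of deterministic Turing machines on the empty tape, adapting the grid-based encoding sketched in Section~\ref{sect:repfree}. The DAG semantics provides exactly the extra expressive power needed to force grid closure without relying on the repetition-free assumption used in Theorem~\ref{thm:firstundec}: under tree semantics a level $L$ has at most one direct coarser parent, whereas DAG semantics lets $L$ sit below two incomparable levels $L_1, L_2$, which in turn allows two refinement hierarchies to ``converge'' on the same $L$-elements and thereby impose joint constraints.

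Concretely, for $\ALC^\mn{abs}[\textnormal{ca,cr}]$, I would use a diamond of abstraction levels: a computation level $L_c$ with $L_c \prec L_1$ and $L_c \prec L_2$, where $L_1, L_2$ are incomparable and share a common coarser level $L_0$. On $L_c$ I would generate an infinite grid using the CIs $\top \sqsubseteq_{L_c} \exists t.\top \sqcap \exists c.\top$. Grid closure ($t \circ c = c \circ t$) is then enforced through two cross-cutting refinement/abstraction pairs, one routed through $L_1$ and the other through $L_2$: on $L_1$ a concept $A_1$ refines via a CQ $q_1(\bar x)$ encoding a partial ``L-shape'' ($c$-edge then $t$-edge) together with its closing edge, and analogously for $A_2$ on $L_2$ with a $t$-then-$c$ pattern. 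The matching abstractions $L_1{:}A_1 \abs L_c{:}q'_1$ and $L_2{:}A_2 \abs L_c{:}q'_2$ (with $q'_i$ the same pattern without the closing edge) force every partial L-shape on $L_c$ to arise from such an $A_i$-instance, whose associated refinement then supplies the closing edge. This mechanism is impossible under tree semantics, where only one of the two abstraction paths from $L_c$ could exist. Once the grid closes, the DTM computation is encoded on $L_c$ by standard CIs along $c$ (time) and $t$ (tape), with $A_{q_h} \sqsubseteq_{L_c} \bot$ expressing non-halting.

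For the $\EL^\mn{abs}[\textnormal{ca,cr,rr}]$ variant, I would reuse the same diamond but replace the $\bot$ and $\forall$ CIs (unavailable in \EL) by the simulations from Section~\ref{sect:undecbasic}; the role refinement operator is precisely what those simulations consume. The role refinements would also help to propagate tape contents along $t$- and $c$-edges, recovering the effect of universal restrictions.

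The main obstacle will be keeping the construction consistent with condition~$(*)$, i.e., ensuring that no $L_c$-element is forced to occur in two distinct ensembles. This requires choosing the L-shape patterns $q_1, q_2$ so that the ensembles induced by $A_1$ on $L_1$ and by $A_2$ on $L_2$ partition (rather than overlap on) the grid cells---for instance by offsetting the patterns or by indexing them with auxiliary concept names that identify a unique ``owner'' for each cell. A secondary obstacle, as in Section~\ref{sect:repfree}, is ruling out degenerate models in which variables of the abstraction queries are identified in unintended ways; I expect to handle this by adding a small family of auxiliary concept abstractions that forbid the problematic identification patterns, leveraging the fact that in the diamond the two independent refinement paths constrain these identifications from two sides and so fewer auxiliary patterns are needed than in the repetition-free reduction.
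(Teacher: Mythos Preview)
Your core mechanism is the paper's: place the computation level $L$ below several incomparable coarser levels and, for each parent $L_i$, use the pair $L_i{:}U_i \abs L{:}q_i$ together with $L{:}q_i \wedge (\text{closing edge}) \rfn L_i{:}U_i$, so that the very same ensemble $\rho_L(d)$ must satisfy both the open and the closed pattern, forcing the missing $t$-edge. The common top $L_0$ of your diamond plays no role and can be dropped; only the fan-in at $L$ matters.

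There is, however, a concrete gap: two parents are not enough. Each interior grid vertex is a corner of four unit cells, so under any $2$-colouring of cells two same-coloured cells share that vertex; their ensembles then refine from the same parent $L_i$ and overlap, violating Condition~$(*)$. No ``offsetting'' or ``owner labels'' can repair this with only two parents---it is a pigeonhole obstruction (the conflict graph on cells, with edges for shared vertices, has chromatic number~$4$). The paper therefore uses \emph{four} parents $L_1,\dots,L_4$ together with a $2\times 2$ vertex labelling by concept names $X_1,\dots,X_4$: each cell is routed to the $L_i$ matching the label of its top-left corner, and cells routed to the same $L_i$ are then pairwise vertex-disjoint.

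Two smaller points. First, the abstraction query $q'_i$ and the refinement query must have the \emph{same} answer arity, since both determine $\rho_L(d)$; if your ``L-shape'' really has three variables and the closed cell four, the ontology becomes trivially unsatisfiable. The paper uses a four-variable open pattern $c(x_1,x_3)\wedge t(x_1,x_2)\wedge c(x_2,x_4)$ for the abstraction and appends $t(x_3,x_4)$ for the refinement. Second, the auxiliary ``forbid identification'' abstractions you carry over from Section~\ref{sect:repfree} are unnecessary here. That reduction needed them because its abstraction targeted $\bot$, so one had to ensure that only non-closing cells admit repetition-free answers. In the DAG construction the abstraction targets a benign $U_i$ and the refinement forces the closing edge on the \emph{same} tuple regardless of repetitions---degenerate answers cause no harm.
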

The result is again proved by a reduction from (the complement of) the
halting problems for DTMs. In fact, the reduction differs from that in
Section~\ref{sect:repfree} only in how the grid is constructed and
thus we focus on that part. We present the reduction for
$\ALC^\mn{abs}[\textnormal{ca},\textnormal{cr}]$.

Assume that we are given a DTM $M = (Q, \Sigma, \Gamma, q_0,
\delta)$. We want to construct an ontology $\Omc$ and choose a concept
name $S$ and abstraction level $L$ such that $S$ is $L$-satisfiable
w.r.t.\ $\Omc$ iff $M$ does not halt on the empty tape.  We use
abstraction levels $L, L_1, L_2, L_3, L_4$ with $L \prec L_i$ for all
$i \in \{1,\dots, 4\}$. The computation of $M$ is simulated on 
level~$L$. We start with generating an infinite $t$-path with outgoing
infinite $c$-paths from every node:
$$
\begin{array}{r@{\;}c@{\;}lcr@{\;}c@{\;}l}
  S &\sqsubseteq_L& A_t && A_t &\sqsubseteq_L& \exists t . A_t
  \\[1mm]
  A_t &\sqsubseteq_L& \exists c . A_c && A_c &\sqsubseteq_L& \exists c . A_c.
\end{array}
$$
In principle, we would like to add the missing $t$-links using the following
concept abstraction and refinement:
$$
\begin{array}{l}
L_1{:}\, U_1 \abs L{:}q \\[1mm]
  L{:}\,q \land t(x_3, x_4) \rfn L_1{:}\,U_1 \ \text{ where} \\[1mm]
    q = c(x_1, x_3) \land t(x_1, x_2) \land c(x_2, x_4).
\end{array}
$$
This would then even show undecidability under the original semantics,
but it does not work because it creates overlapping ensembles and thus
simply results in unsatisfiability. We thus use the four abstraction
levels $L_1,\dots,L_4$ in place of only $L_1$. This results in
different kinds of ensembles on level $L$, one for each level $L_i$,
and an $L_i$-ensemble can overlap with an $L_j$-ensemble if
$i \neq j$. We label the grid with concept names $X_1,\dots,X_4$ as
shown in Figure~\ref{fig:undec_part_ord_grid}, using CIs
$$
\begin{array}{c}
  X_1 \sqsubseteq_L \forall c.X_3 \sqcap \forall t.X_2\qquad
  X_2 \sqsubseteq_L \forall c.X_4 \sqcap \forall t.X_1 \\[1mm]
  X_3 \sqsubseteq_L \forall c.X_1 \qquad
  X_4 \sqsubseteq_L\forall c.X_2 \qquad S \sqsubseteq_L X_1.
\end{array}
$$
%
%
We define four variations $q_1,\dots,q_4$ of the above CQ $q$, as
shown on the right-hand side of Figure~\ref{fig:undec_part_ord_grid},
and use the following concept abstraction and refinement, for
$i \in \{1, \dots, 4\}$:
$$
\begin{array}{l}
  L_i{:}\, U_i \abs L{:}\,q_i \\[1mm]
  L{:}\, q_i \land t(x_3, x_4) \rfn L_i{:}\,U_i.
\end{array}
$$
It can be verified that this eliminates overlapping ensembles and
indeed generates a grid.

\subsection{Quantified Variables}
\label{sect:qvar}

The final variation that we consider is syntactic rather than
semantic: we admit quantified variables in CQs in abstraction and
refinement statements. 
\begin{restatable}{theorem}{thmqvar}
  \label{thm:qvar}
  In the extension with quantified variables, satisfiability is undecidable
  in $\ALCI^\mn{abs}[\textnormal{ca,cr}]$ and
  $\EL^\mn{abs}[\textnormal{ca},\textnormal{cr},\textnormal{rr}]$.
\end{restatable}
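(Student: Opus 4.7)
The plan is to reduce from the complement of the halting problem for deterministic Turing machines on the empty tape, following the overall shape of the reductions in Sections~\ref{sect:repfree} and~\ref{sect:nontree}. Given a DTM $M$, we construct an $\ALCI^\mn{abs}[\textnormal{ca,cr}]$-ontology $\Omc$ (with quantified variables now allowed in CQs), a concept name $S$, and an abstraction level $L$ such that $S$ is $L$-satisfiable w.r.t.\ $\Omc$ iff $M$ does not halt on the empty tape. Once a grid is available on level~$L$, the TM-simulation part (propagating tape content, head position, and state along $c$-edges, enforcing transitions, and forbidding halting states) is essentially verbatim from Section~\ref{sect:repfree}, so only the grid construction is new.

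The grid lives on level $L$ and uses a single auxiliary level $L_1$ with $L \prec L_1$. On $L$, first generate a tree of $t$- and $c$-successors via $S \sqsubseteq_L A$, $A \sqsubseteq_L \exists t . A$, $A \sqsubseteq_L \exists c . A$. The central idea is to exploit quantified variables to create \emph{singleton} ensembles, sidestepping the ensemble-overlap obstacle that forced Section~\ref{sect:nontree} to split ensembles across four auxiliary levels. Concretely, add the concept abstraction
\[
   L_1{:}\,G \abs L{:}\,A(x_1),
\]
which is trivially connected, creating for every grid node $a$ an element $d_a \in G^{\Imc_{L_1}}$ with $\rho_L(d_a)=(a)$, together with the concept refinement
\[
   L{:}\,\exists z_1, z_2, z_3\; c(x_1,z_1) \wedge t(z_1,z_2) \wedge t(x_1,z_3) \wedge c(z_3,z_2) \rfn L_1{:}\,G,
\]
forcing the ensemble $(a)=\rho_L(d_a)$ to satisfy the closure condition at $a$. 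Because all relevant ensembles have length $1$, Condition~$(*)$ is vacuously satisfied; combined, the two statements force every node $a$ on level $L$ to admit both a $c$-$t$ path and a $t$-$c$ path meeting in a common grandchild, which is exactly grid closure.

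For $\EL^\mn{abs}[\textnormal{ca,cr,rr}]$ we further invoke the simulations from Section~\ref{sect:undecbasic} to replace the $\forall$- and $\bot$-inclusions used in the TM simulation by equivalent combinations of CIs, concept abstractions, and role refinements.

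The main obstacle will be that the closure refinement only provides existentially quantified witnesses, so one only obtains that \emph{some} $c$-successor and \emph{some} $t$-successor of each $a$ share a common grandchild, rather than uniqueness of successors or closure of every individual square. The plan is to handle this by phrasing the content-, head-, and transition-propagation constraints so that they apply uniformly to all $t$- and $c$-successors; determinism of $M$ then yields a consistent configuration along every closure witness, so that $S$ is $L$-satisfiable w.r.t.\ $\Omc$ iff $M$ does not halt on the empty tape, which proves undecidability.
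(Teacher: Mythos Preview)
Your singleton-ensemble idea is elegant, but the argument has a genuine gap in the ``only if'' direction. The closure refinement only guarantees that \emph{each} $A$-node $a$ participates in \emph{some} closing square; it does not force the closing squares at adjacent nodes to fit together into a grid. You acknowledge this, but the appeal to ``determinism of $M$'' does not close the gap.

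Concretely, take the infinite $\{t,c\}$-tree on words $\{t,c\}^*$ and add, for every $w$, an extra $t$-edge $wc \to_t wtc$. Then every $A$-node $w$ has the closing square $(w, wc, wt, wtc)$, so your abstraction/refinement pair is satisfied, and all ensembles are singletons so Condition~$(*)$ is trivially met. Now run the standard TM encoding on this structure for a machine that halts in two steps, say $\delta(q_0,\square)=(q_1,a,R)$ and $\delta(q_1,\square)=(q_h,\square,L)$. The only node carrying $A_{q_1}\sqcap A_\square$ is $tc$ (it inherits $A_\square$ from $t$ via $(H_\leftarrow\sqcup H_\rightarrow)\sqcap A_\sigma\sqsubseteq\forall c.A_\sigma$, and $A_{q_1}$ from $c$ via the extra $t$-edge). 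Its unique $c$-successor $tcc$ then carries $B_{q_h,\square,L}$, but $tcc$ has \emph{no} $t$-predecessor in this model, so the inclusion $\exists t.B_{q_h,\square,L}\sqsubseteq A_{q_h}$ never fires. Hence $A_{q_h}$ is not derived anywhere and the ontology is satisfiable even though $M$ halts. The point is that information that should travel ``right then down'' and ``down then right'' to the same node instead lands at unrelated nodes, and the $\exists t$-on-the-left inclusions that implement leftward head moves see no witness.

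The paper avoids this by using the quantified variables differently: it keeps \emph{pairs} $(x_1,x_2)$ as answer variables and quantifies only over the ``previous row'' witnesses $y_1,y_2$. The abstraction $L'{:}U_i\abs L{:}q_i(x_1,x_2)$ followed by the refinement $L{:}q_i\wedge t(x_1,x_2)\rfn L'{:}U_i$ then forces a $t$-edge between the \emph{specific} nodes $x_1,x_2$, building an actual (partial) grid column by column. A four-colouring with $X_1,\dots,X_4$ keeps these two-element ensembles disjoint, and the missing $t$-edges in alternate rows are bridged in the TM encoding by inclusions that route through $\forall c.\forall t.\forall c^-$ (this is where \ALCI, rather than \ALC, is used). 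Your construction never forces an edge between two prescribed nodes, which is exactly what is needed to make the grid close globally.
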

We use a DTM reduction that is similar to the previous
reduction, with a slightly different representation of the grid. We again start with
an infinite $t$-path with outgoing infinite $c$-paths from every
node.
%
In the previous reduction, the main issue when adding the missing
$t$-links was that a naive implementation creates overlapping
ensembles. 
It is here that quantified variables help since they allow
us to speak about elements without forcing them to be part of an
ensemble. 
Details are in the appendix.\footnote{The construction given in
the conference version of this paper contains a mistake. We do 
currently not know whether the result holds for  $\ALC^\mn{abs}[\textnormal{ca,cr}]$,
as stated there.}


\section{Conclusion}

We have introduced DLs that support multiple levels of
abstraction and include operators based on CQs for relating
these levels. 
As future work, it would be interesting to
analyse the complexity of abstraction DLs based on Horn DLs such as
\EL, \ELI and Horn-\ALCI. It would also be interesting to 
design an ABox formalism suitable for abstraction
DLs, and to use such DLs for ontology-mediated querying. Finally, our work
leaves open some decidability questions such as for
$\ALC^{\mn{abs}}[\textnormal{cr}]$ and
$\EL^{\mn{abs}}[\textnormal{cr},\textnormal{ca}]$ under the DAG
semantics and with quantified variables.

\section*{Acknowledgments}

The research reported in this paper has been supported by the German Research Foundation DFG, 
as part of Collaborative Research Center (Sonderforschungsbereich) 
1320 Project-ID 329551904 “EASE - Everyday Activity Science and Engineering”, University of Bremen (http://www.ease-crc.org/). 
The research was conducted in subproject ``P02 - Ontologies with Abstraction''.

This work is partly supported by BMBF (Federal Ministry of Education and Research)
in DAAD project 57616814 (\href{https://secai.org/}{SECAI, School of Embedded Composite AI})
as part of the program Konrad Zuse Schools of Excellence in Artificial Intelligence.

\bibliographystyle{kr}
\bibliography{literature}

\cleardoublepage









\appendix

\section{Preliminaries}

We introduce some additional preliminaries that are used in the
detailed proofs provided in the appendix.

\subsection{Conservative Extensions}
A \emph{signature} is a set of concept and role names that in this 
context are uniformly referred to as \emph{symbols}. The set of 
symbols that occur in an ontology \Omc is denoted by $\mn{sig}(\Omc)$. 
Note that this does not include abstraction levels. 

\medskip

Given two ontologies $\Omc_1$ and $\Omc_2$, we say that $\Omc_2$ is a
\emph{conservative extension} of $\Omc_1$ if
\begin{enumerate}
  \item $\mn{sig}(\Omc_1) \subseteq \mn{sig}(\Omc_2)$, 
  \item every model of $\Omc_2$ is a model of $\Omc_1$, and 
  \item for every model $\Imc_1$ of $\Omc_1$ there exists a model $\Imc_2$ of $\Omc_2$ such that 
  \begin{itemize}
    \item[] $A^{\Imc_1} = A^{\Imc_2}$ for all concept names $A \in \mn{sig}(\Omc_1)$ and 
    \item[] $r^{\Imc_1} = r^{\Imc_2}$ for all role names $r \in \mn{sig}(\Omc_1)$. 
  \end{itemize}
\end{enumerate}

\subsection{Alternating Turing Machines}

We briefly recall the definition of alternating Turing machines
(ATMs).  An ATM $M$ is a tuple $M = (Q, \Sigma, \Gamma, q_0, \Delta)$
where
\begin{itemize}
  \item $Q = Q_\exists \uplus Q_\forall \uplus \{q_a, q_r\}$ is the set of \emph{states} and 
  consists of \emph{existentital states} in $Q_\exists$, \emph{universal states} in $Q_\forall$, 
  an \emph{accepting state} $q_a$, and a \emph{rejecting state} $q_r$; 
  \item $\Sigma$ is the \emph{input alphabet}; 
  \item $\Gamma$ is the \emph{work alphabet} that contains the \emph{blank symbol} $\square$
  and satisfies $\Sigma \subseteq \Gamma$; 
  \item $q_0 \in Q_\exists \cup Q_\forall$ is the \emph{starting state}; and 
  \item $\Delta \subseteq Q \setminus \{q_a, q_r\} \times \Gamma \times Q \times \Gamma \times \{L, R\}$ is the \emph{transition relation}. 
\end{itemize}
We use $\Delta(q, \sigma)$ to denote $$\{(q', \sigma', M) \mid (q, \sigma, q', \sigma', M) \in \Delta\}$$
and assume w.l.o.g.\ $|\Delta(q, \sigma)| = 2$ for all such sets. 

A \emph{configuration} of an ATM is a word $wqw'$ with $w,w' \in \Sigma^*$ 
and $q \in Q$. 
This has the intended meaning that the one-sided tape contains $ww'$ followed by only blanks, 
the ATM is in state $q$, and the head is on the symbol just after $w$. 
The \emph{successor configurations} of a configuration are defined as usual in terms of the transition relation $\Delta$. 
A \emph{halting configuration} is of the form $wqw'$ with $q \in \{q_a, q_r\}$. 

A \emph{computation} of an ATM $M$ on a word $w$ is a (finite or infinite) sequence of configurations 
$K_0, K_1, \dots$ such that $K_0 = q_0w$ and $K_{i+1}$ is a successor configuration of $K_i$ for all $i \geq 0$. 
The \emph{acceptance} of a configuration $K = wqw'$ depends on $q$: 
if $q = q_a$ then $K$ is accepting; 
if $q = q_r$ then $K$ is rejecting; 
if $q \in Q_\exists$, then $K$ is accepting iff at least one successor configuration is accepting; 
if $q \in Q_\forall$, then $K$ is accepting iff both successor configuration are accepting. 
An ATM $M$ with starting state $q_0$ \emph{accepts} the word $w$ if 
the \emph{initial configuration} $q_0w$ is accepting. 
We use $L(M)$ to denote the language $M$ is accepting. 



\section{Proofs for Section~\ref{sec:alci_cr_exp_ub}}

We prove the correctness of the algorithm, stated as
Lemma~\ref{lem:corr1} in the main body of the paper. We also analyze
the running time of the algorithm.

We repeat Lemma~\ref{lem:corr1} here for the reader's convenience.


\lemcorrone*

We split the proof of Lemma~\ref{lem:corr1} into a soundness part
(``if'' direction) and a completeness part (``only if'' direction).

\subsection{Soundness}
Assume that our algorithm returns `satisfiable'.
This implies that there is a set $\Mmc$ of mosaics in which all
mosaics are good, that contains a mosaic
${M_0}$ with $L^{{M_0}} = L_0$ and $C_0^{\Imc^M_{L_0}} \neq \emptyset$, 
and for each $L \in \Abf_\Omc$ it contains a mosaic of level $L$.
We have to show that we can construct a model
$\Imc$ of $\Omc$ such that ${C_0}^{\Imc_{L_0}} \neq \emptyset$.

Note that throughout the following construction, when working with multiple mosaics we generally consider their interpretation domains
to be disjoint; this can be easily achieved by renaming. In the following, we
construct a sequence $\Imc^0,\Imc^1,\dots$ of A-interpretations and obtain 
the desired model \Imc in the limit. For bookkeeping purposes, along with $\Imc^0,\Imc^1,\dots$ we
construct a mapping $M$ that assigns
a mosaic $M(d)$ to every domain element $d \in \bigcup_{i \geq 0}\Delta^{\Imc^i}$.

We start with defining $\Imc^0 = (\prec, (\Imc^0_L)_{L \in \Abf_\Omc}, \rho^0)$ by setting
\begin{align*}
   \Imc^0_L &= \biguplus_{\substack{M \in \Mmc \\L^M=L}} \Imc^M \text{\qquad for all $L \in \Abf_\Omc$};\\
  \rho^0 &= \emptyset;
\end{align*}
and set $M(d)=M$ if $d \in \Delta^{\Imc^M}$, for all $d \in \Delta^{\Imc^0}$ and $M \in \Mmc^*$.

To construct $\Imc^{i+1} = (\prec, (\Imc^{i+1}_{L})_{L \in \Abf_\Omc}, \rho^{i+1})$ from $\Imc^{i}$ we start with 
$\Imc^{i+1} = \Imc^i$. Then for every $L \in \Abf_\Omc$:
\begin{enumerate}
  \item consider every $A \sqsubseteq_L \exists R . B \in \Omc$ and $d \in A^{\Imc^i_L}$ with
  $d \notin (\exists R.B)^{\Imc^i_L}$.
  Condition~1 of goodness of mosaics implies that there is a mosaic $M' \in \Mmc$ with $L^{M'} = L$
  and a $d' \in \Delta^{\Imc^{M'}}$ that satisfy the subconditions~$1(a)$ to $(c)$ w.r.t.\ $M(d)$.
  We do the following:
  \begin{itemize}
    \item (disjointly) add $\Imc^{M'}$ to $\Imc^{i+1}_L$ and
    \item add $(d,d')$ to $S^{\Imc^{i+1}_L}$ for all $R \sqsubseteq_L S \in \Omc$.
  \end{itemize}
  Also set $M(d) = M'$ for all $d \in \Delta^{\Imc^{M'}}$. 
  
  \item consider every concept refinement $L'{:}q(\bar x) \rfn L{:}A$ in \Omc such that
  $d \in A^{\Imc^i_L}$.
  Specifically let $Q = \{q \mid L'{:}q(\bar x) \rfn L{:}A \in \Omc \text{ and } d
  \in A^\Imc\}$. 
  If $Q \neq \emptyset$, then by Condition~2 of goodness of mosaics there is a mosaic $M' \in \Mmc$ with $L^{M'} = L$ and tuple $\bar e$ over $\Delta^{\Imc^{M'}}$
  such that $\bar e \in q(\Imc^{M'})$ for all $q \in Q$.
  We do the following:
  \begin{itemize}
    \item (disjointly) add $\Imc^{M'}$ to $\Imc^{i+1}_{L'}$,
    \item (disjointly) add $\bar e$ to $\rho^{i+1}_{L'}(d)$.
  \end{itemize}
  Also set $M(d) = M'$ for all $d \in \Delta^{\Imc^{M'}}$.
\end{enumerate}
As announced, we take $\Imc$ to be the limit of the constructed sequence of $\Imc^0,\Imc^1,\dots$.

What remains to show is that $\Imc$ is a model of $\Omc$ and that $C_0$ is satisfied on $L_0$.
We do this step by step starting with the basic condition that $\Imc$ is an $A$-interpretation.
Subsequently, we will show that it satisfies all CIs and RIs in $\Omc$ and lastly that it satisfies all concept refinements in  $\Omc$ and also that $C_0$ is satisfied on $L_0$.
\begin{lemma}
  \label{lem:alci_cr_comp_intrp}
  $\Imc = (\prec, (\Imc_L)_{L \in \Abf_\Omc}, \rho)$ is an A-interpretation.
\end{lemma}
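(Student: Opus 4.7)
The goal is to verify the three defining clauses of an A-interpretation for the limit object $\Imc$: the abstraction levels form a tree under $\prec$, the domains $\Delta^{\Imc_L}$ are pairwise disjoint across levels, and the refinement function $\rho$ satisfies the injectivity-like condition $(*)$.

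First I would dispose of the tree property for $\prec$. Since the algorithm returns `unsatisfiable' outright whenever $(\Abf_\Omc, \{(L', L) \mid L \prec L'\})$ fails to be a tree, we are in the case where this structural requirement already holds; it is inherited verbatim by every $\Imc^i$ and hence by the limit, since neither step of the construction alters $\prec$.

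The disjointness of domains would follow by a straightforward induction on $i$. The base case is immediate because $\Imc^0_L$ is a disjoint union of mosaic interpretations at level $L$ (assumed disjoint via renaming), and different levels use different mosaics. In the inductive step, both step~1 and step~2 adjoin a freshly renamed copy of some $\Imc^{M'}$ to a single level, so no element is ever reused across levels at any finite stage, and the property passes to the limit.

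The main obstacle, and the real heart of the argument, is condition $(*)$: every element must belong to at most one ensemble at its own level. The plan is to observe that whenever step~2 sets $\rho_{L'}(d) = \bar e$ for $d$ at the coarser level, the tuple $\bar e$ is drawn from a mosaic adjoined disjointly at this very stage, so the components of $\bar e$ are fresh elements that do not appear in the range of $\rho$ at any earlier stage. The one subtlety is to ensure that step~2 is not executed a second time for the same pair $(d, L')$, which would either clash with the already-defined value of $\rho_{L'}(d)$ or drag the components of some earlier $\bar e$ into a new ensemble; this is handled by marking each such pair as processed once its refinement is defined, so that $\rho$ remains a well-defined partial function throughout. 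Granting this discipline, condition $(*)$ follows because the fresh components of each ensemble never reappear elsewhere. Nonemptiness of the tuples and the correct typing $\rho_{L'}(d) \in (\Delta^{\Imc_{L'}})^{+}$ are immediate from the shape of the mosaics used in step~2, which completes the verification that $\Imc$ is an A-interpretation.
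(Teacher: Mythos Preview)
Your approach is essentially the same as the paper's: check the defining clauses of an A-interpretation one by one, with the key observation for condition~$(*)$ being that every ensemble added in Step~2 consists of elements from a freshly (disjointly) adjoined mosaic, hence no element can end up in two ensembles.

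Two small differences in emphasis are worth noting. First, for the second clause you verify pairwise \emph{disjointness} of the domains, whereas the paper instead verifies that each $\Delta^{\Imc_L}$ is \emph{non-empty} (which is needed because interpretations must have non-empty domain); non-emptiness comes from Condition~(ii) of Lemma~\ref{lem:corr1}, which guarantees that $\Mmc^*$ contains a mosaic of every level, so each $\Imc^0_L$ is already non-empty. Both properties are required, and both are immediate from the construction, so this is a matter of which one to state explicitly. Second, your discussion of the ``subtlety'' that Step~2 might be executed twice for the same pair $(d,L')$ is a legitimate concern about $\rho$ being a well-defined partial function, but note that it is orthogonal to condition~$(*)$: even if the construction overwrote $\rho_{L'}(d)$, the fresh-elements argument would still give~$(*)$. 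The paper simply does not raise this point and takes the intended single-assignment reading for granted.
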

\begin{proof}
  We go through the three conditions of A-interpretations.
  \begin{itemize}
    \item our relation ``$\prec$'' is such that  the
    directed graph $(\AI,\{(L',L)\mid L \prec L' \})$ is a
    tree, since our algorithm did not abort;
    \item by definition, $\Delta^{\Imc^0_L}$ is non-empty for all $L \in \Abf_\Omc$; thus the same holds for $\Delta^{\Imc_L}$;
    \item whenever we added a tuple to the range of $\rho$ in the construction of $\Imc$, all the elements were part of a freshly added mosaic.
    Thus there is always at most one $d \in \Delta^{\Imc}$ such that $e$ occurs in $\rho_{L(e)}(d)$.
  \end{itemize}
\end{proof}

\begin{lemma}
  \label{lem:alci_abs_cr_soundness}
  $\Imc$ satisfies all CIs and RIs in $\Omc$.
\end{lemma}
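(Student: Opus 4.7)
The plan is to proceed by a case analysis on the syntactic form of each inclusion, exploiting the normal form of \Omc and the definition of a mosaic. Recall that each mosaic $M=(L,\Imc^M)$ satisfies by definition every CI at level $L$ in \Omc except possibly those of the form $A \sqsubseteq_L \exists R . B$, and every RI at level $L$. The key observation driving the whole argument is that the construction of $\Imc$ only (i)~adds fresh, pairwise disjoint mosaic copies to the domains and (ii)~installs new role edges across mosaics in Step~1; concept name extensions at existing elements are never modified after introduction.

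I would first dispatch the concept-only CIs, namely $\top \sqsubseteq_L A$, $A_1 \sqcap A_2 \sqsubseteq_L A$, $A \sqsubseteq_L \neg B$, and $\neg B \sqsubseteq_L A$. Their satisfaction depends solely on concept name extensions, which by observations (i)--(ii) are inherited unchanged from the originating mosaic. Since every mosaic satisfies these CIs by definition, so does $\Imc$.

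Next, I would handle the existential CIs $A \sqsubseteq_L \exists R . B$ using Step~1 of the construction. For any $d \in A^{\Imc^i_L}$ lacking a witness at stage $i$, goodness Condition~1 guarantees a mosaic $M' \in \Mmc$ with an element $d' \in B^{\Imc^{M'}}$, and Step~1 actually installs $(d,d') \in S^{\Imc^{i+1}_L}$ for every $S$ with $R \sqsubseteq_L S \in \Omc$ (in particular $S=R$). A routine fairness argument over the countable queue of such defects ensures each defect is served in the limit, so the CI holds in $\Imc$.

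The final and most delicate case comprises CIs of the form $\exists R . B \sqsubseteq_L A$ together with the RIs $R \sqsubseteq_L S$. If the witnessing edge lies wholly inside a single mosaic, the mosaic itself supplies the required membership and the inclusion holds. Otherwise the edge was produced by Step~1 across two mosaics $M,M'$; here I would invoke goodness Condition~1(b) with $S=R$ (using the algorithm's precondition $R \sqsubseteq_L R \in \Omc$) to conclude $d \in A^{\Imc_L}$, and symmetrically Condition~1(c) together with the closure assumption $R^- \sqsubseteq_L S^- \in \Omc$ for the inverse view $(d',d) \in (R^-)^{\Imc_L}$. For RIs, Step~1 already populates every super-role in parallel, closed under assumptions (i)--(iii). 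The main obstacle throughout is exactly this bookkeeping: making sure that every $\exists S . A \sqsubseteq_L B \in \Omc$ triggered by a freshly added cross-mosaic edge falls into the scope of Condition~1(b) or~1(c) for the correct $S$, which requires carefully matching $R$ (and $R^-$) against the transitively and inverse-closed role hierarchy of \Omc.
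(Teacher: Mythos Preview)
Your proposal is correct and follows essentially the same case analysis as the paper's own proof: dispatch the concept-only CIs via the mosaic definition, handle $A \sqsubseteq_L \exists R.B$ via Step~1 of the construction, and handle $\exists R.B \sqsubseteq_L A$ and RIs by distinguishing intra-mosaic from cross-mosaic edges, invoking goodness Conditions~1(b) and~1(c) in the latter case. One minor remark: the ``fairness argument'' you mention is unnecessary here, since the construction processes \emph{every} defect present at stage $i$ when passing to stage $i{+}1$, so each element introduced at some stage has all of its missing witnesses supplied at the very next stage.
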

\begin{proof}
  First, consider any role inclusion $R \sqsubseteq_L S \in \Omc$.
  For an edge $(d,e) \in R^{\Imc_L}$ there are two cases.
  If $(d,e)$ was part of a mosaic then the definition of mosaics and our construction imply $(d,e) \in S^{\Imc_L}$.
  Otherwise $(d,e)$ was added to $R^{\Imc_L}$ in Step~1 of the construction of $\Imc$
  which immediately implies $(d,e) \in S^{\Imc_L}$ since this is part of Step~1.

  Now we go through each form a CI in $\Omc$ can have.
  \begin{itemize}
    \item $\top \sqsubseteq_L A, A_1 \sqcap A_2 \sqsubseteq_L A, A \sqsubseteq_L \neg B, \neg B \sqsubseteq_L \neg A$, 
    are satisfied because of the definition of mosaics and the construction of $\Imc$.
    \item If $A \sqsubseteq_L \exists R.B \in \Omc$ and $d \in A^{\Imc_L}$ then Step~$1$ in the construction of $\Imc$ implies
    that there is a $d' \in B^{\Imc_L}$ with $(d,d') \in R^{\Imc_L}$.
    \item If $\exists R.B \sqsubseteq_L A \in \Omc$ and $(d,d') \in R^{\Imc_L}$ with $d' \in B^{\Imc_L}$, then there are two cases.  
    First, consider the case that $M(d) = M(d')$. 
    Then the definition of mosaics and our construction imply $d \in A^{\Imc_L}$.
    If $M(d) \neq M(d')$ then $(d,d')$ must have been added to $R^ {\Imc_L}$ in Step~$1$ of constructing $\Imc$ which in turn implies $d \in A^{\Imc_L}$.
  \end{itemize}  
\end{proof}
The following lemma establishes soundness of the algorithm.
\begin{lemma}
  $\Imc$ is a model of $\Omc$ with ${C_0}^{\Imc_{{L_0}}} \neq \emptyset$.
\end{lemma}
\begin{proof}
  By Lemma~\ref{lem:alci_cr_comp_intrp}, $\Imc$ is an A-interpretation and by Lemma~\ref{lem:alci_abs_cr_soundness}, it satisfies all CIs and RIs in $\Omc$. 
  Since ${M_0} \in \Mmc$, our construction of $\Imc^0$ implies that ${C_0}^{\Imc^0_{{L_0}}} \neq \emptyset$
  and by the construction of $\Imc$ this holds true for $\Imc$ as well.
  It only remains to show that all concept refinements are satisfied but this follows directly from Step~2 in the construction of $\Imc$
  since it ensures that all concept refinements are satisfied in $\Imc^i$ for each $i$ and thus also in $\Imc$.
\end{proof}


\subsection{Completeness}
Assume that there is a model $\Imc = (\prec,(\Imc_L)_{L \in \Abf_\Omc},\rho)$ of 
$\Omc$ with ${C_0}^{\Imc_{{L_0}}} \neq \emptyset$.
We have to show that our algorithm returns `satisfiable'.

Let $V \subseteq \Delta^{\Imc_L}$ be a set of domain elements for some $L \in \Abf_\Omc$. 
If $|V| \leq ||\Omc||$, we use $M_V = (L, \Jmc)$ to denote the mosaic \emph{of $V$ in $\Imc_L$} where $\Jmc$ is defined as follows:
\begin{alignat*}{2}
  \Delta^{\Jmc} &= V &&\\
  A^{\Jmc} &= A^{\Imc_L} \cap V &&\text{for all $A$ in $\mn{sig}(\Omc) \cap \Cbf$}\\
  r^{\Jmc} &= \{(d,e) \mid d,e \in V &&\text{ and } (d,e) \in r^{\Imc_L}\}  \\
  & &&\text{for all $r$ in $\mn{sig}(\Omc) \cap \Rbf$}
\end{alignat*}

It is easy to verify that $M_V$ is a mosaic since $\Imc$ is a model of $\Omc$ and $|\Delta^{\Jmc}| \leq ||\Omc||$.
Technically we have to rename elements which can be done by defining a 
simple injective function that maps $\Delta^{\Jmc}$ to $\Delta$.
If we renamed an element $d \in \Delta^{\Imc}$ to $e \in \Delta^{\Imc^{M_V}}$, then we use $e^\downarrow$ to denote $d$.

The algorithm computes the final set $\Mmc^*$ of mosaics by starting with the set $\Mmc_0$  of all mosaics for $\Omc$
and then repeatedly and exhaustively eliminating mosaics that are not good.
Let us assume the computation did $n$ eliminations of mosaics that were not good.
We then use $\Mmc_0, \Mmc_1, \dots, \Mmc_n$ to denote the sets of the computation, implying $\Mmc_n = \Mmc^*$.

We now want to show that all the $M_V$s are contained in $\Mmc^*$ since that will trivially imply that our algorithm returns `satisfiable'.
\begin{lemma}
  \label{lem:mv_in_mstar}
  For all $i \leq n$:
  $$\{M_V \mid \exists  L \in \Abf_\Omc \text{ s.t. } V \subseteq \Delta^{\Imc_L}
  \text{ with } |V| \leq ||\Omc|| \} \subseteq \Mmc_i.$$
\end{lemma}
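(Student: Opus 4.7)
The plan is to prove the lemma by induction on $i$. Write $\Smc = \{M_V \mid \exists L \in \Abf_\Omc : V \subseteq \Delta^{\Imc_L}, |V| \le ||\Omc||\}$.

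For the base case $i = 0$, I only need to observe that each $M_V$ is a well-defined mosaic in the sense of Section~\ref{sec:alci_cr_exp_ub}: by construction $|\Delta^{\Jmc}| \leq ||\Omc||$, and since $\Imc$ is a model of $\Omc$, the interpretation $\Jmc$ inherits satisfaction of all CIs in $\Omc$ of the forms $\top\sqsubseteq_L A$, $A_1\sqcap A_2\sqsubseteq_L A$, $A\sqsubseteq_L \neg B$, $\neg B\sqsubseteq_L A$, and all RIs $R\sqsubseteq_L S$ (these properties are local to the elements and the restricted role extensions preserve them). The only CIs that may fail are those of the form $A\sqsubseteq_L \exists R.B$, which is precisely the form mosaics are permitted to violate. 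Hence $M_V \in \Mmc_0$.

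For the inductive step, assume $\Smc \subseteq \Mmc_i$. Since $\Mmc_{i+1}$ is obtained by removing mosaics that fail to be good in $\Mmc_i$, it suffices to show that each $M_V = (L,\Jmc) \in \Smc$ is good in $\Mmc_i$, by verifying Conditions~1 and~2 of goodness for every $d \in V$. For Condition~1, given $A \sqsubseteq_L \exists R.B \in \Omc$ with $d \in A^{\Jmc}$ and $d \notin (\exists R.B)^{\Jmc}$, I will use that $d^\downarrow \in A^{\Imc_L}$ and that $\Imc$ is a model to find an $R$-successor $e \in B^{\Imc_L}$ of $d^\downarrow$; then I take $V' = \{e\}$ and $M' = M_{V'}$ (in $\Mmc_i$ by IH), with $d' = e$. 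Sub-conditions (a)--(c) then follow by translating them back to $\Imc_L$: (a) is immediate; for (b), $d' \in A^{\Imc'}$ and $R\sqsubseteq_L S$ give $(d^\downarrow,e) \in S^{\Imc_L}$, so $\exists S.A\sqsubseteq_L B$ forces $d^\downarrow \in B^{\Imc_L}$, hence $d \in B^{\Jmc}$; (c) is symmetric, using $R^-\sqsubseteq_L S$ and $d^\downarrow \in A^{\Imc_L}$ to conclude $e \in B^{\Imc_L}$. For Condition~2, given $Q = \{q \mid L'{:}q(\bar x)\rfn L{:}A \in \Omc, d \in A^{\Jmc}\} \neq \emptyset$, the model condition yields the tuple $\bar e = \rho_{L'}(d^\downarrow)$ over $\Delta^{\Imc_{L'}}$ satisfying $\bar e \in q(\Imc_{L'})$ for all $q \in Q$; since query arities are bounded by $||\Omc||$, $V' = \{\text{elements of }\bar e\}$ has size at most $||\Omc||$, so $M_{V'} \in \Mmc_i$ by IH, and $\bar e$ (after renaming) witnesses Condition~2 in $M_{V'}$ because the normal-form restriction ensures all atoms of $q$ are concept-name or role-name atoms, which are preserved by the definition of $M_{V'}$.

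The main obstacle is simply the careful bookkeeping in the inductive step: one must keep track of the renaming between elements of $\Imc_L$ and elements of the freshly built mosaic, verify that the restricted interpretation $\Jmc$ faithfully inherits the concept and role memberships needed to validate the two goodness conditions, and invoke the normal-form assumption to ensure that atoms appearing in the CQs of concept refinements are concept names (so that the restricted interpretation preserves them). Once the bookkeeping is fixed, the verification of conditions (b) and (c) reduces to direct translations of the inclusions in $\Omc$ evaluated in the full model $\Imc$, which are guaranteed since $\Imc\models\Omc$. Taking $i = n$ gives $\Smc \subseteq \Mmc^* = \Mmc_n$, which immediately yields the completeness of the algorithm: the mosaic $M_{\{d_0\}}$ for some $d_0 \in C_0^{\Imc_{L_0}}$ witnesses~(i), and for every $L \in \Abf_\Omc$ the mosaic $M_{\{d\}}$ for any $d \in \Delta^{\Imc_L}$ witnesses~(ii).
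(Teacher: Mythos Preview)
Your proof is correct and follows essentially the same approach as the paper: induction on $i$, with the base case given by each $M_V$ being a mosaic, and the inductive step by showing that every $M_V$ is good in $\Mmc_i$ using the witnesses $M_{\{e\}}$ (for Condition~1) and $M_{\{e \mid e \text{ in } \bar e\}}$ (for Condition~2), both of which lie in $\Mmc_i$ by the induction hypothesis. Your write-up is in fact more explicit than the paper's in verifying sub-conditions~(a)--(c) and in justifying why the restricted interpretation preserves the CQ answers via the normal-form assumption.
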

\begin{proof}
The proof is by induction on $i$.
For $i = 0$, our assumption follows from the fact that all the $M_V$s are mosaics.

For $i > 0$, let us consider a mosaic $M_V \in \Mmc_i$ for any $L \in \Abf_\Omc$ and $V \subseteq \Delta^{\Imc_L}$ with $|V| \leq ||\Omc||$.
Let $A \sqsubseteq_L \exists R . B \in \Omc$, $d \in A^{\Imc^{M_V}}$, and
$d \notin (\exists R.B)^{\Imc^{M_V}}$.
We have to show that there are a mosaic $M' \in \Mmc_i$ with $L^{M'} = L$ and a domain element $d' \in {\Delta^{\Imc^{M'}}}$ that satisfy
Condition~1 of goodness of mosaics. From the construction of $M_V$ we know that $d^\downarrow \in (\exists R.B)^{\Imc_L}$ and $\Imc$ being a model of $\Omc$ implies
that there is an $e \in A^{\Imc_L}$ with $(d^\downarrow,e) \in R^{\Imc_L}$.
By the IH $M_{\{e\}} \in \Mmc_i$, and thus Condition~1 of goodness is satisfied.

For Condition~2 of goodness, let $d \in A^{\Imc^{M_V}}$ and $Q = \{q \mid L'{:}q(\bar x) \rfn L{:}A \in \Omc\} \neq \emptyset$.
The construction of $M_V$ implies that $d^\downarrow \in A^{\Imc_L}$ and 
since $\Imc$ is a model there is a tuple $\bar e \in \rho_{L'}(d)$
such that $\bar e \in q(\Imc_L)$ for all $q \in Q$.
Now we can apply the IH to get $M_{\{e \mid e \text{ in } \bar e\}} \in \Mmc_i$ and thus
Condition~2 of goodness is satisfied as well.
\end{proof}

Now since $\Imc$ is a model there is at least one element $d \in \Delta^{\Imc_L}$ for all $L \in \Abf_\Omc$,
and also an element $d_0 \in {C_0}^{\Imc_{{L_0}}}$.
We can now apply Lemma~\ref{lem:mv_in_mstar} to imply that $M_{\{d\}} \in \Mmc^*$ for all these mentioned domain elements
and thus the conditions for our algorithm to return `satisfiable' are met.

\subsection{Running Time}

Since the domain of a mosaic has a maximum of
$||\Omc||$ elements, there are at most $2^{{\mn{poly}(||\Omc||)}}$
different mosaics.  Consequently the length of the constructed
sequence $\Mmc_0,\Mmc_1,\dots$ is also bounded by
$2^{{\mn{poly}(||\Omc||)}}$.  Checking goodness of mosaics requires us
to compute answers to CQs.  Doing this in a brute force way results in
checking at most $||\Omc||^{||\Omc||} \in 2^{\mn{poly}(||\Omc||)}$
candidates for homomorphisms from a CQ into a mosaic (for a fixed
answer). In summary, the running time of our algorithm is at most
${2^{\mn{poly}(||\Omc||)}}$.

\section{Proofs for Section~\ref{sec:alci_all_ar_statements_ub}}

In the main part of the paper, we have omitted details regarding the
converses of Conditions~2-5 of edge candidates. We start with
giving a complete list of conditions, including the converses. 

\medskip

Let $M = ((\Imc_{L})_{L \in \Abf_\Omc},\rho, f_\mn{in}, f_\mn{out})$
and
$M' = ((\Imc'_{L})_{L \in \Abf_\Omc},\rho',
f'_{\mn{in}},f'_{\mn{out}})$. A set $E$ of $M,M'$-edges is an
\emph{edge candidate} if it satisfies the following conditions:
\begin{enumerate}
\item[1.] $R(d,e) \in E$ and $L(d) = L$ implies $S(d,e) \in E$, for all
$R\sqsubseteq_L S \in \Omc$;
\item[2.] if $\exists R . A \sqsubseteq_L B \in \Omc$, $R(d,d') \in E$, 
  and  $d' \in A^{\Imc'_L}$, then $d \in B^{\Imc_L}$;

\item[2$'$.] if $\exists R^- . A \sqsubseteq_L B \in \Omc$, $R(d,d') \in E$, and $d \in
A^{\Imc_L}$, then $d' \in B^{\Imc_L}$;

\item[3.] for all $L \in \Abf_\Omc$, all $(q, h) \in f_\mn{out}(L)$, where 
  $q = E_q \uplus q|_{\overline V}$ for $V = \mn{dom}(h)$,
  and all functions $g$ from $\overline V \cap \mn{var}(E_q)$ to $\Delta^{\Imc'_L}$
  such that $R(h(x), g(y)) \in E$ for all $R(x,y) \in E_q$,
  we have $(q|_{\overline V}, g) \in f'_{\mn{in}}(L)$;
  \item[3$'$.] for all $L \in \Abf_\Omc$, all $(q, h) \in f'_{\mn{out}}(L)$, where 
  $q = E_q \uplus q|_{\overline V}$ for $V = \mn{dom}(h)$,
  and all functions $g$ from $\overline V \cap \mn{var}(E_q)$ to $\Delta^{\Imc_L}$
   such that $R(h(x), g(y)) \in E$ for all $R(x,y) \in E_q$,
  we have $(q|_{\overline V}, g) \in f_{\mn{in}}(L)$;
  \item[4.] for all $R(d,d') \in E$ and all role refinements $L{:}q(\bar x, \bar y) \rfn L'{:}q_R(x,y) \in \Omc$,
 such that $q = q|_{\bar x} \uplus E_q \uplus q|_{\bar y}$,
   $q_R=C_x(x) \wedge R(x,y) \wedge C_y(y)$, 
   $C_x \in t(d)$, and $C_y \in t'(d')$:
  \begin{enumerate}
    \item $\rho_{L}(d)$ and $\rho'_{L}(d')$ are defined;
    \item $h:\bar x \mapsto \rho_{L}(d)$ is a homomorphism from
      $q|_{\bar x}$ to $\Imc_L$;
    \item $h':\bar y \mapsto \rho'_{L}(d')$ is a homomorphism
      from $q|_{\bar y}$ to $\Imc'_L$;
    \item $\{R(h(x), h'(y)) \mid R(x,y) \in E_q\} \subseteq E$;
  \end{enumerate}
  \item[4$'$.] for all $R(d',d) \in E$ and role refinements $L{:}q(\bar x, \bar y) \rfn L'{:}q_R(x,y) \in \Omc$,
 such that $q = q|_{\bar x} \uplus E_q \uplus q|_{\bar y}$,
   $q_R=C_x(x) \wedge R(x,y) \wedge C_y(y)$, 
   $C_x \in t'(d')$, and $C_y \in t(d)$:
  \begin{enumerate}
    \item $\rho_{L}(d)$ and $\rho'_{L}(d')$ are defined;
    \item $h:\bar x \mapsto \rho'_{L}(d')$ is a homomorphism from
      $q|_{\bar x}$ to $\Imc'_L$;
    \item $h':\bar y \mapsto \rho_{L}(d)$ is a homomorphism
      from $q|_{\bar y}$ to $\Imc_L$;
    \item $\{R(h(x), h'(y)) \mid R(x,y) \in E_q\} \subseteq E$;
  \end{enumerate}
  \item[5.] for all role abstractions $L'{:}R \abs L{:}q(\bar x, \bar y) \in \Omc$,
  where $q = q|_{\bar x} \uplus E_q \uplus q|_{\bar y}$,
  all homomorphisms $h$ from $q|_{\bar x}$ to $\Imc_L$,
  and all homomorphisms $g$ from $q|_{\bar y}$ to $\Imc'_L$
  such that $\{S(h(x), g(y)) \mid S(x,y) \in E_q\} \subseteq E$, 
  there are $d \in \Delta^{\Imc_{L'}}$ and
  $d' \in \Delta^{\Imc'_{L'}}$ with $\rho_L(d) = h(\bar x)$, 
  $\rho'_L(d') = g(\bar y)$, and $R(d,d') \in E$;
  \item[5$'$.] for all role abstractions $L'{:}R \abs L{:}q(\bar x, \bar y) \in \Omc$,
  where $q = q|_{\bar x} \uplus E_q \uplus q|_{\bar y}$,
  all homomorphisms $h$ from $q|_{\bar x}$ to $\Imc'_L$,
  and all homomorphisms $g$ from $q|_{\bar y}$ to $\Imc_L$
  such that $\{S(h(x), g(y)) \mid S(x,y) \in E_q\} \subseteq E$, 
  there are $d \in \Delta^{\Imc_{L'}}$ and
  $d' \in \Delta^{\Imc'_{L'}}$ with $\rho_L(d) = g(\bar y)$, 
  $\rho'_L(d') = h(\bar x)$, and $R(d',d) \in E$.

\end{enumerate}

We now prove the correctness of the algorithm, stated as
Lemma~\ref{lem:corr2} in the main body of the paper. We also analyze
the running time of the algorithm.

We repeat Lemma~\ref{lem:corr2} here for the reader's convenience.
\lemcorrtwo*

We split the proof of Lemma~\ref{lem:corr2} into a soundness part
(``if'' direction) and a completeness part (``only if'' direction).

\subsection{Soundness}
Assume that our algorithm returns `satisfiable' and thus satisfies Condition (i) and (ii) of Lemma~\ref{lem:corr2}.
We have to show that we can construct a model
$\Imc$ of $\Omc$ such that ${C_0}^{\Imc_{{L_0}}} \neq \emptyset$.

Note that throughout the following construction, when working with multiple mosaics we generally consider their interpretation domains
to be disjoint; this can be easily achieved by renaming. In the following, we
construct a sequence $\Imc^0,\Imc^1,\dots$ of A-interpretations and obtain 
the desired model \Imc in the limit. For bookkeeping purposes, along with $\Imc^0,\Imc^1,\dots$ we
construct a mapping $M$ that assigns
a mosaic $M(d)$ to every domain element $d \in \bigcup_{i \geq 0}\Delta^{\Imc^i}$.

We start with defining $\Imc^0 = (\prec, (\Imc^0_L)_{L \in \Abf_\Omc}, \rho^0)$ by setting
\begin{align*}
  \Imc^0_{L} &= \biguplus_{M \in \Mmc^*} \Imc^M_L \text{ for all } L \in \Abf_\Omc;\\
  \rho^0 &= \biguplus_{M \in \Mmc^*} \rho^M.
\end{align*}
and set $M(d)=M$ if $d \in \Delta^{\Imc^M}$, for all $d \in \Delta^{\Imc^0}$ and $M \in \Mmc^*$.

To construct $\Imc^{i+1} = (\prec, (\Imc^{i+1}_{L})_{L \in \Abf_\Omc}, \rho^{i+1})$ from $\Imc^{i}$ we start with 
$\Imc^{i+1} = \Imc^i$. Then consider every $L \in \Abf_\Omc$,
$A \sqsubseteq_L \exists R . B \in \Omc$ and $d \in A^{\Imc^i_L}$ with $d \notin (\exists R.B)^{\Imc^i_L}$.
By the goodness condition of mosaics, there is a mosaic $M' \in \Mmc$,
 a $d' \in B^{\Imc^{M'}_L}$ and an $M, M'$ edge candidate $E$ such that $R(d,d') \in E$. 

  For all $L \in \Abf_\Omc$, do the following:
  \begin{itemize}
    \item (disjointly) add $\Imc^{M'}_L$ to $\Imc^{i+1}_L$,
    \item (disjointly) add $\rho^{M'}$ to $\rho^{i+1}$, and
    \item add $\{(e, e') \mid R(e,e') \in E \text{ and } L(e) = L(e') = L\}$ to $R^{\Imc^{i+1}_L}$.
  \end{itemize}
  Also set $M(d) = M'$ for all $d \in \Delta^{\Imc^{M'}_{L}}$.

  \smallskip
  If an element $d \in \Delta^{\Imc^i}$ for any $i \geq 0$ was introduced as a copy of $e \in \Delta^{\Imc^M}$ for some $M \in \Mmc$,
  then we use $d^\uparrow$ to denote $e$.
As announced, we take $\Imc$ to be the limit of the constructed sequence of $\Imc^0,\Imc^1,\dots$.

What remains to show is that $\Imc$ is a model of $\Omc$ and that $C_0$ is satisfied on $L_0$.
We do this step by step starting with the basic condition that $\Imc$ is an $A$-interpretation.
After that we will show that it satisfies all CIs and RIs in $\Omc$ and lastly that it satisfies all abstraction and refinement statements in $\Omc$ and that $C_0$ is satisfied on $L_0$.

\begin{lemma}
  \label{lem:alci_abs_intrp}
  $\Imc = (\prec, (\Imc_L)_{L \in \Abf_\Omc}, \rho)$ is an A-interpretation.
\end{lemma}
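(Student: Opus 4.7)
The plan is to verify the four clauses in the definition of an A-interpretation one by one, namely: (a)~$\Abf_\Omc \subseteq \Abf$; (b)~$\prec$ induces a tree on $\Abf_\Omc$; (c)~the interpretations $\Imc_L$ are well defined, nonempty, and have pairwise disjoint domains; and (d)~$\rho$ is a partial function satisfying the non-overlap condition~$(*)$. Clause~(a) is immediate from the definition of $\Abf_\Omc$ and $\prec$ is precisely the relation used by the algorithm, which was checked at the start to form a tree; otherwise the algorithm would have returned `unsatisfiable'. Hence~(b) is immediate as well.

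For~(c), disjointness of the domains $\Delta^{\Imc_L}$ and $\Delta^{\Imc_{L'}}$ for $L \neq L'$ is inherited from the corresponding property in each mosaic (which is part of the mosaic definition) together with the observation that, both in the initial $\Imc^0$ and in every extension step, we take \emph{disjoint} unions and we only add a copy of $\Imc^{M'}_L$ into the $L$-component $\Imc^{i+1}_L$. Nonemptyness of every $\Delta^{\Imc_L}$ follows from Condition~(ii) of Lemma~\ref{lem:corr2}, which guarantees that $\Mmc^*$ contains, for every $L \in \Abf_\Omc$, a mosaic $M$ with $\Delta^{\Imc^M_L} \neq \emptyset$; that mosaic contributes a nonempty set to $\Imc^0_L$, and no elements are ever removed.

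The main point to argue is~(d), i.e.\ that $\rho$ satisfies the non-overlap condition~$(*)$: no element $d$ occurs in the ensembles of two distinct elements. I would prove this by induction on the stage $i$, showing that $\rho^i$ is a partial function satisfying~$(*)$. For $i = 0$ this follows because $\rho^0$ is the disjoint union $\biguplus_{M \in \Mmc^*} \rho^M$, each $\rho^M$ satisfies $(*)$ within its own mosaic (by the mosaic definition), and the domains involved are pairwise disjoint, so no conflict can arise across mosaics. For the step from $\rho^i$ to $\rho^{i+1}$, the only change is that we add (a disjoint copy of) $\rho^{M'}$ for some mosaic $M'$. Since the copy of $M'$ is taken with entirely fresh elements, every element occurring in the range of the newly added fragment is fresh and hence cannot already appear in the range of $\rho^i$; and within the freshly added fragment~$(*)$ holds because $M'$ is a mosaic. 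Consequently $\rho$, as the union of this chain, is a partial function satisfying~$(*)$.

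The only mildly subtle step I anticipate is making the freshness argument in~(d) fully rigorous: one must confirm that the edge-candidate construction only adds \emph{role} edges between mosaics (handled separately in the later lemmas about CI, RI, and abstraction/refinement satisfaction) and never modifies $\rho$ to reuse an existing element. As long as one keeps track of the fact that every $\rho^{M'}$ introduced during the extension ranges over the fresh disjoint copy of $M'$, condition~$(*)$ is preserved throughout, and $\Imc$ is indeed an A-interpretation.
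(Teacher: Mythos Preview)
Your proposal is correct and follows essentially the same approach as the paper's own proof: verify the tree condition via the algorithm's initial check, nonemptiness of each $\Delta^{\Imc_L}$ via Condition~(ii) of Lemma~\ref{lem:corr2}, and the non-overlap condition~$(*)$ for $\rho$ via the fact that each $\rho^M$ already satisfies it and all additions are made with fresh disjoint copies. Your treatment is in fact more thorough than the paper's, which leaves the pairwise disjointness of the domains $\Delta^{\Imc_L}$ implicit and handles~$(*)$ in a single sentence rather than by an explicit induction on the stage~$i$.
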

\begin{proof}
  We go through the three conditions of A-interpretations.
  \begin{itemize}
    \item our relation ``$\prec$'' is such that  the
    directed graph $(\AI,\{(L',L)\mid L \prec L' \})$ is a
    tree, since our algorithm did not abort;
    \item by definition, $\Delta^{\Imc^0_L}$ is non-empty for all $L \in \Abf_\Omc$; thus the same holds for $\Delta^{\Imc_L}$;
    \item all the $\rho^M$ are already refinement functions and we only add them disjointly to $\Imc$ in every step of the construction. 
    Thus every object participates in at most one ensemble.
  \end{itemize}
\end{proof}

\begin{lemma}
  \label{lem:alci_abs_soundness}
  $\Imc$ satisfies all CIs and RIs in $\Omc$.
\end{lemma}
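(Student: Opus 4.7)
The plan is to verify soundness by a case distinction on the form of the normal-form CIs and RIs, and for each case to split into whether the relevant tuple/element originates entirely within a single mosaic or arises from edges added across mosaics via edge candidates.

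First I would handle the RIs $R \sqsubseteq_L S \in \Omc$. Any pair $(d,e) \in R^{\Imc_L}$ is either inherited from some $\Imc^M_L$ with $M = M(d) = M(e)$, in which case the mosaic definition (which requires satisfaction of all RIs in \Omc) already gives $(d,e) \in S^{\Imc_L}$; or it was added in the inductive step from an edge $R(d,e) \in E$ of some edge candidate, in which case Condition~1 of edge candidates yields $S(d,e) \in E$, so $(d,e)$ is added to $S^{\Imc_L}$ as well. The assumption that \Omc is closed under transitive composition of RIs and under inversion $R \sqsubseteq_L S \Rightarrow R^- \sqsubseteq_L S^-$ lets us treat the two orientations uniformly.

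Next I would go through the six CI forms in the normal form. The purely local ones ($\top \sqsubseteq_L A$, $A_1 \sqcap A_2 \sqsubseteq_L A$, $A \sqsubseteq_L \neg B$, $\neg B \sqsubseteq_L A$) involve only the concept labels of a single element, so they are satisfied by Condition~1 on mosaics (the $A$-interpretation embedded in each mosaic satisfies these CIs) together with the fact that the construction never changes concept-name extensions of already-present elements. For $A \sqsubseteq_L \exists R.B$, pick $d \in A^{\Imc_L}$: at the step in the construction when $d$ was processed, either $d$ was already in $(\exists R.B)^{\Imc^i_L}$, or by the goodness condition~$(*)$ a mosaic $M'$, an element $d' \in B^{\Imc^{M'}_L}$, and an edge candidate $E$ with $R(d,d') \in E$ exist, and the construction inserts $d'$ and the edge accordingly. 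Finally, for $\exists R.B \sqsubseteq_L A$, consider $(d,d') \in R^{\Imc_L}$ with $d' \in B^{\Imc_L}$: if $M(d) = M(d')$, mosaic satisfaction gives $d \in A^{\Imc_L}$; if $M(d) \neq M(d')$, the edge was added from some edge candidate $E$, and Condition~2 (resp.\ Condition~$2'$ for inverse roles, using our closure of role inclusions under inversion) of edge candidates forces $d \in A^{\Imc_L}$.

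The main obstacle is bookkeeping the two orientations of cross-mosaic edges: because \ALCHI has inverse roles, an edge $R(d,d') \in E$ interacts with inclusions involving $R^-$ both at $d$ and at $d'$, and one must rely on Conditions~1, 2, and $2'$ of edge candidates together with the syntactic closure of \Omc under $R \sqsubseteq_L S \Rightarrow R^- \sqsubseteq_L S^-$ to cover all cases without gap. Once this is accounted for, nothing beyond routine verification remains, and the lemma follows.
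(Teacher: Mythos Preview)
Your proposal is correct and follows essentially the same approach as the paper: the same case split on RI versus the six normal-form CI shapes, and within each case the same within-mosaic versus cross-mosaic dichotomy, invoking Condition~1 of mosaics for the former and Conditions~1, 2, and $2'$ of edge candidates (together with the goodness condition~$(*)$) for the latter. Your extra remarks about the closure of \Omc under $R \sqsubseteq_L S \Rightarrow R^- \sqsubseteq_L S^-$ are a helpful elaboration but do not change the structure of the argument.
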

\begin{proof}
  First, consider any role inclusion $R \sqsubseteq_L S \in \Omc$.
  For an edge $(d,e) \in R^{\Imc_L}$ there are two cases.
  The edge might have been part of a mosaic.
  Then Condition~1 of mosaics and our construction imply $(d,e) \in S^{\Imc_L}$.
  Otherwise $(d,e)$ was added to $R^{\Imc_L}$ in Step~1 of the construction of $\Imc$
  and then Condition~1 of edge candidates implies $(d,e) \in S^{\Imc_L}$.

  Now we go through each form a CI in $\Omc$ can have.
  \begin{itemize}
    \item $\top \sqsubseteq_L A, A_1 \sqcap A_2 \sqsubseteq_L A, A \sqsubseteq_L \neg B, \neg B \sqsubseteq_L \neg A$, 
    are satisfied because of Condition~1 of mosaics and the construction of $\Imc$.
    \item If $A \sqsubseteq_L \exists R.B \in \Omc$ and $d \in A^{\Imc_L}$ then Step~$1$ in the construction of $\Imc$ implies
    that there is a $d' \in B^{\Imc_L}$ with $(d,d') \in R^{\Imc_L}$.
    \item If $\exists R.B \sqsubseteq_L A \in \Omc$ and $(d,d') \in R^{\Imc_L}$ with $d' \in B^{\Imc_L}$, then there are two cases.  
    First, consider the case that $M(d) = M(d')$. 
    Then Condition~1 of mosaics and our construction imply $d \in A^{\Imc_L}$.
    If $M(d) \neq M(d')$ then $(d,d')$ must have been added to $R^ {\Imc_L}$ in Step~$1$ of constructing $\Imc$ which 
    implies $d \in A^{\Imc_L}$ because of Conditions~2 and 2' of edge candidates.
  \end{itemize}  
\end{proof}

Before showing that $\Imc$ also satisfies all abstractions and refinements in $\Omc$
we show two intermediary lemmas.
Intuitively they say that the $L$-ensembles that are the result of matching abstraction CQs are never part of multiple mosaics.

\begin{lemma}
  \label{lem:alci_all_ar_conc_abs_only_in_mosaic}
  For all concept abstractions $L'{:}A \abs L{:}q(\bar x)$ in \Omc and
  all homomorphisms $h$ from $q$ to $\Imc_L$, 
  $M(d) = M(d')$ for all $d, d' \in \ran(h)$.
\end{lemma}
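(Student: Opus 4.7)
The plan is to argue by contradiction, assuming there exist a concept abstraction $L'{:}A \abs L{:}q(\bar x) \in \Omc$ and a homomorphism $h \colon q \to \Imc_L$ whose image meets at least two distinct mosaic copies. The key structural observation to establish first is that the graph whose nodes are the mosaic copies occurring in $\Imc$ and whose edges are the inter-mosaic edges added by the construction forms a forest: each construction step introduces a new mosaic copy as a fresh disjoint leaf attached to a single existing copy via exactly one edge candidate, so no cycle can ever arise. Because $q$ is connected (as CQs in abstraction statements are required to be), the set $T$ of mosaic copies visited by $h$ lies in a single tree of this forest and is itself a connected subtree.

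The core of the argument is the following auxiliary claim, which I would prove by induction on $|T'|$: for every mosaic $M$, every $(q', h_0) \in f^M_{\mn{in}}(L)$, and every homomorphism $g \colon q' \to \Imc_L$ extending $h_0$ whose image lies in a subtree $T'$ of mosaic copies containing $M$, we derive a contradiction. For the base case $|T'| = 1$, the homomorphism $g$ lies entirely in $\Imc^M_L$, so applying Condition~4 of goodness for $M$ with $V = \mn{var}(q')$ demands the existence of a component of $q'$ w.r.t.\ $V$; but components correspond bijectively to MCCs of $q'|_\emptyset$ and there are none, so goodness of $M$ forbids such a $g$ from existing.

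For the inductive step with $|T'| \geq 2$, I would apply Condition~4 of goodness for $M$ with $V = g^{-1}(\Delta^{\Imc^M_L}) \cap \mn{var}(q')$ to obtain a component $p = E_p \uplus p_0$ of $q'$ w.r.t.\ $V$ such that $(p, g|_{V \cap \mn{var}(p)}) \in f^M_{\mn{out}}(L)$. Since $p_0$ is connected and $g(\mn{var}(p_0))$ avoids $\Imc^M_L$ while remaining inside the mosaic-copy forest, it must lie in a single connected component of $T' \setminus \{M\}$; let $M'$ be the unique vertex of that component adjacent to $M$. Each edge of $E_p$ is then realised through the single edge candidate $E$ linking $M$-copy and $M'$-copy, so Condition~3 of edge candidates yields $(p_0, g|_{\overline V \cap \mn{var}(E_p)}) \in f^{M'}_{\mn{in}}(L)$, and the restriction of $g$ to $\mn{var}(p_0)$ is an extending homomorphism whose image lies in a strictly smaller subtree; the inductive hypothesis concludes.

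To finish the lemma, I make the analogous opening move using Condition~2 of goodness (available precisely because $q$ arises from a concept abstraction) in place of Condition~4: pick a leaf $M_0$ of $T$ with unique $T$-neighbour $M_1$, set $V_0 = h^{-1}(\Delta^{\Imc^{M_0}_L})$, and obtain a component $p = E_p \uplus p_0$ of $q$ w.r.t.\ $V_0$ with $(p, h|_{V_0 \cap \mn{var}(p)}) \in f^{M_0}_{\mn{out}}(L)$. The leaf property of $M_0$ lets Condition~3 push this across the $M_0, M_1$-edge candidate to $(p_0, h|_{\overline{V_0} \cap \mn{var}(E_p)}) \in f^{M_1}_{\mn{in}}(L)$, and the auxiliary claim applied to the subtree $T \setminus \{M_0\}$ delivers the contradiction. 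The main obstacle is the tree-structure argument that isolates a single neighbouring mosaic at each inductive step; without the forest property of the mosaic-copy graph, the boundary edges of a component could split across several neighbours of $M$ and Condition~3 would no longer deliver a single target mosaic into which to propagate the forbidden query.
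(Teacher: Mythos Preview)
Your argument is correct and follows the same strategy as the paper: start from Condition~2 of mosaics, then alternately apply Condition~3 (or~3$'$) of edge candidates and Condition~4 of mosaics to push a forbidden-query pair through neighbouring mosaic copies until no further step is possible, which gives the contradiction. The only real difference is the termination measure---the paper argues that the residual query strictly shrinks at each hop (bounded by $||\Omc||$ hops), whereas you induct on the size of the visited subtree of mosaic copies; your explicit use of the forest structure of the mosaic-copy graph is exactly what justifies the step the paper glosses over when it asserts ``$M(h(x)) = M(h(y))$ for all $x,y \in V_1$, by the definition of components.''
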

\begin{proof}
  Proof by contradiction.
  Assume there is a homomorphsim $h$ from $q$ to $\Imc_L$ such that for
  some elements $d, d' \in \ran(h)$, we have $M(d) \neq M(d')$.
  Let $M(d) = M_0$ and $V_0 = \{x \mid x \in \mn{var}(q) \text{ and } M(h(x)) = M_0\}$ be the set of variables in $q$ that are matched into $M_0$.
  Note that by our assumption $V_0$ is neither empty nor equal to $\mn{var}(q)$.
  Thus Condition~2 of mosaics applies, which states that there is a component $p_0$ of $q$ w.r.t.\ $V_0$
  with $p_0 = E_{p_0} \uplus p_0|_{\overline{V_0} \cap \mn{var}(p_0)}$ and
  $(p_0, h|_{V_0 \cap \mn{var}(p_0)}) \in f_\mn{out}^{M_0}(L)$.

  Let $V_1 = \overline{V_0} \cap \mn{var}(E_{p_0})$.
  Then $V_1$ is non-empty and $M(h(x)) \neq M_0$ for all $x \in V_1$, by our assumption.
  Furthermore, $M(h(x)) = M(h(y))$ for all $x,y \in V_1$, by the definition of components.
  Let $M_1 = M(h(x))$ denote this mosaic that is the same for all $x \in V_1$.
  The construction of $\Imc$ implies that in our algorithm there was an $M_0, M_1$ edge candidate $E$ with 
  $R(h(x), h(y)) \in E$ for all $R(x,y) \in E_{p_0}$.
  Condition~3 of edge candidates then yields $(p_0|_{\overline{V_0} \cap \mn{var}(p_0)}, h|_{V_1}) \in f_\mn{in}^{M_1}(L)$.
  For the rest of the proof let $p_1 = p_0|_{\overline{V_0} \cap \mn{var}(p_0)}$.

  Now we can apply Condition~4 of mosaics on $(p_1, h|_{V_1})$.
  Let $V_1' = \{x \mid x \in \mn{var}(p_1) \text{ and } M(h(x)) = M_1\}$.
  There are now two cases.
  If $V_1' = \mn{var}(p_1)$, then we immediately contradict Condition~4 of mosaics because w.r.t.\ $\mn{var}(p_0)$ there can be no component of $p_1$. 
  If $V_1' \subsetneq \mn{var}(p_1)$, then by Condition~4 of mosaics
  there is a component $p_1'$ with $p_1' = E_{p_1'} \uplus p_1'|_{\overline{V_1} \cap \mn{var}(p_1')}$ of $p_1$ w.r.t.\ $V_1'$ such that $(p_1', h|_{V_1' \cap \mn{var}(p_1')}) \in f_\mn{out}^{M_1}(L)$.

  The next step would be to choose $V_2$ as $V_2 = \overline{V_1} \cap \mn{var}(E_{p_1'})$ and repeat the back and forth between $f_\mn{out}$ and $f_\mn{in}$ of mosaics
  we just showed.
  The only difference is that from $M_2$ and onwards we also need Condition~3' of edge candidates, 
  since for example $p_1'$ might match back into $M_0$ via the edges in $E$ (which means $M_2$ would be $M_2 = M_0$).

  The number of iterations is, however, restricted by some $i \leq ||\Omc||$, since the size of the query is bound by $||\Omc||$ and
  consequently at some point we get a forbidden ingoing query $(p_{i}, h|_{V_i}) \in f_\mn{in}^{M_i}(L)$
  such that $h$ matches all variables of $p_{i}$ into $\Imc^{M_i}_L$.
  This contradicts Condition~4 of mosaics and thus every case leads to a contradiction of our assumption.
\end{proof}

\begin{lemma}
  \label{lem:alci_all_ar_role_abs_only_in_mosaic}
  For all role abstractions $L'{:}R \abs L{:}q(\bar x, \bar y)$ in \Omc and
  all homomorphisms $h$ from $q$ to $\Imc_L$,
  \begin{enumerate}
    \item $M(d) = M(d')$ for all $d, d' \in \ran(h|_{\bar x})$ and
    \item $M(e) = M(e')$ for all $e, e' \in \ran(h|_{\bar y})$.
  \end{enumerate}
\end{lemma}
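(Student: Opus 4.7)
My plan is to prove both parts of the lemma by contradiction, reusing the iterative escape argument from the proof of Lemma~\ref{lem:alci_all_ar_conc_abs_only_in_mosaic} almost verbatim, but invoking Condition~3 of mosaics (which concerns role abstractions) in place of Condition~2. By symmetry it suffices to handle part~1; part~2 follows by swapping the roles of $\bar x$ and $\bar y$ throughout.

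So suppose for contradiction that there is a role abstraction $L'{:}R \abs L{:}q(\bar x, \bar y)$ in $\Omc$, a homomorphism $h$ from $q$ to $\Imc_L$, and elements $d, d' \in \ran(h|_{\bar x})$ with $M(d) \neq M(d')$. Writing $M_0 = M(d)$, set
\[
  V_0 = \{x \in \mn{var}(q) \mid M(h(x)) = M_0\}.
\]
To apply Condition~3 of mosaics I must verify that $V_0$ is a non-empty strict subset of $\mn{var}(q)$ with the additional properties $V_0 \neq \bar x$ and $V_0 \neq \bar y$. Non-emptiness follows from $h^{-1}(d) \in V_0$; strictness from $h^{-1}(d') \notin V_0$; the inequality $V_0 \neq \bar x$ holds because $h^{-1}(d) \in \bar x \cap V_0$ while $h^{-1}(d') \in \bar x \setminus V_0$; and $V_0 \neq \bar y$ holds because $V_0$ contains a variable in $\bar x$ and $\bar x \cap \bar y = \emptyset$. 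Condition~3 thus yields a component $p_0 = E_{p_0} \uplus p_0|_{\overline{V_0} \cap \mn{var}(p_0)}$ of $q$ w.r.t.~$V_0$ such that $(p_0, h|_{V_0 \cap \mn{var}(p_0)}) \in f_\mn{out}^{M_0}(L)$.

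From here the argument mirrors the concept-abstraction proof exactly. Setting $V_1 = \overline{V_0} \cap \mn{var}(E_{p_0})$, all variables in $V_1$ must be mapped by $h$ into a single neighbouring mosaic $M_1$ by the connectedness built into the definition of components; the edges of $E_{p_0}$ were installed during the construction of $\Imc$ by some $M_0, M_1$-edge candidate $E$, and Condition~3 (or~3', depending on the orientation of the edges) of edge candidates then transports the forbidden outgoing query into a forbidden incoming query $(p_1, h|_{V_1}) \in f_\mn{in}^{M_1}(L)$, where $p_1 = p_0|_{\overline{V_0} \cap \mn{var}(p_0)}$. I then iterate: Condition~4 of mosaics places no side condition on the chosen subset $V$, so the back-and-forth between $f_\mn{out}$ and $f_\mn{in}$ across successive mosaics proceeds unimpeded, each step strictly consuming unmatched variables of $q$. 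After at most $\|\Omc\|$ iterations I arrive at a pair $(p_i, h|_{V_i}) \in f_\mn{in}^{M_i}(L)$ such that $h$ already maps every variable of $p_i$ into $\Imc_L^{M_i}$, contradicting Condition~4.

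The only genuinely new ingredient compared with Lemma~\ref{lem:alci_all_ar_conc_abs_only_in_mosaic} is establishing that the initial witness set $V_0$ avoids the two forbidden values $\bar x$ and $\bar y$ required by Condition~3 of mosaics; I expect this to be the sole delicate point, and the hypothesis that $d, d' \in \ran(h|_{\bar x})$ together with the disjointness of $\bar x$ and $\bar y$ is exactly what is needed to rule them out.
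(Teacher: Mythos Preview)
Your proposal is correct and follows exactly the approach the paper takes: the paper's own proof is a single sentence stating that both points are proven analogously to Lemma~\ref{lem:alci_all_ar_conc_abs_only_in_mosaic} using Condition~3 of mosaics in place of Condition~2. You have in fact supplied more detail than the paper, in particular the explicit verification that the initial set $V_0$ avoids the two excluded values $\bar x$ and $\bar y$ required by Condition~3, which the paper leaves entirely implicit.
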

\begin{proof}
  Point~$1$ and Point~$2$ can be proven analogously to Lemma~\ref{lem:alci_all_ar_conc_abs_only_in_mosaic}
  using Condition~3 of mosaics instead of Condition~2.
\end{proof}

Now we are ready to prove the following lemma which establishes the soundness of the algorithm.
The main thing left to prove is that all abstractions and refinements of $\Omc$ are satisfied.

\begin{lemma}
  $\Imc$ is a model of $\Omc$ with ${C_0}^{\Imc_{{L_0}}} \neq \emptyset$.
\end{lemma}
\begin{proof}
  We just proved in Lemma~\ref{lem:alci_abs_intrp} that $\Imc$ is an A-Intepretation and in Lemma~\ref{lem:alci_abs_soundness} that all CIs and RIs are satisfied. 
  Since by Condition~(i) of acceptance for our algorithm there is a mosaic $M \in \Mmc^*$ with
  $C_0^{\Imc^{M}_{L_0}}$, our construction of $\Imc^0$ implies that ${C_0}^{\Imc^0_{{L_0}}} \neq \emptyset$
  and by the construction of $\Imc$ this holds true for $\Imc$ as well.

  Condition~1 of mosaics implies that each mosaic satisfies all concept refinements 
  and since the satisfiability of concept refinements is independent of any roles we add,
   our construction of $\Imc$ naturally implies that
  $\Imc$ also satisfies all concept refinements.

  
  Let $(d_1, d_2) \in q_R(\Imc_{L})$ for some role refinement
  $L'{:}q(\bar x, \bar y) \rfn L{:}q_R(x, y)$ in $\Omc$ with
  $q = q|_{\bar x} \uplus E_q \uplus q|_{\bar y}$. 
  There are two cases.
  The $d_1^\uparrow$ and $d_2^\uparrow$ can be inside the same mosaic, that is $M(d_1) = M(d_2) = M$.
  Here Condition~1 of mosaics yields that the role refinement is satisfied in $M$ and thus also in $\Imc$ by our construction of $\Imc$. 
  
  In the second case $M(d_1) = M$, $M(d_2) = M'$ and $M \neq M'$.
  The definition of $\Imc$ then implies that there is an edge candidate $E$ between $M$ and $M'$ with 
  $R(d_1^\uparrow, d_2^\uparrow) \in E$ or $R^-(d_2^\uparrow, d_1^\uparrow) \in E$.
  Let us assume $R(d_1^\uparrow, d_2^\uparrow) \in E$ as the other case is analogous.

  By Condition~4a to 4d of edge candidates, there are mappings 
  $h: \bar x \mapsto \rho^M_L(d_1^\uparrow)$ and 
  $h': \bar y \mapsto \rho^{M'}_L(d_2^\uparrow)$ such that 
  $\bar x \bar y \mapsto \rho^M_L(d_1^\uparrow)\rho^{M'}_L(d_2^\uparrow)$ is a homomorphism from
  $q$ to $\Imc_{\cup}$ where $\Imc_{\cup}$ is the
  union of the interpretations of the two mosaics and the edges from the edge candidate defined as
  \begin{align*}
    \Delta^{\Imc_\cup} &= \Delta^{\Imc_L^M} \uplus \Delta^{\Imc_L^{M'}};\\
    A^{\Imc_\cup} &= A^{\Imc_L^{M}} \uplus A^{\Imc_L^{M'}};\\
    R^{\Imc_\cup} &= R^{\Imc_L^{M}} \uplus R^{\Imc_L^{M'}} \uplus \{(h(x), (h'(y)) \mid R(x,y) \in E_q\};
  \end{align*}
  where $A$ ranges over concept names and $R$ over roles.

  Since $\Imc_\cup$ is part of $\Imc_L$ and both $\rho^M$ and $\rho^{M'}$ are part of $\rho$ we have
   $(\rho_{L'}(d_1), \rho_{L'}(d_2)) \in q(\Imc_{L'})$, as required. 
  If earlier in the proof we would have had $R^-(d_2^\uparrow, d_1^\uparrow) \in E$ instead of 
  $R(d_1^\uparrow, d_2^\uparrow) \in E$ then we use Condition~4' of edge candidates instead of Condition~4.

  Let $h: \bar x \mapsto \bar e$ be a homomorphism from $q$ to $\Imc_L$ for some concept abstraction
  $L'{:}A \abs L{:}q(\bar x)$ in $\Omc$.
  Lemma~\ref{lem:alci_all_ar_conc_abs_only_in_mosaic} tells us that $h$ maps all variables in $q$ to the domain elements of a single mosaic $M$.
  Now Condition~1 of mosaics implies that the concept abstraction is satisfied in $M$ and thus also in $\Imc$ by our construction of $\Imc$ 

  Let $h: \bar x\bar y \mapsto \bar e_1\bar e_2$ be a homomorphism from $q$ to $\Imc_L$ for some role abstraction
  $L'{:}R \abs L{:}q(\bar x, \bar y)$ in $\Omc$.  
  We use $h^\uparrow$ to denote replacing every element $d \in \ran(h)$ with $d^\uparrow$.
  Lemma~\ref{lem:alci_all_ar_role_abs_only_in_mosaic} tells us that 
  $h^\uparrow|_{\bar x}$ maps all variables in $q|_{\bar x}$ to the domain elements of a single mosaic $M$ and
  $h^\uparrow|_{\bar y}$ maps all variables in $q_{\bar y}$ to the domain elements of a single mosaic $M'$.
  We consider two cases.
  
  If $M = M'$ then Condition~1 of mosaics implies that the role abstraction is satisfied in $M$ 
  and thus also in $\Imc$ by our construction of $\Imc$.
  
  If $M \neq M'$ then by the construction of $\Imc$, there is an edge candidate $E$ between $M$ and $M'$ such that
  $h^\uparrow$ is a homomorphism from $q$ to $\Imc_\cup$
  where we construct $\Imc_\cup$ as in the role refinement case.
  Thus, either Condition~5 or Condition~5' of edge candidates must apply.
  If Condition~5 applies, then there are $d \in \Delta^{\Imc_L^M}$ and $d' \in \Delta^{\Imc_L^{M'}}$ 
  with $\rho_L^M(d) = \bar e_1^\uparrow$, $\rho_L^{M'}(d') = \bar e_2^\uparrow$, 
  and $R(d,d') \in E$. Again it is easy to verify that our construction of $\Imc$ then satisfies $L'{:}R \abs L{:}q(\bar x, \bar y)$
  since we copied all the components of the mosaics and edges in the edge candidate.
  If Condition~5' of edge candidates applies the proof works analogously.
\end{proof}


\subsection{Completeness}
Assume that there is a model $\Imc = (\prec,(\Imc_L)_{L \in \Abf_\Omc},\rho)$ of 
$\Omc$ with a domain element $d_0 \in {C_0}^{\Imc_{{L_0}}}$.
We have to show that our algorithm returns `satisfiable'.

We assume w.l.o.g.\ that $\Imc$ does not contain any \emph{unnecessarily large ensemble},
that is, an $L$-ensemble $\bar e$ for any $L \in \Abf_\Omc$ with $||\bar e|| > ||\Omc||$.
We can assume this since any query in $\Omc$ is naturally of size at most $||\Omc||$.

For brevity we define $\mu(d) = \{e \mid e \text{ element in $\rho_L(d)$ for any $L \in \Abf_\Omc$}\}$ for all $d \in \Delta^\Imc$.
For all $d \in \Delta^\Imc$ we define $\mu_1(d) = \{d\}$ and exhaustively compute 
$$ \mu_{i+1}(d) = 
\mu_{i}(d) \cup
\bigcup_{e \in \mu_i(d)} \{e' \mid e \in \mu_i(e')\} \cup
\bigcup_{e \in \mu_i(d)} \mu_i(e)
$$
We use $\mu^*$ to denote the function at which this process stabilizes.
Thus $\mu^*(d)$ contains all elements that can be reached by iterated refinements via $\rho_L$
and abstractions via the inverse of $\rho_L$. 
Intuitively, these are organized in a tree whose elements are the members of $\mu^*(d)$ and 
with an edge $(e,e')$ if $e'$ occurs in $\rho_L(e)$.
Also note that because there are no unnecessarily large ensembles, $|\mu^*(d)| \leq ||\Omc||^{||\Omc||}$.


For any $d \in \Delta^\Imc$,
we define a mosaic $M_d$ of $d$ \emph{in} $\Imc$.
The first four components of $M_d$ are as follows, for all $L \in \Abf_\Omc$:
\begin{alignat*}{2}
  \Delta^{\Imc_L^{M_d}}     &= \{e \mid e \in {\Delta}^{\Imc_L} \cap \mu^*(d)\};          &&\\
  A^{\Imc^{M_d}_L}          &= \{e \mid e \in A^{\Imc_L} \cap \mu^*(d)\};                  &&\\
  r^{\Imc^{M_d}_L}          &= \{(e_1, e_2) \mid e_1, e_2 \in {\Delta}^{\Imc_L}           && \cap \mu^*(d) \text{ and }\\
                            &\quad \;\;\; (e_1,e_2) \in r^{\Imc_L}\}   ;                   &&\\
  \rho^{M_d}_{L}(e)        &= \bar e \text{ with } \rho_L(e) = \bar e                   &&\text{for all $e \in \Delta^{\Imc^{M_d}}$;}
\end{alignat*}
where $A$ ranges of concept names and $r$ over role names in $\mn{sig}(\Omc)$.
What remains is to define $f_{\mn{in}}^{M_d}$ and $f_{\mn{out}}^{M_d}$.

For all concept abstractions
$L'{:}A \abs L{:}q(\bar x)$ in $\Omc$, all
non-empty $V \subsetneq \mn{var}(q)$, all homomorphisms $h$
from $q|_V$ to $\Imc_L$,
and all components $p$ of $q$ w.r.t.\ $V$ such that $p = E_p \uplus p|_{\overline{V}}$
consider the following cases:
\begin{enumerate}
  \item if $\ran(h) \subseteq \Delta^{\Imc^{M_d}_L}$ and $h$ can be extended to a homomorphism from $q$ to $\Imc_L$,
    then add $(p, h|_{V \cap \mn{var}(p)})$ to $f_\mn{out}^{M_d}(L)$;
  \item if $\ran(h) \subseteq \Delta^{\Imc^{M_d}_L}$ and $h$ cannot be extended to a homomorphism from $ q|_V \cup p$ to $\Imc_L$,
    then add $(p, h|_{V \cap \mn{var}(p)})$ to $f_\mn{out}^{M_d}(L)$;
  \item for all homomorphisms $g$ from $q|_V \cup E_p$ to $\Imc_L$ that extend $h$
    but cannot be extended to a homomorphism from $q|_V \cup p$ to $\Imc_L$,
    if $\ran(g|_{\overline V \cap \mn{var}(E_p)}) \subseteq \Delta^{\Imc^{M_d}_L}$, 
    then add $(p|_{\overline V}, g|_{\overline V \cap \mn{var}(E_p)} )$ to $f_\mn{in}^{M_d}(L)$.
\end{enumerate}

For all role abstractions
$L'{:}R \abs L{:}q(\bar x, \bar y)$ in $\Omc$, all
non-empty $V \subsetneq \mn{var}(q)$ with $V \neq \mn{set}(\bar x)$ and $V \neq \mn{set}(\bar y)$, 
all homomorphisms $h$ from $q|_V$ to $\Imc_L$,
and all components $p$ of $q$ w.r.t.\ $V$ such that $p = E_p \uplus p|_{\overline{V}}$
consider the following cases:
\begin{enumerate}
  \setcounter{enumi}{3}
  \item if $\ran(h) \subseteq \Delta^{\Imc^{M_d}_L}$ and $h$ can be extended to a homomorphism from $q$ to $\Imc_L$,
    then add $(p, h|_{V \cap \mn{var}(p)})$ to $f_\mn{out}^{M_d}(L)$;
  \item if $\ran(h) \subseteq \Delta^{\Imc^{M_d}_L}$ and $h$ cannot be extended to a homomorphism from $ q|_V \cup p$ to $\Imc_L$,
    then add $(p, h|_{V \cap \mn{var}(p)})$ to $f_\mn{out}^{M_d}(L)$;
  \item for all homomorphisms $g$ from $q|_V \cup E_p$ to $\Imc_L$ that extend $h$
    but cannot be extended to a homomorphism from $q|_V \cup p$ to $\Imc_L$,
    if $\ran(g|_{\overline V \cap \mn{var}(E_p)}) \subseteq \Delta^{\Imc^{M_d}_L}$, 
    then add $(p|_{\overline V}, g|_{\overline V \cap \mn{var}(E_p)})$ to $f_\mn{in}^{M_d}(L)$.
\end{enumerate}


Strictly speaking, the tuple $M_d$ defined above is not a mosaic as the domain elements 
are not from the fixed set $\Delta$. Since, however, $|\mu^*(d)| \leq ||\Omc||^{||\Omc||}$, this can 
easily be achieved by renaming. 
If we renamed an element $d \in \Delta^{\Imc}$ to $e \in \Delta^{\Imc^M}$, then we use $e^\downarrow$ to denote $d$.
We might also use this notation for tuples $\bar e$, where $\bar e^\downarrow$ implies that $\cdot^\downarrow$ is applied to all elements in the tuple,
or functions $f$, where $f^\downarrow$ is the function obtained by applying $\cdot^\downarrow$ to all elements in $\ran (f)$.

Now that we have defined how we read mosaics out of the model we first need to prove that they are indeed mosaics 
and second that they don't get eliminated in the mosaic removal subroutine of our algorithm. 
If we proved these two things, then showing that our algorithm must returns `satisfiable' will be trivial.
\begin{lemma}
  \label{lem:alchi_ub_md_mosaic}
  $M_d$ is a mosaic.
\end{lemma}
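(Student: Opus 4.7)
The plan is to verify the four numbered conditions of the mosaic definition one at a time, after first checking the structural requirements. The structural part is essentially bookkeeping: the set $\mu^*(d)$ is, by construction, closed under $\rho$ in both directions (i.e., if $e \in \mu^*(d)$ and $\rho_L(e) = \bar e$, then every element of $\bar e$ is in $\mu^*(d)$, and vice versa), so $\rho^{M_d}$ is a well-defined partial function whose range stays inside $(\Imc_L^{M_d})_{L \in \Abf_\Omc}$. The no-overlap condition is inherited from $\Imc$, and the length bound on tuples in $\ran(\rho^{M_d})$ follows from the assumption that $\Imc$ contains no unnecessarily large ensembles. The renaming into $\Delta$ is possible because $|\mu^*(d)| \leq \|\Omc\|^{\|\Omc\|}$.

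For Condition~1, everything except the excluded $\exists$-form CIs transfers from $\Imc$. CIs of the other forms, RIs, and concept refinements hold pointwise because $\mu^*(d)$ is closed under $\rho$. For concept and role abstractions, suppose $\bar e \in q(\Imc_L^{M_d})$; then $\bar e \in q(\Imc_L)$ as well, so by the semantics there is an abstractor $e'$ in $\Imc_{L'}$, and closure of $\mu^*(d)$ under the inverse of $\rho$ places $e'$ inside $\Delta^{\Imc_{L'}^{M_d}}$, as required. For Conditions~2 and~3, let $h$ be a homomorphism from $q|_V$ to $\Imc_L^{M_d}$; then $\ran(h) \subseteq \mu^*(d)$, so either $h$ extends to a homomorphism from $q$ to $\Imc_L$, in which case case~1 (resp.~case~4) of the $f_\mn{out}^{M_d}$ construction places every component of $q$ w.r.t.~$V$, restricted appropriately, into $f_\mn{out}^{M_d}(L)$; or it does not, and then there must be at least one component $p$ to which it fails to extend, in which case case~2 (resp.~case~5) inserts $(p, h|_{V \cap \mn{var}(p)})$.

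The genuinely delicate step is Condition~4, and this is where I expect the main obstacle. Take $(q', h') \in f_\mn{in}^{M_d}(L)$; by construction it comes from case~3 or case~6, so $q' = p|_{\overline V}$ for some component $p = E_p \uplus p|_{\overline V}$ of an original abstraction CQ $q$ w.r.t.~some non-empty $V \subsetneq \mn{var}(q)$, and $h'$ is the restriction to $\overline V \cap \mn{var}(E_p)$ of a homomorphism $g$ from $q|_V \cup E_p$ to $\Imc_L$ that does \emph{not} extend to $q|_V \cup p$. Given $V' \subseteq \mn{var}(q')$ and a homomorphism $g'$ from $q'|_{V'}$ to $\Imc_L^{M_d}$ extending $h'$, combine $g$ and $g'$ into a single homomorphism $g''$ from $q|_{V''}$ to $\Imc_L$, where $V'' = V \cup (\overline V \cap \mn{var}(E_p)) \cup V'$. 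Since $g'' \supseteq g$ and $g$ does not extend to the whole of $q$, neither does $g''$, so case~2 (resp.~case~5) of the $f_\mn{out}^{M_d}$ construction applies: some component $p^*$ of $q$ w.r.t.~$V''$ yields an entry in $f_\mn{out}^{M_d}(L)$. The key verification is that $p^*$ is precisely (the image under the obvious correspondence of) a component of $q'$ w.r.t.~$V'$, so that the entry produced has the right shape $(p^*, g''|_{V'' \cap \mn{var}(p^*)})$ relative to $q'$. This correspondence is where I expect to spend most of the care: one must argue that the MCC decomposition of $q|_{\overline{V''}}$ is obtained from the MCC decomposition of $q'|_{\overline{V'}}$ (inside $p$) together with components lying entirely on the $q|_{\overline V} \setminus p$ side, and that only the former can be relevant for an entry whose handle map matches $g''|_{V' \cap \mn{var}(p^*)}$. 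Once this combinatorial bookkeeping is in place, Conditions~2--4 follow uniformly.
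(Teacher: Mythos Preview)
Your overall strategy matches the paper's, and your treatment of the structural requirements and of Conditions~1--3 is correct. The gap is in Condition~4.

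You propose to glue $g$ and $g'$ into a homomorphism $g''$ on $V'' = V \cup (\overline V \cap \mn{var}(E_p)) \cup V'$ and then invoke Case~2 of the $f_\mn{out}^{M_d}$ construction with $V''$ and $g''$. But Case~2 carries the side condition $\ran(h) \subseteq \Delta^{\Imc_L^{M_d}}$, and your $g''$ does not meet it: on $V$ it agrees with $g|_V$, and $g$ was an arbitrary homomorphism into $\Imc_L$ coming from Case~3, with no reason for $\ran(g|_V)$ to lie in $\mu^*(d)$. So Case~2 simply does not fire for $g''$, and the component $p^*$ you want is never placed into $f_\mn{out}^{M_d}(L)$ along this route.

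The repair is to stay with $V'$ and $g'$ (which \emph{do} satisfy the range condition) and argue non-extendability indirectly, exactly as the paper does. From the fact that $g$ cannot be extended to $q|_V \cup p$ one deduces that $g'$ cannot be extended to all of $q_0 = p|_{\overline V}$: otherwise $g \cup (\text{that extension})$ would extend $g$ to $q|_V \cup p$. Hence there is a component $p_0$ of $q_0$ w.r.t.\ $V'$ to which $g'$ fails to extend. The remaining point---which you correctly anticipated as the combinatorial crux---is that $p_0$ is \emph{also} a component of the original $q$ w.r.t.\ $V'$. This uses that $V' \supseteq \mn{dom}(h') = \overline V \cap \mn{var}(E_p)$: every edge of $E_p$ has its $\overline V$-endpoint in $V'$, so no edge of $q$ outside $q_0$ touches $\mn{var}(p_0) \setminus V'$, and the MCC underlying $p_0$ in $q_0|_{\mn{var}(q_0)\setminus V'}$ is already an MCC of $q|_{\mn{var}(q)\setminus V'}$. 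Now Case~2 applies with $h = g'$, yielding $(p_0, g'|_{V' \cap \mn{var}(p_0)}) \in f_\mn{out}^{M_d}(L)$, which is precisely what Condition~4 demands.
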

\begin{proof}
We go through all seven Conditions of mosaics.
Conditions~1 follows directly from $\Imc$ being a model
and the definition of $M_d$.


For Condition~2 consider a concept abstraction $L'{:}A \abs L{:}q(\bar x)$ in $\Omc$, 
a non-empty $V \subsetneq \mn{var}(q)$,
and a homomorphism $h$ from $q|_V$ to $\Imc^{M_d}_L$.
There are now two cases.
The first one is that $h^\downarrow$ can be extended to a homomorphism from $q$ to $\Imc_L$.
Then $(p, h|_{V \cap \mn{var}(p)}) \in f_\mn{out}^{M_d}(L)$ for all components $p$ of $q$ w.r.t.\ $V$,
according to Case~1 of the $M_d$ construction.

If $h^\downarrow$ can not be extended to a homomorphism from $q$ to $\Imc_L$ then there has to be
at least one component $p$ of $q$ w.r.t.\ $V$ such that $h^\downarrow$ cannot be extended to a homomorphism from $q|_V \cup p$ to $\Imc_L$.
Case~2 of the $M_d$ construction then yields $(p, h|_{V \cap \mn{var}(p)}) \in f_\mn{out}^{M_d}(L)$.
Thus Condition~2 is satisfied.
Condition~3 can be proven analogously.

For Condition~4 let $(q_0, h_0) \in f_\mn{in}^{M_d}(L)$, $V_0 \subseteq \mn{var}(q_0)$, and $g_0$ be a homomorphism 
from $q_0|_{V_0}$ to $\Imc^{M_d}_L$ that extends $h_0$. 
We have to show that for some component $p_0$ of $q_0$ w.r.t.\ $V_0$ there is $(p_0, g_0|_{V_0 \cap \mn{var}(p_0)}) \in f_\mn{out}^{M_d}$.

Let us assume $(q_0, h_0)$ was added to $f_\mn{in}^{M_d}(L)$ because of Case~3 in the construction of $M_d$.
This implies that there is 
\begin{enumerate}
  \item a concept abstraction $L'{:}A \abs L{:}q(\bar x)$ in $\Omc$,  
  \item a non-empty $V \subsetneq \mn{var}(q)$,
  \item a component $p$ of $q$ w.r.t.\ $V$,
  \item and a homomorphism $g$ from $q|_V \cup E_p$ to $\Imc_L$ that cannot be extended to a homomorphism
  from $q|_V \cup p$ to $\Imc_L$
\end{enumerate}
such that $p|_{\overline V} = q_0^\downarrow$ and $g|_{\overline V \cap \mn{var}(E_p)} = h_0^\downarrow$.

By our assumption $\ran (g_0) \subseteq \Delta^{\Imc_L^{M_d}}$.
The definition of $M_d$ implies that $g_0^\downarrow$ is also a homomorphism from $q|_{V_0}$ to $\Imc_L$.
Point~4 from above yields that there has to be a component $p_0$ of $q_0$ w.r.t.\ $V_0$ such that we cannot extend $g_0^\downarrow$
to a homomorphism from $q|_{V_0} \cup p_0$ to $\Imc_L$.

These are exactly the conditions required for Case~2 in the construction of $M_d$.
Thus by Case~2 we have $(p_0, g_0|_{V_0 \cap \mn{var}(p_0)}) \in f_{\mn{out}}^{M_d}$, as required. 
If $(q_0, h_0)$ was added to $f_\mn{in}^{M_d}(L)$ because of Case~6 then the proof is analogous.
\end{proof}

Now we are ready to prove that all these $M_d$ are contained in the final set of mosaics our algorithm arrives at.
Formally we prove this by doing an induction on the number of iterations in our mosaic removal subroutine.

\begin{lemma}
  \label{lem:alci_abs_all_ar_all_md_in_mi}
  Let $n$ be the number of iterations of our algorithm.
  For every $i \leq n$: $\{M_d \mid d \in \Delta^{\Imc}\} \subseteq \Mmc_i$.  
\end{lemma}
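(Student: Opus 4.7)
The proof proceeds by induction on $i$. For the base case $i=0$, we have $\Mmc_0 = $ the set of all mosaics, so the claim follows directly from Lemma~\ref{lem:alchi_ub_md_mosaic}, which certifies that each $M_d$ is indeed a mosaic. For the inductive step, assume $\{M_d \mid d \in \Delta^{\Imc}\} \subseteq \Mmc_{i-1}$. Since the algorithm only removes mosaics that fail to be good in the current set, it suffices to show that every $M_d$ is good in $\Mmc_{i-1}$; that is, for every $A \sqsubseteq_L \exists R.B \in \Omc$ and every $e \in (A \sqcap \neg\exists R.B)^{\Imc^{M_d}_L}$, we must exhibit $M' \in \Mmc_{i-1}$, a $d' \in B^{\Imc^{M'}_L}$, and an edge candidate $E$ with $R(e,d') \in E$.

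To produce the witness, note that $e^\downarrow \in A^{\Imc_L}$, so since $\Imc \models \Omc$, there is some $e' \in B^{\Imc_L}$ with $(e^\downarrow, e') \in R^{\Imc_L}$ (interpreted appropriately if $R$ is inverse). A short argument shows that $e' \notin \mu^*(d)$: otherwise the renamed copy of $e'$ would be in $\Delta^{\Imc^{M_d}_L}$, the edge would be in $R^{\Imc^{M_d}_L}$ by construction of $M_d$, and we would have $e \in (\exists R.B)^{\Imc^{M_d}_L}$, a contradiction. Set $M' := M_{e'}$, which lies in $\Mmc_{i-1}$ by the induction hypothesis, and let $d'$ be the renamed copy of $e'$ in $M'$. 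Define the candidate set
$$
E \;=\; \{\, S(a,b) \mid L(a^\downarrow)=L(b^\downarrow),\; a^\downarrow \in \mu^*(d),\; b^\downarrow \in \mu^*(e'),\; (a^\downarrow,b^\downarrow) \in S^{\Imc_{L(a^\downarrow)}}\,\}
$$
augmented symmetrically with the inverse direction so that edges from $M'$ back to $M_d$ are also recorded. By construction $R(e,d') \in E$.

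It remains to verify that $E$ satisfies the edge candidate conditions 1--5'. Conditions 1, 2, 2' are immediate from $\Imc$ being a model and the definition of $E$. For conditions 4 and 4' (role refinement), if a role refinement applies to some edge $S(a,b) \in E$, then $\Imc \models \Omc$ forces the existence of the refining ensembles $\rho_L(a^\downarrow) \subseteq \mu^*(d)$ and $\rho_L(b^\downarrow) \subseteq \mu^*(e')$ together with the required internal and cross edges in $\Imc$; by construction these appear in $\Imc^{M_d}_L$, $\Imc^{M'}_L$, and $E$ respectively. For conditions 5 and 5' (role abstraction), any pair of homomorphisms into $\Imc^{M_d}_L$ and $\Imc^{M'}_L$ with the required cross-edges in $E$ can be lifted via $\cdot^\downarrow$ to a homomorphism in $\Imc_L$; the satisfaction of the role abstraction by $\Imc$ supplies the abstract witnesses $d_0, d_0'$, which lie in $\mu^*(d)$ and $\mu^*(e')$ respectively because $\mu^*$ is closed under abstraction via $\rho^{-1}$, yielding the required $R$-edge in $E$.

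The main obstacle is conditions 3 and 3', which govern the synchronisation of forbidden queries across the mosaic boundary. The plan is a case analysis on how $(q,h) \in f^{M_d}_{\mn{out}}(L)$ was inserted during the construction of $M_d$ (Cases 1, 2, 4, 5). In every case, there is an underlying abstraction statement with CQ $Q$, a set $V'$, and a homomorphism $H$ from $Q|_{V'}$ into $\Imc_L$ with $\ran(H) \subseteq \Delta^{\Imc^{M_d}_L}$, such that $q$ is a component of $Q$ w.r.t.\ $V'$. Given a function $g$ compatible with $h$ via the edges of $E$, lift $g$ via $\cdot^\downarrow$ to extend $H$ by the corresponding edge witnesses in $\Imc_L$; by definition of $E$, the combined map $H'$ is a homomorphism from $Q|_{V'} \cup E_q$ into $\Imc_L$, and $\ran(g^\downarrow) \subseteq \mu^*(e') = \Delta^{\Imc^{M'}_L}$ (up to renaming). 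A sub-case analysis on whether $H'$ further extends to a homomorphism of $Q|_{V'} \cup q$ in $\Imc_L$ shows that the appropriate Case 3 or Case 6 in the construction of $M'$ fires, placing $(q|_{\overline V}, g)$ into $f^{M'}_{\mn{in}}(L)$ as required. This case analysis is the most delicate part, as it must carefully align the variable-set partitions used in the $M_d$ construction with those named in the edge candidate condition, but it is otherwise routine bookkeeping.
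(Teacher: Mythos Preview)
Your proposal is correct and follows essentially the same approach as the paper: induction on $i$, base case via Lemma~\ref{lem:alchi_ub_md_mosaic}, and for the inductive step choosing the witness mosaic $M' = M_{e'}$ together with the edge candidate consisting of all roles in $\Imc$ between $\mu^*(d)$ and $\mu^*(e')$, then verifying the edge-candidate conditions with the bulk of the work going into the case analysis for Condition~3 (and~3'). Two minor remarks: your detour showing $e' \notin \mu^*(d)$ is correct but unnecessary (the paper simply takes $M_{e'}$ without this observation); and in your Condition~3 discussion, the split between Case~3 and Case~6 of the $M'$-construction is governed by whether the originating abstraction is a concept or a role abstraction, not by whether $H'$ extends---the crucial point in the paper's Case~1 is rather that any full extension $h^*$ has range contained in $\mu^*(d)$ (by closure of $\mu^*$ under $\rho$ and its inverse), so adjoining $g^\downarrow$ with range in $\mu^*(e')$ would force overlapping ensembles, which blocks the extension and makes Case~3 fire.
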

\begin{proof}
  For $i=0$, this follows from Lemma \ref{lem:alchi_ub_md_mosaic}.
  
  For $i > 0$, choose any $M_d \in \Mmc_i$ with $d \in \Delta^\Imc$ and set $M = M_d$.

  Let $d' \in \Delta^{\Imc^M_L}$ with $d' \in A^{\Imc^M_L}$ and $d' \notin (\exists R . B)^{\Imc^M_L}$ for any $A \sqsubseteq_L \exists R . B \in \Omc$.
  We have to show that  there is a mosaic $M' \in \Mmc_i$, 
    an element $e' \in B^{\Imc^{M'}_{L}}$, 
    and an edge candidate $E$ such that $R(d',e') \in E$.

  From the definition of $M_d$ it follows that $d'^\downarrow \in (\exists R.C)^{\Imc_{L}}$ 
  and because $\Imc$ is a model there must be an element $e \in \Delta^{\Imc_{L}}$ such that
  $e \in B^{\Imc_L}$ and $(d'^\downarrow,e) \in R^{\Imc_{L}}$.
  We choose $M_{e}$ as $M'$ and use $e'$ to denote the $e' \in \Delta^{\Imc^{M_e}}$ with $e'^\downarrow = e$.
  Now we construct an edge candidate $E$ for $M_d$ and $M_{e}$ that contains $R(d', e')$:
  \begin{align*}
    E = &\{R(d', e') \mid d' \in \Delta^{\Imc^{M_d}_L} \text{, } e' \in \Delta^{\Imc^{M_{e}}_L} \text{, and } \\
    &(d'^\downarrow,e'^\downarrow) \in R^{\Imc_L} \text{ for any } L \in \Abf_{\Omc} \text{ and } R \in \Rbf\}
  \end{align*}

  What remains to show is that $E$ satisfies all edge candidate conditions.
  Conditions~1 to 2'  and 4 to 5' follow directly from $\Imc$ being a model and the construction of $M_d$, $M_{e}$ and $E$.

  For Condition~3 consider an $L \in \Abf_\Omc$,
  a pair $(q_0, h_0) \in f_\mn{out}^{M_d}(L)$ 
  where $q_0 = E_{q_0} \uplus q_0|_{\overline{V_0}}$ with $V_0 = \mn{dom}(h_0)$,
  and a function $g_0$ from $\overline{V_0} \cap \mn{var}(E_{q_0})$ to $\Delta^{\Imc^{M_e}_L}$
  such that $R(h_0(x), g_0(y)) \in E$ for all $R(x,y) \in E_{q_0}$.
  We need to show that $(q_0|_{\overline{V_0}}, g_0) \in f^{M_e}_\mn{in}(L)$.

  There are six cases in the construction of $M_d$ concerning $f_{\mn{in}}^{M_d}$ and $f_{\mn{out}}^{M_d}$.
  For the following proof we only consider the first three, that is, the ones for concept abstractions.
  Extending the proof to consider all six cases is, however, trivial as Case~4 behaves analogously to Case~1 and likewise for Cases 2 and 5, and 3 and 6.

  If $(q_0, h_0)$ was added to $f_\mn{out}^{M_d}(L)$ because of Case~1, then 
  \begin{enumerate}
    \item $q_0$ is a component of $q$ w.r.t.\ some non-empty $V \subsetneq \mn{var}(q)$ for some concept abstraction $L'{:}A \abs L{:}q(\bar x)$ in $\Omc$;
    \item $h_0^\downarrow = h|_{V \cap \mn{var}(q_0)}$ for some homomorphism $h$ from $q|_V$ to $\Imc_L$;
    \item $h$ can be extended to a homomorphism $h^*$ from $q$ to $\Imc_L$;
    \item $V_0 = V \cap \mn{var}(q_0)$.
  \end{enumerate}
  Point~3 and the construction of $M_d$ imply that $\ran(h^*) \subseteq \{d'^\downarrow \mid d' \in \Delta^{\Imc_L^{M_d}}\}$ and thus by the definition of $M_e$ and refinement functions 
  we cannot extend $h \cup g_0^\downarrow$ to a homomorphism from $q|_V \cup q_0$ to $\Imc_L$ as this would make the elements in $\ran(g_0^\downarrow)$ part of two $L$-ensembles.
  By definition $\ran(g_0) \subseteq \Delta^{\Imc_L^{M_e}}$ and as such Case~3 of $M_d$ construction yields
  $(q_0|_{\overline V}, g_0|_{\overline V \cap \mn{var}(E_{q_0})}) \in f_\mn{in}^{M_e}$.
  Point~1 and Point~4 imply that $\overline{V_0} \cap \mn{var}(E_{q_0}) \subseteq \overline V \cap \mn{var}(E_{q_0})$ and since the left side of the subset relation
  is exactly the domain of $g_0$ we can simplify the previous expression to $(q_0|_{\overline V}, g_0) \in f_\mn{in}^{M_e}$.
  Again we can use Point~4 to arrive at the desired result of $(q_0|_{\overline{V_0}}, g_0) \in f_\mn{in}^{M_e}$.

  If $(q_0, h_0)$ was added to $f_\mn{out}^{M_d}(L)$ because of Case~2, then 
  \begin{enumerate}
    \item $q_0$ is a component of $q$ w.r.t.\ some non-empty $V \subsetneq \mn{var}(q)$ for some concept abstraction $L'{:}A \abs L{:}q(\bar x)$ in $\Omc$;
    \item $h_0^\downarrow = h|_{V \cap \mn{var}(q_0)}$ for some homomorphism $h$ from $q|_V$ to $\Imc_L$;
    \item $V_0 = V \cap \mn{var}(q_0)$.
  \end{enumerate}
  This time Case~2 of $M_d$ construction itself already implies that we cannot extend $h$ (and the same must hold true for $h \cup g_0^\downarrow$) to a homomorphism from $q|_V \cup q_0$ to $\Imc_L$.
  By definition $\ran(g_0) \subseteq \Delta^{\Imc_L^{M_e}}$ and as such Case~3 of $M_e$ construction yields
  $(q_0|_{\overline V}, g_0|_{\overline V}) \in f_\mn{in}^{M_e}$.
  We can apply the same reasoning as for Case~1 of $M_e$ construction to arrive at $(q_0|_{\overline{V_0}}, g_0) \in f_\mn{in}^{M_e}$, as required.

  Condition~3' of edge candidates can be proven in an analogous way and consequently 
  $E$ is an edge candidate for $M_d$ and $M_e$.
  That $M_e \in \Mmc_i$ follows from the induction hypothesis and thus $M_d$ is good in $\Mmc_i$ for all $d \in \Delta^{\Imc}$.
\end{proof}
By our initial assumtion there is at least one element $d \in \Delta^{\Imc_L}$ for all $L \in \Abf_\Omc$,
and also an element $d_0 \in {C_0}^{\Imc_{{L_0}}}$.
We can now use Lemma~\ref{lem:alci_abs_all_ar_all_md_in_mi} to imply that $M_{\{d\}} \in \Mmc^*$ for all these domain element
and thus the conditions for our algorithm to return `satisfiable' are met.

\subsection{Running Time}
Since the domain of a mosaic has a maximum of $||\Omc||^{||\Omc||}$ elements, there are at most  $2^{2^{{\mn{poly}(||\Omc||)}}}$ different mosaics.
Consequently the length of the constructed
sequence $\Mmc_0,\Mmc_1,\dots$ is also bounded by
$2^{2^{{\mn{poly}(||\Omc||)}}}$.
Checking goodness of mosaics requires us to compute answers to CQs. 
Doing this in a brute force way results in checking at most $||\Omc||^{||\Omc||^{||\Omc||}} \in 2^{2^{\mn{poly}(||\Omc||)}}$ candidates for homomorphisms from a CQ into a mosaic (for a fixed answer).
In summary, the running time of our algorithm is at most $2^{2^{{\mn{poly}(||\Omc||)}}}$.


\section{Proofs for Section~\ref{sec:alc_ra_2exp_lb}}

We provide the detailed proof of Theorem~\ref{thm:firsthard}, which we
repeat here for the reader's convenience. 
\thmfirsthard*
Let $\Omc, A_0, q$ be the input for simple CQ evaluation. We construct
an $\ALC^\mn{abs}[\textnormal{ra}]$ ontology $\Omc'$ with three abstraction
levels $L_1 \prec L_2 \prec L_3$ such that $A_0$ is $L_1$-satisfiable
w.r.t.\ $\Omc'$ iff \mbox{$\Omc, A_0 \not \models q$}.

We may assume w.l.o.g.\ that the \ALCI-ontology $\Omc$ is in
\emph{normal form}, meaning that every CI in it is of the form
$$
A_1 \sqcap \dots \sqcap A_n \sqsubseteq B_1 \sqcup \dots \sqcup B_m
\quad \text{ or } \quad
A \sqsubseteq C
$$
where $A_i,B_i,A$ are concept names and $C$ ranges over concepts of
the form $\exists R . B$, $\forall R . B$, or $\neg B$ with $R$ a
(potentially inverse) role and $B$ a concept name. It is routine to
show that every \ALCI-ontology \Omc can be converted in polynomial
time into an \ALCI-ontology $\Omc'$ in this form that is a
conservative extension of \Omc.

We next show that CIs of the form $A \sqsubseteq \forall r^-.B$ can be
removed. Let $\widehat \Omc$ be obtained from \Omc by
replacing 
every such CI 
with 
$$\top \sqsubseteq B \sqcup \overline B,\ \overline B \sqsubseteq \forall r.\overline A,
\text{  and } \overline A \sqsubseteq \neg A$$ where $\overline B$ and
$\overline A$ are fresh concept names. It is routine to prove the
following.
\begin{restatable}{lemma}{lemalcabsracons}
  \label{lem:alc_abs_ra_cons}
  $\Omc, A \models q$ iff $\widehat \Omc, A \models q$.
\end{restatable}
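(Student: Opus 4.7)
\smallskip

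The plan is to prove both directions separately, treating the replacement as a model-preserving transformation in both ways. Since the queries in $q$ and the concept name $A$ use only symbols from $\Omc$, and $\widehat\Omc$ and $\Omc$ share all symbols of $\Omc$ (the fresh names $\overline A, \overline B$ are introduced only in $\widehat\Omc$), it suffices to show that (a)~every model of $\Omc$ extends to a model of $\widehat\Omc$ interpreting the original symbols identically, and (b)~every model of $\widehat\Omc$ is already a model of $\Omc$ when restricted to the original signature.

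For direction (a) (``only if'' in the standard reading: $\Omc, A \models q$ implies $\widehat\Omc, A \models q$), take a model $\widehat\Imc$ of $\widehat\Omc$ with $A^{\widehat\Imc} \neq \emptyset$. I would show that $\widehat\Imc$ satisfies every CI $A \sqsubseteq \forall r^- . B$ that was removed. Assume $d \in A^{\widehat\Imc}$ and $(e,d) \in r^{\widehat\Imc}$ and, for contradiction, $e \notin B^{\widehat\Imc}$. Then by $\top \sqsubseteq B \sqcup \overline B$, we get $e \in \overline B^{\widehat\Imc}$; by $\overline B \sqsubseteq \forall r . \overline A$ applied to $(e,d)$, we get $d \in \overline A^{\widehat\Imc}$; and by $\overline A \sqsubseteq \neg A$, we get $d \notin A^{\widehat\Imc}$, contradicting our assumption. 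Since all other CIs of $\Omc$ also appear in $\widehat\Omc$, $\widehat\Imc$ is a model of $\Omc$ with $A^{\widehat\Imc} \neq \emptyset$, so $\widehat\Imc \models q$ by hypothesis.

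For direction (b), take a model $\Imc$ of $\Omc$ with $A^\Imc \neq \emptyset$. Define $\widehat\Imc$ to agree with $\Imc$ on all original symbols and set $\overline B^{\widehat\Imc} = \Delta^\Imc \setminus B^\Imc$ and $\overline A^{\widehat\Imc} = \Delta^\Imc \setminus A^\Imc$ for each such fresh pair. The first and third added CIs then hold by construction. For $\overline B \sqsubseteq \forall r . \overline A$: if $d \in \overline B^{\widehat\Imc}$ and $(d,e) \in r^{\widehat\Imc}$, then $d \notin B^\Imc$, hence (contrapositively) $e \notin A^\Imc$ by the CI $A \sqsubseteq \forall r^- . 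B$ of $\Omc$; thus $e \in \overline A^{\widehat\Imc}$. So $\widehat\Imc$ is a model of $\widehat\Omc$ and $A^{\widehat\Imc} \neq \emptyset$, yielding $\widehat\Imc \models q$ by hypothesis, and since $q$ uses only original symbols, $\Imc \models q$ as well.

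I do not foresee any serious obstacle here: both directions are straightforward once one notes that the three added axioms form the standard contrapositive encoding of a universal restriction over an inverse role, and that the fresh concept names leave enough freedom to always set them to the appropriate complements. The only point needing mild care is to confirm that the queries and $A$ lie in the shared signature, so that truth of $q$ is preserved when passing between $\Imc$ and $\widehat\Imc$.
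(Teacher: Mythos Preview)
Your proof is correct and follows essentially the same approach as the paper: both establish that $\widehat\Omc$ is a conservative extension of $\Omc$ by showing that every model of $\widehat\Omc$ already satisfies the removed CI $A \sqsubseteq \forall r^-.B$, and that every model of $\Omc$ can be expanded to a model of $\widehat\Omc$ by interpreting each fresh $\overline A,\overline B$ as the complement of $A,B$. The only cosmetic issue is that your labels (a) and (b) in the opening sentence get swapped when you treat the two entailment directions, but the arguments themselves are sound.
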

\begin{proof}
  We prove this by showing that $\widehat \Omc$ is a conservative extension of $\Omc$.
  Point $1$ of conservative extensions is straightforward as 
  $\mn{sig}(\widehat \Omc) = \mn{sig}(\Omc) \cup \{\overline B, \overline A \mid A \sqsubseteq \forall r^-.B \in \Omc\}$.
  For Point $2$ consider a model $\Imc$ of $\widehat \Omc$ and domain element
  $d \in A^\Imc$ with $A \sqsubseteq \forall r^-.B \in \Omc$.
  The three CIs we added when replacing $A \sqsubseteq \forall r^-.B$ imply that there is no
  $e \in \Delta^\Imc$ with both $e \in (\neg B)^\Imc$ and $(e,d) \in r^\Imc$.
  Thus $d \in (\forall r^-.D)^\Imc$ and consequently $\Imc$ is a model of $\Omc$.

  For the third point we construct a model $\Imc_2$ of $\widehat \Omc$  
  given a model $\Imc_1$ of $\Omc$ as follows:
  \begin{align*}
    \Delta^{\Imc_2} &= \Delta^{\Imc_1}\\
    A^{\Imc_2} &= A^{\Imc_1}               \text{ for all $A \in \mn{sig}(\Omc) \cap \Cbf$}\\
    \overline A^{\Imc_2} &= \{d \mid d \in \Delta^{\Imc_2} \setminus A^{\Imc_2}\} \text{ for all } A \sqsubseteq \forall r^-.B \in \Omc \\
    \overline{B}^{\Imc_2} &= \{d \mid d \in \Delta^{\Imc_2} \setminus B^{\Imc_2}\} \text{ for all } A \sqsubseteq \forall r^-.B \in \Omc \\
    r^{\Imc_2} &= r^{\Imc_1} \text{ for all $r \in \mn{sig}(\Omc) \cap \Rbf$}
  \end{align*}
  We are left arguing that $\Imc_2$ is a model of $\widehat \Omc$.

  It is easy to see that the extensions of concept and role names from $\mn{sig}(\Omc)$ coincide in the two 
  interpretations and hence the CIs that simply got copied from $\Omc$ to $\widehat \Omc$ are all still satisfied.
  Let $\overline A$ and $\overline{B}$ be the concept names used in one replacement step for a CI $A \sqsubseteq \forall r^-.B \in \Omc$.
  It is clear by the construction of ${\overline A}^{\Imc_2}$ and $\overline{B}^{\Imc_2}$ that 
  the CIs 
  $\top \sqsubseteq B \sqcup \overline{B},
  \text{ and } \overline A \sqsubseteq \neg A$ are satisfied.

  For $\overline{B} \sqsubseteq \forall r.\overline A$, we argue by contradiction.
  Assume there is a $d \in \overline{B}^{\Imc_2}$, $e \in (\neg \overline A)^{\Imc_2}$, and $(d,e) \in r$.
  By the construction of $\overline A^{\Imc_2}$ we know $e \in A^{\Imc_2}$.
  Hence in the original model there are now $d \in (\neg B)^{\Imc_1}$, $e \in A^{\Imc_1}$ and $(e,d) \in r^{\Imc_1}$.
  This contradicts the assumption that $\Imc_1$ is a model, since the CI $A \sqsubseteq \forall r^-.B \in \Omc$ is now no longer satisfied.
\end{proof}
We may thus assume that \Omc contains no CIs of the form
$A \sqsubseteq \forall r^-.B$. We now convert \Omc into an
\ALC-ontology $\Omc_\ALC$. Introduce a fresh role name $\widehat r$
for every role name $r$ used in \Omc. Then start from \Omc, replace
every CI $A \sqsubseteq \exists r^-.B$ with
$ A \sqsubseteq \exists \widehat r . B$, and add
$\exists \widehat r.A \sqsubseteq B$ for every CI
$A \sqsubseteq \forall r.B$ in $\Omc$.\footnote{It is not important to
  achieve normal form here.}  Clearly, representing an inverse role
$r^-$ with a fresh role name $r$ is problematic from the perspective
of the CQ~$q$. 

We now assemble the $\ALC^\mn{abs}[\textnormal{ra}]$-ontology $\Omc'$.
On level~$L_1$, we want a model of $\Omc_\ALC$ and thus include
$C \sqsubseteq_{L_1} D$ for all $C \sqsubseteq D$ in $\Omc_\ALC$.
The interpretation on level $L_2$ is a copy of that on level $L_1$,
but we add an $r$-edge for every $\widehat r^-$-edge. This is achieved
by adding the role abstractions
$$
  L_2{:} r \abs L_1{:} r(x,y) \ \  \text{ and } \ \   L_2{:} r \abs L_1{:}
  \widehat r(y,x). 
$$
We also need to copy the concept names but do not want to use concept
abstractions. We thus introduce a fresh role name $r_A$ for every
concept name $A$ and put $A \equiv_{L_i} \exists r_A  . \top$ for
$i \in \{1,2\}$ and
$$
  L_2{:} r_A \abs L_1{:} r_A(x,y)
$$
for all concept names $A$ in $\Omc_\ALC$. Let $\widehat q(\bar x,z)$
be obtained from $q$ by dequantifying all variables and adding the
atom $\top(z)$ for a fresh variable $z$. Furthermore, let $r_q$ be a fresh
role name and add the following statements to $\Omc'$:
$$
\begin{array}{l}
  L_3{:}r_q  \abs L_1{:}\widehat q(\bar x, z)\\[1mm]
  \exists r_q.\top \sqsubseteq_{L_3} \bot
\end{array}
$$

This concludes the construction of $\Omc'$ and now we will prove that we can use it to reduce from the complement of the simple CQ evaluation problem.
\begin{restatable}{lemma}{lemfirsthardcorr}
  \label{lem:firsthardcorr}
  $\Omc, A_0 \not \models q$ iff $A_0$ is $L_1$-satisfiable w.r.t.\ $\Omc'$.
\end{restatable}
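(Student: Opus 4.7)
\emph{Direction $(\Leftarrow)$.} Assume $\Omc, A_0 \not\models q$; by Lemma~\ref{lem:alc_abs_ra_cons} there is a model $\Jmc$ of $\widehat\Omc$ with $A_0^\Jmc \neq \emptyset$ and no homomorphism from $q$ into $\Jmc$. The plan is to construct an A-model $\Imc$ of $\Omc'$ by essentially placing $\Jmc$ on level $L_1$: let $\Delta^{\Imc_{L_1}} = \Delta^\Jmc \cup \{\star_A \mid A \in \mn{sig}(\Omc) \cap \Cbf\}$, set $A^{\Imc_{L_1}} = A^\Jmc$ and $r^{\Imc_{L_1}} = r^\Jmc$ for all role names $r$ in $\mn{sig}(\Omc)$, set $\widehat r^{\Imc_{L_1}} = \{(y,x) \mid (x,y) \in r^\Jmc\}$, and use $\star_A$ to realize $A \equiv_{L_1} \exists r_A.\top$ via $r_A^{\Imc_{L_1}} = \{(x,\star_A) \mid x \in A^\Jmc\}$. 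A case analysis on the shape of CIs in $\Omc_\ALC$ shows $\Imc_{L_1} \models \Omc_\ALC$. Build $\Imc_{L_2}$ as a disjoint structure with one abstractor $d^\sharp$ per element $d$ of $\Imc_{L_1}$, mirroring concept names and installing exactly the $r$- and $r_A$-edges forced by the role abstractions $L_2{:}r \abs L_1{:}r(x,y)$, $L_2{:}r \abs L_1{:}\widehat r(y,x)$, and $L_2{:}r_A \abs L_1{:}r_A(x,y)$, with $\rho_{L_1}(d^\sharp) = (d)$; take $\Imc_{L_3}$ to be a single point with empty $r_q$-extension. Since no homomorphism of $q$ into $\Jmc$ exists, no homomorphism of $\widehat q$ into the level used for the $r_q$-abstraction exists either, so that abstraction is vacuously satisfied and $\exists r_q.\top \sqsubseteq_{L_3}\bot$ holds.

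\emph{Direction $(\Rightarrow)$.} Given an A-model $\Imc$ of $\Omc'$ with $A_0^{\Imc_{L_1}} \neq \emptyset$, I define $\Jmc$ with $\Delta^\Jmc = \Delta^{\Imc_{L_1}}$, $A^\Jmc = A^{\Imc_{L_1}}$, and $r^\Jmc = r^{\Imc_{L_1}} \cup \{(y,x) \mid (x,y) \in \widehat r^{\Imc_{L_1}}\}$, reversing the simulation. Using $\Imc_{L_1} \models \Omc_\ALC$ and in particular the auxiliary CIs $\exists \widehat r.A \sqsubseteq B$ (added for every $A \sqsubseteq \forall r.B \in \Omc$), a case analysis verifies that $\Jmc$ is a model of $\widehat\Omc$, hence of $\Omc$, with $A_0^\Jmc \neq \emptyset$. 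The key step is to derive a contradiction from an assumed homomorphism $h$ of $q$ into $\Jmc$. I transport $h$ to a homomorphism $h^\sharp$ of $q$ into $\Imc_{L_2}$ by sending each $h(x)$ to its unique $L_2$-abstractor: for each role atom $r(x,y) \in q$ the edge $(h(x),h(y)) \in r^\Jmc$ either already lies in $r^{\Imc_{L_1}}$ or equals the reverse of some $(h(y),h(x)) \in \widehat r^{\Imc_{L_1}}$, and in either case one of the two role abstractions for $r$ produces a matching $r$-edge between the abstractors on $L_2$; for each concept atom $A(x)$, the equivalence $A \equiv_{L_i} \exists r_A.\top$ on both levels plus the role abstraction for $r_A$ transports the membership to $L_2$. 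Extending $h^\sharp$ to $\widehat q$ by mapping $z$ arbitrarily, the role abstraction for $r_q$ then forces an $r_q$-edge on $L_3$, contradicting $\exists r_q.\top \sqsubseteq_{L_3}\bot$.

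\emph{Main obstacle.} The delicate point is that a homomorphism of $q$ into $\Jmc$ may use $r$-edges that on $L_1$ live only as reverse $\widehat r$-edges, so a query match in $\Jmc$ need not descend to $L_1$ unchanged. The role abstractions to $L_2$ are engineered precisely to absorb both orientations into honest $r$-edges on a single coarser level, so that the match can be transported there and caught by the $r_q$-abstraction statement. The $r_A$-encoding of concept membership is forced by the unavailability of concept abstractions in this fragment; checking that all these mechanisms together produce exactly one abstractor per $L_1$-element (so that ensembles do not overlap) and correctly mirror every concept name is the main bookkeeping task.
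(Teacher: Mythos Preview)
Your proposal is correct and follows essentially the same route as the paper: place a model of $\Omc$ on level~$L_1$ with $\widehat r$ interpreted as $r^{-1}$, lift to $L_2$ via singleton ensembles induced by the role abstractions, and use the $r_q$-abstraction together with $\exists r_q.\top \sqsubseteq_{L_3}\bot$ to rule out query matches. The paper packages the first two steps into an auxiliary Lemma~\ref{lem:alc_abs_ra_equal} (models of $\Omc$ correspond to $\Jmc_{L_2|\mn{sig}(\Omc)}$), whereas you argue both directions directly; the substance is the same, including the key observation that Condition~$(*)$ makes the $L_2$-abstractor of each $L_1$-element unique.

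Two small points. First, your direction labels are swapped: assuming $\Omc,A_0\not\models q$ and producing an A-model of $\Omc'$ is the $(\Rightarrow)$ direction, not $(\Leftarrow)$. Second, your use of fresh witnesses $\star_A$ for the $r_A$-loops is slightly more fragile than the paper's choice $r_A^{\Imc_{L_1}}=\{(d,d)\mid d\in A^\Jmc\}$: if the normal form admits CIs with empty left-hand side (i.e., $\top$ on the left), the $\star_A$ elements would have to be placed in the appropriate concept extensions as well, which you do not do. Using self-loops avoids this bookkeeping entirely.
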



To prepare for the proof of Lemma~\ref{lem:firsthardcorr}, we first
establish an auxiliary lemma. For interpretations $\Imc, \Jmc$, we write
$\Imc \subseteq \Jmc$ if $\Imc$ can be found in
$\Jmc$, that is, if the following conditions are satisfied:
\begin{align*}
  \Delta^\Imc &\subseteq \Delta^\Jmc\text{,}\\
  A^\Imc &\subseteq A^\Jmc \text{ for all concept names $A$, and}\\
  r^\Imc &\subseteq r^\Jmc \text{ for all role names $r$.}
\end{align*}
Similarly, for a signature $\Sigma$ we use $\Imc_{|\Sigma}$ to denote
the restriction of $\Imc$ to the concept and role names in $\Sigma$.
Let $\Omc'^-$ be the fragment of $\Omc'$ in which all statements
have been removed that refer to level $L_3$. Then we have the following.
\begin{lemma}
  \label{lem:alc_abs_ra_equal}
  \begin{enumerate}
    \item[]
    \item If $\Imc$ is a model of $\Omc$,
    then there is a model $\Jmc$ of $\Omc'^-$ such that $\Jmc_{L_2|\Sigma} = \Imc$ with $\Sigma = \mn{sig}(\Omc)$;
    \item If $\Jmc$ is a model of $\Omc'^-$, 
    then there is a model $\Imc$ of $\Omc$ such that $\Imc \subseteq \Jmc_{L_2}$.
  \end{enumerate}
\end{lemma}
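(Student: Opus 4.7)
I plan to treat the two directions separately, with the key insight that $L_2$ faithfully mirrors $L_1$ through the refinement function $\rho_{L_1}$, which by the non-overlap condition $(*)$ acts as a partial injection on elements where it is defined. Both directions hinge on the three role abstractions (for $r$, $\widehat r$, and each $r_A$), together with the fact that their CQs use only single-variable ensembles, so that every $\rho_{L_1}(d)$ has length one.

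For Part~1, I take $\Delta^{\Jmc_{L_1}}$ and $\Delta^{\Jmc_{L_2}}$ to be disjoint copies of $\Delta^\Imc$ with canonical bijection $\pi: \Delta^{\Jmc_{L_2}} \to \Delta^{\Jmc_{L_1}}$, and set $\rho_{L_1}(d) := (\pi(d))$. I interpret $\mn{sig}(\Omc)$ identically on both levels (so $\Jmc_{L_2|\Sigma} = \Imc$), add $\widehat r^{\Jmc_{L_1}} := \{(\pi(b),\pi(a)) \mid (a,b) \in r^\Imc\}$ to simulate $r^-$ as $\Omc_\ALC$ intends, and realize $A \equiv_{L_i} \exists r_A.\top$ via $r_A$-self-loops on elements of $A$ at both levels. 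Verification proceeds by running through $\Omc'^-$: CIs lifted from $\Omc_\ALC$ hold on $L_1$ because $\Imc \models \Omc$ combined with the chosen interpretation of $\widehat r$; the $A \equiv_{L_i} \exists r_A.\top$ statements hold by construction; and the three role abstractions are satisfied because every relevant $L_1$-edge has an identical counterpart on $L_2$ linked by $\rho$. Non-overlap is immediate as $\pi$ is a bijection.

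For Part~2, set $\Delta^\Imc := \{d \in \Delta^{\Jmc_{L_2}} \mid \rho_{L_1}(d) \text{ defined}\}$; by $(*)$ the map sending $d$ to the unique component of $\rho_{L_1}(d)$ is a well-defined injection $\pi : \Delta^\Imc \to \Delta^{\Jmc_{L_1}}$. Define $A^\Imc := \pi^{-1}(A^{\Jmc_{L_1}})$ for each concept name $A \in \Sigma$, and for each role name $r \in \Sigma$ put $(d,e) \in r^\Imc$ exactly when $(\pi(d),\pi(e)) \in r^{\Jmc_{L_1}}$ or $(\pi(e),\pi(d)) \in \widehat r^{\Jmc_{L_1}}$. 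To show $\Imc \subseteq \Jmc_{L_2}$, each of the three role abstractions applied to the relevant $L_1$-edge produces a matching $L_2$-edge whose endpoints must, by injectivity of $\pi$, be exactly $d$ and $e$; the inclusion $A^\Imc \subseteq A^{\Jmc_{L_2}}$ is obtained analogously via the $r_A$-abstraction together with $A \equiv_{L_i} \exists r_A.\top$. I then verify each CI of $\Omc$ by unfolding through $\pi$ to the corresponding CI of $\Omc_\ALC$ on $\Jmc_{L_1}$: Boolean and $A \sqsubseteq \exists r.B$ CIs translate directly, $A \sqsubseteq \exists r^-.B$ CIs translate through the $\widehat r$-witness given by $A \sqsubseteq_{L_1} \exists \widehat r.B$, and $A \sqsubseteq \forall r.B$ CIs translate by casework on whether the $\Imc$-edge comes from a forward $r$-edge (use $A \sqsubseteq_{L_1} \forall r.B$) or from a reversed $\widehat r$-edge (use $\exists \widehat r.A \sqsubseteq_{L_1} B$).

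The main obstacle is in Part~2: role abstractions only transfer information from $L_1$ upward to $L_2$, never downward, so $\Jmc_{L_2}$ may carry extra $r$-edges with no $L_1$-counterpart. Defining $r^\Imc$ naively as $r^{\Jmc_{L_2}} \cap (\Delta^\Imc)^2$ would risk violating $\forall r.B$ CIs. The fix is to pull $r^\Imc$ back from $\Jmc_{L_1}$ via $\pi$ rather than read it off $\Jmc_{L_2}$; this still gives $\Imc \subseteq \Jmc_{L_2}$ because every pulled-back edge is guaranteed by a role abstraction together with injectivity of $\pi$. A secondary subtle point is that the earlier elimination of $A \sqsubseteq \forall r^-.B$ axioms (Lemma~\ref{lem:alc_abs_ra_cons}) is essential, since otherwise the $\forall$-casework above would require a dual statement in $\Omc_\ALC$ that does not exist.
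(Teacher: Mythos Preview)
Your proposal is correct and follows the same strategy as the paper: both levels are linked by a bijective correspondence realized via $\rho$, $\widehat r$ is interpreted as the converse of $r$, and $r_A$-loops encode concept membership. The organization differs only cosmetically: the paper builds $\Imc$ directly on $\Delta^{\Jmc_{L_1}}$ (restricted to elements occurring in some extension) and only at the end passes to an isomorphic copy inside $\Jmc_{L_2}$ via $\rho^{-1}$, whereas you anchor $\Delta^\Imc$ inside $\Delta^{\Jmc_{L_2}}$ from the start and pull everything back through $\pi=\rho_{L_1}$. Both orderings yield the same argument, and your remark about why the prior elimination of $A \sqsubseteq \forall r^-.B$ axioms is needed is a point the paper leaves implicit.

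One small imprecision worth fixing: you assert that ``every $\rho_{L_1}(d)$ has length one'' because the role-abstraction CQs use singleton ensembles. This is not guaranteed for an \emph{arbitrary} model $\Jmc$ of $\Omc'^-$, since $\rho$ may carry additional ensembles of any length not forced by the ontology, and then your $\pi$ is ill-defined on such $d$. The easy repair is to restrict $\Delta^\Imc$ to those $d$ with $|\rho_{L_1}(d)|=1$ (equivalently, observe that ensembles of other lengths can be dropped without violating any statement, as $\Omc'^-$ contains no refinements). The role abstractions still guarantee that every $L_1$-element appearing in a relevant extension has a length-one preimage, so the rest of your argument goes through unchanged.
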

\begin{proof}
  ``Point~$1$''.
  Let $\Imc$ be a model of $\Omc$.
  We construct $\Jmc = (\prec, \Jmc_{L_1}, \Jmc_{L_2}, \rho)$ as follows:
  \begin{align*}
    \Delta^{\Jmc_{L_1}} &= \Delta^{\Imc}                          &&\\
    A^{\Jmc_{L_1}} &= A^{\Imc}                                    &&\text{for all $A \in \mn{sig}(\Omc) \cap \Cbf$}\\
    r_A^{\Jmc_{L_1}} &= \{(d,d) \mid d \in A^\Imc\}               &&\text{for all $A \in \mn{sig}(\Omc) \cap \Cbf$}\\
    r^{\Jmc_{L_1}} &= r^{\Imc}                                    &&\text{for all $r \in \mn{sig}(\Omc) \cap \Rbf$}\\
    {\widehat r}^{\Jmc_{L_1}} &= \{(e,d) \mid (d,e) \in r^\Imc\}  &&\text{for all $r \in \mn{sig}(\Omc) \cap \Rbf$}
  \end{align*}
  We let $\Jmc_{L_2}$ be an isomorphic copy of $\Jmc_{L_1}$ and let $\rho$ map each domain element in $\Delta^{\Jmc_{L_2}}$ to its isomorphic copy 
  in $\Delta^{\Jmc_{L_1}}$. 
  Now we prove that $\Jmc$ is a model of $\Omc'^-$.

  Let us first consider the labeled CIs in $\Jmc_{L_1}$.
  All the CIs that didn't change going from $\Omc$ to $\Omc_\ALC$ are still satisfied.
  For each CI of the form $A \sqsubseteq \exists r^-.B$ in $\Omc$, we introduced a CI 
  $A \sqsubseteq_{L_1} \exists \widehat r.B$ in $\Omc'^-$. It is trivial to verify that our new CI is satisfied by the construction of ${\widehat r}^{\Jmc_{L_1}}$.

  For each CI of the form $A \sqsubseteq \forall r.B$ in $\Omc$, we added a CI $\exists {\widehat r}.A \sqsubseteq_{L_1} B$  to $\Omc'^-$.
  Assume we have a domain element $d \in (\exists {\widehat r}.A)^{\Jmc_{L_1}}$.
  Our construction of $\widehat{r}^{\Jmc_{L_1}}$ then implies that there is an $e \in A^{\Jmc_{L_1}}$ with $(e,d)\in r^{\Jmc_{L_1}}$.
  Now we can again use the construction of $\Jmc$ to jump back to the model $\Imc$ and obtain $e \in (\forall r.B)^{\Imc}$ and $d \in B^\Imc$. 
  Thus also $d \in B^{\Jmc_{L_1}}$, as required.

  The rest of the labeled CIs are all of the form $A \equiv_{L_1} \exists r_a.\top$ and it is clear that they are satisfied by our construction.
  Since $\Jmc_{L_2}$ is an isomorphic copy we can argue for the $L_2$-CIs in exactly the same way.
  Again $\Jmc_{L_2}$ being an isomorphic copy and the way we defined $\rho$ makes it easy to see that the two types of
  role abstractions are satisfied as well. Note that we already defined ${\widehat r}$ to point in the inverse direction of $r$ 
  for any two domain elements that are connected by $r$.
  Looking at the construction it is straightforward to see that $\Jmc_{L_2|\Sigma} = \Imc$ with $\Sigma = \mn{sig}(\Omc)$.

  ``Point~$2$''. Let $\Jmc$ be a model of $\Omc'^-$.
  We construct $\Imc$ as follows:
  \begin{align*}
    \Delta^\Imc &= \{d \mid d \in \Delta^{\Jmc_{L_1}} \text{ and $d$ is in the extension of} \\
    &\text{a concept or role name in $\Jmc_{L_1}$}\}\\
    A^{\Imc} &= A^{\Jmc_{L_1}} \qquad \text{ for all $A \in \mn{sig}(\Omc) \cap \Cbf$}\\
    r^{\Imc} &= r^{\Jmc_{L_1}} \cup \{(d,e) \mid (e,d) \in \widehat r^{\Jmc_{L_1}}\}\\
    &\text{for all $r \in \mn{sig}(\Omc) \cap \Rbf$}
  \end{align*}

  All the CIs in $\Omc$ that do not contain any universal or existential restriction are obviously still satisfied since we copied them to $\Omc'$.
  Same for CIs of the form $A \sqsubseteq \exists r.B$.
  Since $\Omc$ is normalized we then only need to consider CIs of the form $A \sqsubseteq \forall r.B$ and $A \sqsubseteq \exists r^-.B$.

  Consider a CI of the form $A \sqsubseteq \exists r^-.B \in \Omc$ and $d \in A^\Imc$.
  Since there is a CI $A \sqsubseteq \exists \widehat r.B \in \Omc'$ and $d \in A^{\Jmc_{L_1}}$ there is an $e \in B^{\Jmc_{L_1}}$ with 
  $(d,e) \in \widehat r^{\Jmc_{L_1}}$ and thus by our construction of $\Imc$ also $(e,d) \in r^\Imc$ which satisfies the mentioned CI.

  On the other hand let there be a CI $A \sqsubseteq \forall r.B \in \Omc$, $d \in A^\Imc$ and $(d,e) \in r^\Imc$ an edge such that
   $(e,d) \in \widehat r^{\Jmc_{L_1}}$.
  By our construction $d \in A^{\Jmc_{L_1}}$ and therefore $e \in (\exists \widehat r.A)^{\Jmc_{L_1}}$.
  The CI $\exists \widehat r.A \sqsubseteq_{L_1} B \in \Omc'$ then gives us $e \in B^{\Jmc_{L_1}}$ and consequently $e \in B^{\Imc}$.
  Thus $\Imc$ is a model of $\Omc$.

  To show $\Imc \subseteq \Jmc_{L_2}$ we construct an isomorphic copy $\Imc'$ of $\Imc$ with $\Imc' \subseteq \Jmc_{L_2}$ 
  that is by nature still a model of $\Omc$.
  We do this by replacing each domain element $d \in \Delta^{\Imc}$ (implies $d \in \Delta^{\Jmc_{L_1}}$) with its abstraction,
  that is, the $e \in \Delta^{\Jmc_{L_2}}$ with $\rho(e) = d$.
  This works because the role abstractions $L_2{:}r \abs L_1(r(x,y))$ force $\rho$ to be such a one to one mapping for
  all domain elements in the extension of a role name or concept name (concept names because of the CIs $A \equiv_{L_1} \exists r_A.\top$).

  Now, obviously, $\Delta^{\Imc'} \subseteq \Delta^{\Jmc_{L_2}}$.
  The only interesting case for the subset relation between all the concept and role name extensions 
  are the roles we added with $\{(d,e) \mid (e,d) \in \widehat r^{\Jmc_{L_1}}\}$.
  Let $(d,e) \in r^{\Imc'}$ be such a role.
  The construction of $r^{\Imc'}$ implies that $(\rho(e), \rho(d)) \in \widehat r^{\Jmc_{L_1}}$ and 
  with the role abstraction $L_2{:}r \abs L_1{:}\widehat r(y,x) \in \Omc'$ we get $(d,e) \in r^{\Jmc_{L_2}}$.
  Thus $\Imc' \subseteq \Jmc_{L_2}$.
\end{proof}

Now Lemma~\ref{lem:firsthardcorr} is a direct consequence of
Lemma~\ref{lem:alc_abs_ra_equal} and the statements in
$\Omc' \setminus \Omc'^-$.


\section{Proofs for Section~\ref{sec:alc_ca_2exp_lb}}

%
%

Our aim is to prove Theorem~\ref{thm:secondhard}. We again repeat the
theorem for the reader's convenience.

\thmsecondhard*

Recall that
$\ALC^\mn{sym}$ is \ALC with a single role name $s$ that must be
interpreted as a reflexive and symmetric relation. We reduce simple CQ
evaluation in $\ALC^\mn{sym}$. 

Let $\Omc$ be an $\ALC^\mn{sym}$
ontology, $A_0$ a concept name, and $q$ a Boolean CQ such that it is
to be decided whether \mbox{$\Omc,A_0 \models q$}. We assume w.l.o.g.\ that \Omc
is in \emph{negation normal form} (NNF), that is, negation is only applied to concept names, but not to compound concepts.
For an $\ALC^\mn{sym}$ concept $C$ we use $\overline{C}$ to denote the result of converting $\neg C$ to NNF.
With $\mn{cl}(\Omc)$ we denote the smallest set that contains all concepts used in $\Omc$ (possibly inside a CQ) 
and is closed under subconcepts and under $\overline{\, \cdot\,}$.
Let $\exists s.C_1, \dots, \exists s.C_{n}$ be all
concepts in $\mn{cl}(\Omc)$ that quantify existentially over $s$.

Let $C_0 = A_0 \sqcap (E^0 \sqcup E^1)$.
We construct (in polynomial time) an $\ALC[\textnormal{ca}]$
ontology~$\Omc'$ such that $\Omc,A_0 \not \models q$ iff $C_0$ is $L$-satisfiable w.r.t.\ $\Omc'$. 
In $\Omc'$, we represent the role name $s$ by the
composition $r^-;r$ where $r$ is a normal (neither reflexive nor
symmetric) role name. We use all concept names from $\mn{sig}(\Omc)$ as
well as
 \begin{itemize}
  \item a concept name $A_C$ for every $C \in \mn{cl}(\Omc)$; 
  \item concept names $E^0, E^1$ that represent endpoints of symmetric edges; 
  \item concept names $N^i_j$ for all $i \in \{0,1\}$ and $j \in \{0, \dots, n\}$
   where $j$ denotes the number of $s$-children of the current node
   (in a tree-shaped model),
 \item concept names $M^{i,j}_{k}$ for all $i \in \{0,1\}$,
   $j \in \{1, \dots, n\}$, and $k \in \{1, \dots, j\}$ that represent midpoints of the
   composition $r^-;r$;
 \item auxiliary role names $\widehat r$ an $u$.
\end{itemize}




  

W.l.o.g.\ we assume that $A_0 \in \mn{sub}(\Omc)$. For every concept $C \in \mn{cl}(\Omc)$, $\Omc'$ contains CIs that
axiomatize the semantics of the corresponding concept names~$A_C$:
 \begin{align}
   A_{B}            &\equiv_L B               &&\text{ for all $B \in 
                                                 \mn{cl}(\Omc) \cap \Cbf$}\\
   A_{\overline C}  &\equiv_L \neg A_C        &&\text{ for all $C \in \mn{cl}(\Omc)$}\\
   A_{C \sqcap D}   &\equiv_L A_C \sqcap A_D  &&\text{ for all $C \sqcap D \in \mn{cl}(\Omc)$}\\
   A_{C \sqcup D}   &\equiv_L A_C \sqcup A_D  &&\text{ for all $C \sqcup D \in \mn{cl}(\Omc)$}\\
   A_C              &\sqsubseteq_L A_D        &&\text{ for all $C \sqsubseteq D \in \Omc$}.   
 \end{align}

 %
 At endpoints of symmetric edges, we guess the number of children and
 introduce corresponding midpoints and endpoints.  Note that each
 midpoint has two $r$-successor endpoints and we will later merge one
 of them with the $\widehat r$-predecessor.  For all $i \in \{0,1\}$,
 $j \in \{1, \dots, n\}$, and $k \in \{1, \dots, j\}$ we add the
 following CIs to $\Omc'$:
 \begin{align}
  E^i &\sqsubseteq_L N^i_0 \sqcup \dots \sqcup N^i_n \\
  N^i_j &\sqsubseteq_L \bigsqcap_{1 \leq k \leq j} \exists {\widehat r}.M^{i,j}_k\\      
  M^{i,j}_k &\sqsubseteq_L \exists r.E^0 \sqcap \exists r.E^1          
 \end{align}
For an existential restriction $\exists s.C$,
there has to be a child that satisfies $C$ or the element itself satisfies $C$ (reflexivity).
Similarly, if we have a universal restriction $\forall s.C$ then $C$ is satisfied in all children and the element itself.
For all $i \in \{0,1\}$ and $j \in \{0, \dots, n\}$ we add the following CIs to $\Omc'$:
 \begin{align}
  A_{\exists s.C} \sqcap N^i_j &\sqsubseteq_L A_C\, \sqcup  \bigsqcup_{1 \leq k \leq j} \forall {\widehat r}.(M^{i,j}_k \rightarrow \notag\\
   &\forall r.(E^{1-i} \rightarrow A_C)) \\
  A_{\forall s.C} &\sqsubseteq_L A_C\\
   \exists r.A_{\forall s.C} &\sqsubseteq_L \forall r.A_C
 \end{align}

 \begin{figure}
  \centering
  \includegraphics[width=0.6\columnwidth]{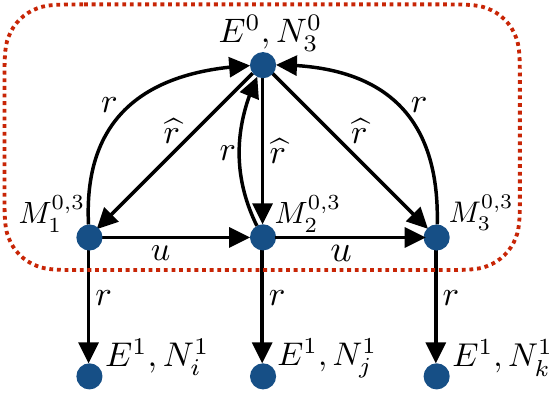}
  \caption{Example of the symmetric structure of $r$ between endpoints ($E^i$). 
  Inside the dotted line is an $L$-ensemble for the case of three children ($N^0_3$).}
  \label{fig:ca_gadget}
 \end{figure}

 For utility purposes (because CQs in abstractions have to be connected), we connect successive midpoints with the $u$-role.
 For all $i \in \{0,1\}$, $j \in \{2, \dots, n\}$, and $k \in \{1, \dots, j-1\}$ we add the following CI to $\Omc'$:
\begin{align}
  M^{i,j}_k &\sqsubseteq_L \exists u.M^{i,j}_{k+1}
\end{align}
 Now we construct the concept abstractions that will ``create'' $L$-ensembles as depicted in Figure~\ref{fig:ca_gadget}.
 First, we add concept abstractions such that the CQ in it matches onto $L$-ensembles consisting of 
 one $N^i_j$ element and the $j$ corresponding $\widehat r$-successors (midpoints).
For all $i \in \{0,1\}$, $j \in \{1, \dots, n\}$, and $k \in \{1, \dots, j\}$ we add the following concept abstraction to $\Omc'$:
 \begin{align}
   L'{:}\top &\abs L{:} N^i_j(x_0) \land \bigwedge_{1 \leq k \leq j} \widehat r(x_0, x_k) \land M^{i,j}_k(x_j)
 \end{align}
 Intuitively for a midpoint $M^{i,j}_k$ we want the $r$-successor satisfying $E^i$ to match back onto the $\widehat r$-predecessor also satisfying $E^i$.
 For this purpose, the CQs in the concept abstractions below
 contain for each midpoint an $r$-edge from the midpoint to the mentioned endpoint.
 As the concept abstractions below match onto the same $L$-ensembles as the ones above and partial overlaps are forbidden by the refinement function semantics, 
 we achieve the desired structure.
 
 For each $i \in \{0,1\}$ and $j \in \{1, \dots, n\}$ we add the following $j$ concept abstractions to $\Omc'$:
 \begin{align}
  L'{:}\top &\abs L{:} E^i(x_0) \land r(x_1, x_0) \land \notag\\
   &\bigwedge_{1 \leq k < j} M^{i,j}_k(x_k) \land u(x_k, x_{k+1}) \land M^{i,j}_{k+1}(x_{k+1}) \notag\\
   & \qquad \vdots \notag\\
   L'{:}\top &\abs L{:} E^i(x_0) \land r(x_j, x_0) \land \notag\\
   &\bigwedge_{1 \leq k < j} M^{i,j}_k(x_k) \land u(x_k, x_{k+1}) \land M^{i,j}_{k+1}(x_{k+1})
\end{align}

Let $\widehat q$ be obtained from $q$ by first replacing every concept atom $C(x) \in q$ with $A_C(x)$,
then adding $D(x)$ with $D = E^0 \sqcup E^1$ for every variable $x \in \mn{var}(q)$, and
lastly replacing every role atom $s(x,y) \in q$ with $r(z,x) \land r(z,y)$, where $z$ is a fresh variable.
Now we add a concept abstraction that checks whether the query matches into level $L$:
\begin{align}
  L'{:}\bot \abs L{:}\widehat q
\end{align}

This concludes the construction of $\Omc'$ and now we will prove that we can use it to reduce from the complement of the simple CQ evaluation problem.

\begin{lemma}
  \label{lem:5.2equivalence}
  $\Omc,A_0 \not \models q$ iff $C_0$ is $L$-satisfiable w.r.t.\ $\Omc'$.
\end{lemma}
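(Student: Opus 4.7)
I plan to establish both directions by translating between $\ALC^{\mn{sym}}$-models of $\Omc$ and level-$L$ fragments of $A$-interpretations satisfying $\Omc'$.

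\medskip
\noindent\textbf{Forward direction.} Given a model $\Imc$ of $\Omc$ with $d_0 \in A_0^\Imc$ and $\Imc \not\models q$, first tree-unravel $\Imc$ at $d_0$ along its non-reflexive $s$-edges, trimming to at most $n$ children per node (one witness per satisfied $\exists s.C_i$). This yields a tree-shaped model $\Imc^*$ of $\Omc$ that still refutes $q$, via composition with the unravelling homomorphism. Use depth parity to $2$-color $\Imc^*$, guaranteeing that every non-reflexive $s$-edge of $\Imc^*$ joins distinctly-colored endpoints. Now build $\Jmc_L$ on $\Delta^{\Imc^*}$ augmented with a fresh midpoint $m_{d,e}$ per non-reflexive $s$-edge $(d,e)$, together with the prescribed $\widehat r$-, $r$-, and $u$-edges and $N^i_j$, $M^{i,j}_k$ labels, and set $A_C^{\Jmc_L} := C^{\Imc^*}$ for $C \in \mn{cl}(\Omc)$. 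Populate $\Jmc_{L'}$ with one element per ensemble required by (13) or (14) and let $\rho$ map them to the obvious tuples. Verifying (1)--(12) then reduces to $\Imc^* \models \Omc$, and (15) holds vacuously because $\Imc^* \not\models q$ prevents any homomorphism $\widehat q \to \Jmc_L$.

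\medskip
\noindent\textbf{Backward direction.} Given a model $\Jmc$ of $\Omc'$ with $d_0 \in C_0^{\Jmc_L}$, extract an $\ALC^{\mn{sym}}$-interpretation $\Imc$ by setting $\Delta^{\Imc} = (E^0 \cup E^1)^{\Jmc_L}$, $A^{\Imc} = A^{\Jmc_L} \cap \Delta^{\Imc}$ for $A \in \mn{sig}(\Omc)$, and letting $s^{\Imc}$ be the reflexive, symmetric relation $\{(d,d) \mid d \in \Delta^{\Imc}\} \cup \{(d,e) \mid d \neq e,\, \exists m: (m,d),(m,e) \in r^{\Jmc_L}\}$. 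The concept abstractions (13) and (14) together with the non-overlap condition $(*)$ pin down the midpoint structure, forcing each midpoint to have exactly two $r$-successors, one being its unique $\widehat r$-predecessor and one an endpoint of opposite $E$-color. The heart of the proof is a structural induction showing $C^{\Imc} = A_C^{\Jmc_L} \cap \Delta^{\Imc}$ for all $C \in \mn{cl}(\Omc)$: equivalences (1)--(4) handle the propositional cases, while for modal $C$ the CIs (9)--(11) give the $\supseteq$ inclusion and the $\subseteq$ inclusion is recovered by applying the same CIs to the NNF-negation $\overline C$ (also in $\mn{cl}(\Omc)$) and using $A_{\overline C} \equiv_L \neg A_C$ from (2). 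Combined with (5), this gives $\Imc \models \Omc$. Finally, any putative homomorphism $h: q \to \Imc$ would lift via midpoints to a homomorphism $\widehat h: \widehat q \to \Jmc_L$, contradicting (15) since $\Jmc_{L'}$ cannot contain an element of $\bot$.

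\medskip
\noindent\textbf{Main obstacle.} The crux is the interplay between the two families of concept abstractions and the non-overlap condition $(*)$. In the forward direction, each midpoint participates simultaneously in the type-(13) ensemble centered at its $\widehat r$-predecessor and in the $j$ type-(14) ensembles anchored at the same endpoint; these ensembles must coincide \emph{as tuples} rather than merely as sets, otherwise $(*)$ fails. The tree-shape of $\Imc^*$ together with the depth-parity coloring are exactly what make the choice of endpoint canonical and the tuples consistent. Dually, in the backward direction the soundness of the inductive step for $\exists s.C$ relies on that same structural regularity being forced purely by $(*)$ together with (6)--(8) and (13)--(14), leaving no freedom in how midpoints attach to their endpoints.
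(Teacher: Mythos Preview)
Your proposal is correct and follows essentially the same approach as the paper: tree-unravel to bounded outdegree with a depth-parity $E^0/E^1$ colouring in the forward direction, and in the backward direction extract $\Imc$ on the $E^0\cup E^1$ elements, prove the key equivalence $d\in A_C^{\Jmc_L}\Leftrightarrow d\in C^{\Imc}$ by structural induction (using the NNF-dual for the reverse inclusion), and lift a hypothetical homomorphism of $q$ to one of $\widehat q$ to contradict~(15). Your identification of the main obstacle --- that the (13)- and (14)-ensembles must literally coincide as tuples so that a single $L'$-element can serve as abstracting object --- is exactly the point the paper exploits; just note that in the backward direction the ensemble argument does not force midpoints to have \emph{exactly} two $r$-successors, only that one $r$-successor is the $\widehat r$-predecessor $d$, which is all that is needed for the $\exists s.C$ case.
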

We split the proof of Lemma~\ref{lem:5.2equivalence} into a soundness part
(``if'' direction) and a completeness part (``only if'' direction).

\subsection{Soundness}
 Assume that
  $\Omc,A_0 \not\models q$. Then there exists a model \Imc of \Omc
  with $A_0^\Imc \neq \emptyset$ and $\Imc\not\models q$. Using a
  standard unraveling construction, we can obtain from \Imc a
  tree-shaped interpretation \Jmc with $A_0^\Jmc \neq \emptyset$ and
  $\Jmc\not\models q$, see for example \cite{DBLP:conf/cade/Lutz08}.
  Here, an interpretation $\Imc$ (of $\ALC^\mn{sym}$) is
  \emph{tree-shaped} if the undirected graph $G_\Imc = (V,E)$ with
  $V = \Delta^\Imc$ and
  $E = \{\{d,e\} \mid (d,e) \in s^\Imc \text{ and } d \neq e\}$ is a
  tree. We may choose an element as the root and then speak about
  children and parents.

  We may thus assume that \Imc itself is tree-shaped. By adding
  subtrees, we may easily achieve that if
  $d \in (\exists s . C_i)^\Imc$, then there is a \emph{child} $e$ of
  $d$ such that $d \in C_i^\Imc$, that is, existential restrictions
  are always satisfied in children in the tree-shaped \Imc.  By
  dropping subtrees, we may then additionally achieve that the
  outdegree of every node is bounded by $n$.

We have to construct an A-interpretation $\Jmc = (\Abf_{\Omc'},\prec,(\Jmc_L)_{L \in \Abf_{\Omc'}},\rho)$ 
that is a model of $\Omc'$ and such that $C_0^{\Jmc_L} \neq \emptyset$.

In the following, we
construct a sequence $\Jmc^0,\Jmc^1,\dots$ of A-interpretations and obtain 
the desired model \Jmc in the limit. 
Let $v_0 \in \Delta^{\Imc}$ be the root of $\Imc$.
We start with defining $\Jmc^0 = (\prec, (\Jmc^0_L)_{L \in \Abf_\Omc}, \rho^0)$:
\begin{align*}
  \Delta^{\Jmc^0_L} &= \{v_0\}\\
  A^{\Jmc^0_L} &= \{v_0\}       &&\text{for all $A \in \mn{cl}(\Omc) \cap \Cbf$ with $v_0 \in A^\Imc$}\\
  ({A_C})^{\Jmc^0_L} &= \{v_0\} &&\text{for all $C \in \mn{cl}(\Omc)$ with $v_0 \in C^\Imc$}\\
  ({E^0})^{\Jmc^0_L} &= \{v_0\}\\
  \Delta^{\Jmc_{L'}^0} &= \{d'\} &&\text{with $d'$ being a fresh domain element}
\end{align*}

To construct $\Jmc^{i+1} = (\prec, (\Jmc^{i+1}_L)_{L \in \Abf_\Omc}, \rho^{i+1})$ from $\Jmc^{i}$ we start with 
$\Jmc^{i+1} = \Jmc^i$. Now we construct a structure similar to Figure~\ref{fig:ca_gadget}.
Let $d \in \Delta^{\Jmc^i_L}$ with $d \in E^i$ for some $i \in \{0,1\}$ but $d \not \in N^i_j$ for all $i \in \{0,1\}$ and $j \in \{0, \dots, n\}$.
Let $Q = e_1, \dots, e_m$ be the children of $d$ in $\Imc$. We do the following (disjoint) operations for all such $d$ :
  \begin{itemize}
    \item add $d$ to $(N^i_m)^{\Jmc^0_L}$;
    \item for all $l \in \{1, \dots, m\}$ we \begin{itemize}
      \item add a fresh domain element $e_l$ to $\Delta^{\Jmc_L^i}$,
      \item add $e_l$ to $(E^{1-i})^{\Jmc_L^i}$,
      \item add $e_l$ to $A$ for all $A \in \mn{cl}(\Omc) \cap \Cbf$ with $e_l \in A^\Imc$,
      \item add $e_l$ to $A_C$ for all $C \in \mn{cl}(\Omc)$ with $e_l \in C^\Imc$,
      \item add a fresh domain element $v_l$ to $\Delta^{\Jmc_L^i}$,
      \item add $v_l$ to $(M^{i,j}_l)^{\Jmc_L^i}$,
      \item add $(d, v_l)$ to $\widehat r^{\Jmc_L^i}$,
      \item add $(v_l, d)$ to $r^{\Jmc_L^i}$, 
      \item add $(v_l, e_l)$ to $r^{\Jmc_L^i}$ and
      \item add $(v_{l-1}, v_l)$ to $u^{\Jmc_L^i}$ if $l \geq 2$;
    \end{itemize}
  \item if $m \geq 0$, then \begin{itemize}
    \item add a fresh element $d'$ to $\Delta^{\Jmc_{L'}^i}$, and
    \item add $(d, v_1, \dots, v_l)$ to $\rho^i_L(d')$.
  \end{itemize}
  \end{itemize}
  \smallskip
Note that $m \leq n$ is always true because of how we defined tree-shapedness for $\Imc$ above.
As announced, we take $\Jmc$ to be the limit of the constructed sequence of $\Jmc^0,\Jmc^1,\dots$.

What remains to show is that $\Jmc$ is a model of $\Omc'$ and that $C_0$ is satisfied on $L$.
We do this step by step starting with the basic condition that $\Jmc$ is an $A$-interpretation.
Subsequently, we will show that $C_0$ is satisfied on $L$ and that all statements in $\Omc$ are satisfied.
\begin{lemma}
  \label{lem:ainterpret}
  $\Jmc =  (\Abf_{\Omc'},\prec,(\Jmc_L)_{L \in \Abf_{\Omc'}},\rho)$ is an A-interpretation.
\end{lemma}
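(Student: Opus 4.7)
The plan is to verify the three defining clauses of an A-interpretation from Section~\ref{sect:abstractionDLs}. Since $\Abf_{\Omc'} = \{L, L'\}$ with only $L \prec L'$, the directed graph $(\Abf_{\Omc'},\{(L',L)\})$ is trivially a tree, settling the first clause. Disjointness of $\Delta^{\Jmc_L}$ and $\Delta^{\Jmc_{L'}}$ follows from inspection of the construction: in $\Jmc^0$ the element $v_0$ is placed only on level~$L$ and $d'$ only on level~$L'$, and at every subsequent step the newly introduced $e_l$'s and $v_l$'s go to $\Delta^{\Jmc_L}$ while each new $d'$ goes to $\Delta^{\Jmc_{L'}}$. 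So the substantive obligation is condition~$(*)$: every element of $\Delta^{\Jmc_L}$ appears in at most one tuple in the range of $\rho$.

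I would argue this in two moves. First, $\rho$ is extended only in the final sub-step of the construction of $\Jmc^{i+1}$, which fires for an element $d$ precisely when $d \in (E^0 \cup E^1)^{\Jmc^i_L}$ and $d \notin N^{i'}_{j'}$ for all admissible $i',j'$. Once the sub-step fires, $d$ is added to $N^i_m$, so the trigger condition can never recur for $d$. Hence each element of $\Delta^{\Jmc_L}$ is the \emph{head} of at most one ensemble added to $\rho$ across the whole chain $\Jmc^0, \Jmc^1, \ldots$.

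Second, the non-head entries of the tuple $(d, v_1, \ldots, v_m)$ added when the sub-step fires for $d$ are exactly the freshly generated $v_l$'s, which are placed only into the $M^{i,m}_l$ extensions and never into $E^0 \cup E^1$. So a $v_l$ can never satisfy the trigger condition and therefore cannot head any later ensemble either. Symmetrically, every element that does head some ensemble was introduced either as $v_0$ in $\Jmc^0$ or as some $e_l$ in an earlier step, and at the moment of its creation it is not placed into any tuple of $\rho$. Combining these two observations yields the uniqueness clause of $(*)$.

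The main obstacle is simply that $(*)$ must hold in the limit $\Jmc$ rather than only at each finite stage, so I would phrase the two observations above as inductive invariants preserved by the passage from $\Jmc^i$ to $\Jmc^{i+1}$ and transfer them to the limit. The remaining A-interpretation bookkeeping---that $\rho$ is defined only on pairs $(e,L)$ with $L \prec L(e)$, and that each $\Delta^{\Jmc_L}$ is non-empty---is immediate, since $\rho$ is populated only at fresh $d' \in \Delta^{\Jmc_{L'}}$ and the step~$0$ initialisation already makes both domains non-empty.
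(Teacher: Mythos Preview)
Your proposal is correct and follows essentially the same approach as the paper: both verify the tree condition trivially from $\Abf_{\Omc'}=\{L,L'\}$, note non-emptiness from the step~$0$ initialisation, and establish $(*)$ by observing that each added tuple consists of one endpoint (which is simultaneously marked $N^i_m$ and hence never processed again) together with freshly created midpoints. Your write-up is more explicit than the paper's---you separately treat head versus non-head positions, mention domain disjointness, and flag the passage to the limit---but the underlying argument is the same.
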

\begin{proof}
  To prove the lemma we go through the three conditions of A-interpretations.
  \begin{itemize}
    \item we only have two abstraction levels with $L \prec L'$. Consequently
    $(\Amc_{\Jmc},\{(L',L)\mid L \prec L' \})$ is obviously a tree;
    \item by definition, $\Delta^{\Jmc^0_L}$ $\Delta^{\Jmc^0_{L'}}$ are non-empty and thus the same holds for $\Delta^{\Jmc_L}$ and $\Delta^{\Jmc_{L'}}$;
    \item whenever we add a tuple to the range of $\rho$ in the construction of $\Jmc$, the elements in it are one endpoint $e$ and a number of fresh midpoints.
    In the same construction step, we add $e$ to $N^i_j$ for some $i \in \{1, 2\}$ and $j \in \{1, \dots, n\}$ which implies that we will never consider it again in 
    another step of constructing $\Jmc$.
    Thus there is always at most one $d \in \Delta^{\Imc}$ such that $e$ occurs in $\rho_{L}(d)$.
  \end{itemize}
\end{proof}

Now we are ready to prove the following lemma which establishes the soundness of the reduction.
\begin{lemma}
  \label{lem:aimodel}
  $\Jmc =  (\Abf_{\Omc'},\prec,(\Jmc_L)_{L \in \Abf_{\Omc'}},\rho)$ is a model of $\Omc'$ 
  and $C_0^{\Jmc_L} \neq \emptyset$.
\end{lemma}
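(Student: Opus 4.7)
The plan is to verify the required claim $C_0^{\Jmc_L} \neq \emptyset$ directly from the construction, and then check satisfaction of each labeled CI and each concept abstraction in $\Omc'$. For $C_0 = A_0 \sqcap (E^0 \sqcup E^1)$: since $A_0^\Imc \neq \emptyset$ we choose $v_0$ to be a witness of $A_0$ in the tree-shaped $\Imc$; the construction of $\Jmc^0$ then adds $v_0$ to $E^0$ and to $A_0^{\Jmc_L}$ (as $A_0 \in \mn{cl}(\Omc) \cap \Cbf$). By Lemma~\ref{lem:ainterpret} we already know that $\Jmc$ is an A-interpretation, so what remains is to verify the inclusion and abstraction statements.

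For CIs (1)--(5), an easy induction on the structure of $C \in \mn{cl}(\Omc)$ shows that the invariant $d \in A_C^{\Jmc_L} \Leftrightarrow d \in C^\Imc$ holds for every $d \in \Delta^{\Jmc_L}$ that is a copy of an element of $\Delta^\Imc$; combined with $\Imc \models \Omc$ this immediately yields (1)--(5). The structural CIs (6)--(8) and (12) hold by construction: whenever an endpoint $d \in E^i$ with $m$ $s$-children in $\Imc$ is processed, we add $d$ to $N^i_m$, create midpoints $v_1,\dots,v_m$ in $M^{i,m}_1,\dots,M^{i,m}_m$ with the required $\widehat r$-, $r$-, and $u$-edges, and attach to each $v_l$ the two endpoints $d \in E^i$ and $e_l \in E^{1-i}$. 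For the CIs (9)--(11) about $\exists s.C$ and $\forall s.C$: the $\forall s.C$ case uses reflexivity of $s^\Imc$ (to obtain $d \in C^\Imc$, hence $d \in A_C$) and the fact that the only $r$-successors of a midpoint $v_l$ are two endpoints that are $s$-connected in $\Imc$; the $\exists s.C$ case uses the tree-shape assumption that existential witnesses lie either at $d$ itself (yielding the $A_C$ disjunct) or at some child $e_k$ (yielding the $\forall\widehat r(M^{i,j}_k \to \forall r(E^{1-i} \to A_C))$ disjunct at midpoint $v_k$).

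For the concept abstractions in (11) and (12), the plan is to argue that every match of the CQ corresponds to a unique gadget, so that the abstract element $d' \in \Delta^{\Jmc_{L'}}$ that the construction creates alongside that gadget with $\rho_L(d') = (d,v_1,\dots,v_m)$ provides the required witness. The atom $N^i_j(x_0)$ in the CQ of (11), respectively the $u$-chain plus the label $E^i(x_0)$ in the CQs of (12), forces all variables to be matched inside a single gadget centered at some $d \in E^i$ and rules out mapping $x_0$ to an outer endpoint $e_l \in E^{1-i}$. Finally, for abstraction (13): any match $h$ of $\widehat q$ in $\Jmc_L$ sends each original variable of $q$ to an endpoint (forced by the $E^0 \sqcup E^1$ atoms) and the fresh variable $z$ introduced for each $s(x,y) \in q$ to a midpoint whose only $r$-successors form a pair $\{d, e_l\}$ that is $s$-connected in $\Imc$; together with the $A_C$-invariant this would yield a homomorphism from $q$ to $\Imc$, contradicting $\Imc \not\models q$. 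The main obstacle is the case analysis for (11) and (12), where one has to verify that CQs cannot match across two distinct gadgets; this reduces to checking that $\widehat r$- and $u$-edges never cross gadget boundaries in our construction.
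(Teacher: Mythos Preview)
Your outline tracks the paper's proof closely: check $C_0^{\Jmc_L}\neq\emptyset$ from $\Jmc^0$, handle CIs~(1)--(5) via the invariant $d\in A_C^{\Jmc_L}\Leftrightarrow d\in C^\Imc$, dismiss the structural CIs by construction, argue (9)--(11) from tree-shapedness and reflexivity/symmetry of~$s$, and derive a contradiction from any match of $\widehat q$. On one point you are actually \emph{more} careful than the paper: the paper declares the gadget abstractions (13)--(14) ``trivial'', whereas you correctly note that one must verify that their CQs cannot match across gadgets, and your reduction of this to ``$\widehat r$- and $u$-edges never leave a gadget'' is exactly the right observation. (Your numbering of the abstractions is off by two relative to the paper; the gadget abstractions are (13)--(14) and the query abstraction is~(15).)

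There is one genuine gap. Your argument for CIs~(1)--(5) establishes the invariant only at endpoints (copies of elements of $\Delta^\Imc$), but these labeled CIs must hold at \emph{every} element of $\Jmc_L$, including the midpoints~$v_l$. In the construction as written, a midpoint receives only the label $M^{i,j}_l$, so $v_l\notin A_C^{\Jmc_L}$ and $v_l\notin A_{\overline C}^{\Jmc_L}$ for every $C\in\mn{cl}(\Omc)$, which outright violates CI~(2), $A_{\overline C}\equiv_L\neg A_C$. The paper's proof shares this gap (it simply says the remaining statements follow ``directly from the construction''); the fix is at the level of the construction rather than the proof---also assign to every midpoint the $A_C$- and $B$-labels (for $B\in\mn{sig}(\Omc)\cap\Cbf$) of its parent endpoint. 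Since midpoints are never in $E^i$ or $N^i_j$ and have no $\widehat r$-successors, this does not disturb (6)--(12), and (1)--(5) then hold at midpoints by the same invariant applied to the parent.
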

\begin{proof}
  We use the fact that $\Jmc$ is indeed an A-interpretation as proven in the previous lemma.
  First, we prove the second part of the claim.
  Since we assumed that $C_0^{\Imc} \neq \emptyset$, the construction of $\Jmc_L$ 
  implies the desired $C_0^{\Jmc_L} \neq \emptyset$.

  Now we show that $\Jmc$ is a model of $\Omc'$.
  We prove below that the CIs and CA of (9) to (11) and (15) are satisfied in $\Jmc$.
  For all other statements it is trivial to verify that they are satisfied
  since it follows directly from the construction of $\Jmc$.
  
  For (9) let us assume $d \in (A_{\exists s.C} \sqcap N^i_j)^{\Jmc_L}$ for some $i \in \{0,1\}$ and $j \in \{1, \dots, n\}$. 
  Then by the definition of $\Jmc$, we have $d \in (\exists s.C)^\Imc$.
  The tree-shapedness, as defined above, implies that there is a child $e$ of $d$ in $\Imc$ such that $e \in C^\Imc$.
  The construction of $\Jmc$ then lets us obtain a midpoint $v \in \Delta^{\Jmc_L}$ with 
  $(d, v) \in \widehat r^{\Jmc_L}$, $(v, e) \in r^{\Jmc_L}$, and $e \in A_C$ 
  and thus (9) is satisfied.

  For (10) let us assume $d \in (A_{\forall s.C})^{\Jmc_L}$.
  By definition of $\Jmc$ we have $d \in (A_{\forall s.C})^\Imc$ and because $\Imc$ is a model of $\Omc$
  the construction of $\Jmc$ immediately implies $d \in A_C$ and thus (10) is satisfied.

  For (11) let us assume $d \in \exists r.(A{\forall s.C})^{\Jmc_L}$.
  By the construction of $\Jmc$, this can only be the case for a midpoint, that is for all $e \in \Delta^{\Jmc_L}$ with $(d,e) \in r^{\Jmc_L}$ and $e \in A_{\forall s.C}$
  we have $e \in \Delta^\Imc$ and thus $e \in (\forall s.C)^\Imc$.
  The construction then also implies that for all $e' \in \Delta^{\Jmc_L}$ with $(d,e') \in r^{\Jmc_L}$ we have $(e,e') \in s^\Imc$ (and $(e', e) \in s^\Imc$).
  Now we can use that $\Imc$ is a model to obtain $e' \in C^\Imc$ and thus $e' \in (A_C)^{\Jmc_L}$, as required. 

  For (15) we argue by contradiction. 
  Assume that there is a homomorphism $h$ from $\widehat q$ to $\Jmc_L$.
  We show that $h|_{\mn{var}(q)}$ is a homomorphism from $q$ to $\Imc$ contradicting our original assumtion that $\Imc \not \models q$.
  Consider any $C(x) \in q$.
  By definition of $\widehat q$ we have $A_C(x) \in \widehat q$ and $(E^0 \sqcup E^1)(x) \in \widehat q$. 
  Consequently $\widehat h$ maps $x$ to an endpoint $e = h(x)$ in $\Jmc_L$  with $e \in A_C^{\Jmc_L}$ and
  the construction of $\Jmc_L$ then implies that $e \in C^\Imc$.

  Consider any $s(x,y) \in q$.
  By definition of $\widehat q$, we have $r(z,x) \in \widehat q$ and $r(z,y) \in \widehat q$ with $z$ a variable not occurring in $q$.
  Furthermore, $x$ and $y$ are again mapped to endpoints like in the concept atom case.
  The construction of $\Jmc_L$ then implies that either $h(x)$ is a child of $h(y)$ or the inverse is true in $\Imc$.
  Thus $(h(x), h(y)) \in s^\Imc$ and $(h(y), h(x)) \in s^\Imc$ which proves that $h|_{\mn{var}(q)}$ is a homomorphism
  from $q$ to $\Imc$ and consequently leading to a contradiction since we assumed that $\Imc \not \models q$.

\end{proof}

\subsection{Completeness}
Assume that $C_0$ is $L$-satisfiable w.r.t.\ $\Omc'$ and let
  $\Jmc = (\Abf_{\Omc'},\prec,(\Jmc_L)_{L \in \Abf_{\Omc'}},\rho)$ be
  a model of $\Omc'$ with $C_0^{\Jmc_L} \neq \emptyset$.
  We need to construct a model $\Imc$ of $\Omc$ such that $A_0^\Imc \neq \emptyset$
  and $\Imc \not \models q$.

   We define $\Imc$ as follows:
$$
\begin{array}{r@{\;}c@{\;}l}
  \Delta^{\Imc} &=&  \{d \mid d \in \Delta^{\Jmc_L} \text{ and } d \in ({E^0} \sqcup {E^1})^{\Jmc_L}\}                     \\[1mm]
  s^{\Imc}      &=&       \{(d,d) \mid d \in ({E^0} \sqcup {E^1})^{\Jmc_L}\}\; \cup                                    \\[1mm]
                & &                \{(d,d') \mid d,d' \in ({E^0} \sqcup {E^1})^{\Jmc_L} \text{ and } e \in \Delta^{\Jmc_L}\} \\[1mm]
                & &                \text{ with } (e,d) \in r^\Jmc \text{ and } (e,d') \in r^\Jmc \}                 \\[1mm]
  A^\Imc        &=& A^\Jmc      \text{\qquad for all $A \in \mn{sig}(\Omc) \cap \Cbf$}
\end{array}
$$

What remains to show is that $\Imc$ is a model of $\Omc$,  $A_0^\Imc \neq \emptyset$, and $\Imc \not \models q$. 
To prove this we first prove an intermediary lemma which intuitively says that the $A_C$ satisfied at the endpoints in $\Jmc$
coincide with the concepts $C$ satisfied by the corresponding element in $\Imc$. 
\begin{lemma}
  \label{lem:alc_ca_lb_sym_eq_ca}
  For every element $d \in (E^0 \sqcup E^1)^{\Jmc_L}$ and $\ALC^\mn{sym}$ concept $C \in \mn{cl}(\Omc)$ we have
  $d \in (A_C)^{\Jmc_L}$ iff $d \in C^\Imc$
\end{lemma}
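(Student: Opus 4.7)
The plan is to proceed by structural induction on $C \in \mn{cl}(\Omc)$, proving only the forward direction $d \in A_C^{\Jmc_L} \Rightarrow d \in C^\Imc$. The converse then follows for free: since $\mn{cl}(\Omc)$ is closed under $\overline{\,\cdot\,}$ and $\Omc$ is in NNF, if $d \notin A_C^{\Jmc_L}$ then $d \in A_{\overline C}^{\Jmc_L}$ by the equivalence $A_{\overline C} \equiv_L \neg A_C$, and applying the forward direction to $\overline C$ yields $d \in \overline C^\Imc$, hence $d \notin C^\Imc$. The base case $C = B$ for $B \in \mn{sig}(\Omc) \cap \Cbf$ is immediate from $A_B \equiv_L B$ and the definition $B^\Imc = B^{\Jmc_L}$, and the Boolean cases reduce to the IH via the equivalences axiomatizing $A_{C_1 \sqcap C_2}$ and $A_{C_1 \sqcup C_2}$.

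The case $C = \forall s.C'$ is the easier modal case. Assume $d \in A_{\forall s.C'}^{\Jmc_L}$. The CI $A_{\forall s.C'} \sqsubseteq_L A_{C'}$ gives $d \in A_{C'}^{\Jmc_L}$, hence $d \in C'^\Imc$ by the IH; this handles the reflexive loop $(d,d) \in s^\Imc$ that is built into the definition of $\Imc$. For any other $s$-successor $e$ of $d$ in $\Imc$, the definition of $s^\Imc$ supplies a midpoint $v$ with $(v,d), (v,e) \in r^{\Jmc_L}$, so $v \in (\exists r.A_{\forall s.C'})^{\Jmc_L}$; the CI $\exists r.A_{\forall s.C'} \sqsubseteq_L \forall r.A_{C'}$ then forces $e \in A_{C'}^{\Jmc_L}$, and the IH yields $e \in C'^\Imc$.

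The hard case is $C = \exists s.C'$, where the no-overlap condition on ensembles does the essential work. From $d \in A_{\exists s.C'}^{\Jmc_L}$ together with $d \in E^i$ and the CI $E^i \sqsubseteq_L N^i_0 \sqcup \cdots \sqcup N^i_n$ we obtain $d \in N^i_j$ for some $j$. The CI governing $A_{\exists s.C'} \sqcap N^i_j$ then offers two alternatives: either $d \in A_{C'}^{\Jmc_L}$, in which case the IH plus reflexivity of $s^\Imc$ give $d \in (\exists s.C')^\Imc$ immediately; or there is some $k \leq j$ such that every $M^{i,j}_k$-labeled $\widehat r$-successor of $d$ has all of its $E^{1-i}$-$r$-successors in $A_{C'}$. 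The CIs $N^i_j \sqsubseteq_L \exists \widehat r.M^{i,j}_k$ and $M^{i,j}_k \sqsubseteq_L \exists r.E^{1-i}$ produce such a midpoint $v$ and an $r$-successor $e \in E^{1-i}$ of $v$, and the IH delivers $e \in C'^\Imc$. It remains to verify $(d,e) \in s^\Imc$, which by the definition of $s^\Imc$ amounts to exhibiting a common $r$-predecessor of $d$ and $e$ in $\Jmc_L$; the edge $(v,e)$ is given, so the task reduces to establishing $(v,d) \in r^{\Jmc_L}$.

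This last step is the main obstacle and is where the ensemble machinery comes in. The $N^i_j$-based concept abstraction forces the existence of an $L'$-element whose refinement is a tuple $(d, v_1, \ldots, v_j)$ consisting of $d$ together with a choice of $\widehat r$-successor midpoints including $v$. The $k$-th instance of the $E^i$-based family of concept abstractions forces the existence of an $L'$-element whose refinement is a tuple $(d', v_1', \ldots, v_j')$ where $v_k' = v$, $d' \in E^i$ is an $r$-successor of $v$, and $v_1', \ldots, v_j'$ form a $u$-chain of midpoints through $v$. Since both tuples contain the midpoint $v$, Condition~($*$) on A-interpretations forces the two tuples to coincide, which gives $d = d'$ and hence $(v,d) \in r^{\Jmc_L}$, as required. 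I expect the bulk of the writing effort to go into making this ensemble-identification argument precise, in particular into verifying that the midpoints in the two abstractions can be lined up in matching positions so that the no-overlap condition really does collapse the two ensembles.
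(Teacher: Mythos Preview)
Your proposal is correct and follows the same approach as the paper: structural induction for the forward direction, the NNF-closure trick for the converse, and the ensemble-overlap argument for the $\exists s$ case.

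One point to be careful about when you flesh out the ensemble argument: you write that the $k$-th $E^i$-based abstraction yields an ensemble $(d', v_1', \ldots, v_j')$ with $v_k' = v$, but for that abstraction to fire with $v$ in position $k$ you need a $u$-chain of $M^{i,j}_1, \ldots, M^{i,j}_j$-labeled elements passing through $v$ at position $k$, and the CIs $M^{i,j}_l \sqsubseteq_L \exists u.M^{i,j}_{l+1}$ only give forward $u$-successors, not a chain through your particular $v$. The paper handles this with an intermediate step: it first applies the $N^i_j$-abstraction and one instance of the $E^i$-abstraction (say the first), observes that both resulting ensembles contain the $M^{i,j}_1$-labeled $\widehat r$-successor of $d$, and concludes by non-overlap that the $\widehat r$-successors $w_1, \ldots, w_j$ of $d$ themselves form the $u$-chain. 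Only then does it apply the $k$-th $E^i$-abstraction, now knowing that $v = w_k$ sits on this chain. Your final sentence shows you anticipate exactly this alignment issue; just be aware that it takes two applications of the non-overlap condition, not one.
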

\begin{proof}
  To prove the ``if'' direction of the lemma, we do an induction on $C$.
\begin{itemize}
  \item If $C = B$, then (1) and the definition of $\Imc$ give us $d \in B^\Imc$.
  \item If $C = \neg B$, then (2) and the definition of $\Imc$ give us $d \in (\neg B)^\Imc$.
  \item If $C = D \sqcap D'$, then (3) implies $d \in (A_D \sqcap A_{D'})^{\Jmc_L}$.
  Now we can apply the IH twice to get $d \in (D \sqcap D')^\Imc$.
  \item If $C = D \sqcup D'$, then (4) implies $d \in (A_D \sqcup A_{D'})^{\Jmc_L}$.
  Now we can apply the IH twice to get $d \in (D \sqcup D')^\Imc$.
  \item If $C = \exists s.D$, then we use multiple CIs and concept abstractions for our reasoning.
  Firstly, $d \in N^i_j$ for some $i \in \{0,1\}$ and $j \in \{0,\dots, n\}$ because of (6).
  Next, we consider (9).
  If $j = 0$, then $d \in A_D$ and by the IH $d \in D^\Imc$ and thus $d \in (\exists s.D)^\Imc$, as required.
  If $j > 0$, then (7) results in $j$ (at least one) $\widehat r$-successors of $d$
  that satisfy $M^{i,j}_k$ for all $k \in \{1, \dots, j\}$.

  Let $M = \{e \mid (d,e) \in \widehat r^{\Jmc_L} \text{ and } e \in M^{i,j}_k \text{ for some } k \in \{1, \dots, j\}\}$ be the set of those $\widehat r$-successors.
  Point (8) tells us that each $e \in M$ has two $r$-successors, one that satisfies $E^i$ and one that satisfies $E^{1-i}$.
  For the next step, we are specifically interested in the $E^{1-i}$ successor in conjunction with (9).

  Point (9) together with the previous part of the proof imply that from $d$ there is a $\widehat r;r$ path 
  via one element $v \in M$ to an element $e \in (A_D \sqcap E^{1-i})^{\Jmc_L}$.
  If we apply the IH on $e$ we get $e \in D^\Imc$.
  What remains to show is that there is also an $r$ edge pointing from $v$ to $d$.

  The concept abstractions in (13) make $d$ and all the $v' \in M$ an $L$-ensemble $\bar e$.
  In particular, $v$ is part of $\bar e$.
  Point~$(12)$ implies that there is a chain $\{(v_1, v_2), \dots, (v_{j-1}, v_{j})\} \subseteq u^{\Jmc_L}$ 
  such that $v_l \in  M^{i,j}_l$ for all $l \in \{1, \dots, j\}$.
  Point~$(14)$ takes this chain together with one $r$-successor that satisfies $E^i$ and makes it an $L$-ensemble $\bar e'$.
  It is easy to see that $\bar e$ and $\bar e'$ must overlap partially and the semantics of refinement functions
  then implies that they overlap fully, that is, $\bar e = \bar e'$.
  Thus the elements in $M$ are connected by a chain of $u$-edges.

  Now consider the following concept abstraction of Point~(14).
  \begin{align*}
    L'{:}\top &\abs L{:} E^i(x_0) \land r(x_k, x_0) \land \\
    &\bigwedge_{1 \leq l < j} M^{i,j}_l(x_l) \land u(x_l, x_{l+1}) \land M^{i,j}_{l+1}(x_{l+1}) 
  \end{align*}
  This forces another $L$-ensemble $\bar e''$ into existence. 
  In this ensemble, $v$ has an $r$-successor that satisfies $E^i$.
  Again it is straightforward to see that $\bar e, \bar e'$ and $\bar e''$ partially overlap and must therefore be equal.
  This implies that the $r$-successor of $v$ is $d$.
  We have now shown that there is an $r^-;r$ path from $d$ via $v$ to $e$ and $e \in D^\Imc$.
  Thus by the definition of $\Imc$, we have $d \in (\exists s.D)^\Imc$, as required.
  
  \item If $C = \forall s.D$, then first let us consider $d$ itself. 
  By (10) we have $d \in (A_D)^{\Jmc_L}$ and with the IH we obtain $d \in D^\Imc$. 
  Otherwise let $e \in \Delta^\Imc$ be an element with $e \neq d$ and $(d,e) \in s^\Imc$.
  We have to show that $e \in D^\Imc$.
  The definition of $\Imc$ implies that $e \in (E^i)^{\Jmc_L}$ for some $i \in \{0,1\}$ and
  that there is a $v \in \Delta^{\Jmc_L}$ with $(v,d) \in r^{\Jmc_L}$ and $(v,e) \in r^{\Jmc_L}$.
  This $v$ then has to satisfy $\exists r.A_{\forall s.D}$ and due to (11) we then have $e \in (A_D)^{\Jmc_L}$.
  Now we can use the IH to obtain $e \in D^\Imc$ and thus $d \in (\forall s.D)^\Imc$, as required.

\end{itemize}
\noindent
  ``only if''.  We argue by contrapositive. If
  $d \not \in (A_C)^{\Jmc_L}$, then $d \in (A_{\overline C})^{\Jmc_L}$ by (2).
   We can now apply the `$\Leftarrow$' direction
  to get $d \in \overline C^\Imc$, and thus $d \not \in C^\Imc$.
\end{proof}

Now proving that $\Imc$ is a model of $\Omc$ is very straightforward.
Let $d \in C^\Imc$ for any $d \in \Delta^\Imc$.
We have to show that $d \in D^\Imc$ for all CIs $C \sqsubseteq D \in \Omc$.
Lemma~\ref{lem:alc_ca_lb_sym_eq_ca} implies that $d \in (A_C)^{\Jmc_L}$.
Point (5) in the construction of $\Omc'$ implies $d \in (A_D)^{\Jmc_L}$ and now we can again apply
Lemma~\ref{lem:alc_ca_lb_sym_eq_ca} to obtain $d \in D^\Imc$. 
Thus $\Imc$ is a model of $\Omc$.

By assumption $C_0^\Jmc \neq \emptyset$ and thus there is a $d_0 \in (A_0 \sqcap (E^0 \sqcup E^1))^{\Jmc_L}$.
The construction of $\Imc$ then implies $d_0 \in C_0^\Imc$.

Lastly, we have to show that $\Imc \not \models q$.
We prove this by contradiction.
Assume that there is a homomorphism $h$ from $q$ to $\Imc$.
We want to show that we can extend $h$ to also be a homomorphism from $\widehat q$ to $\Jmc_L$ 
which would make $\Jmc$ no longer a model of $\Omc'$ because of (15).

First, consider $h$ as a partial homomorphism from $\widehat q$ to $\Jmc_L$.
Let $C(x) \in \widehat q$ be a concept atom.
There are two cases. 
If $C = A_D$ for some $D(x) \in q$ then $h(x) \in C^{\Jmc_L}$ because of Lemma~\ref{lem:alc_ca_lb_sym_eq_ca}.
Otherwise $C = E^0 \sqcup E^1$ and then $h(x) \in C^{\Jmc_L}$ simply by the construction of $\Imc$.
What remains to show is that all role atoms in $\widehat q$ are satisfied.

Let $r(z,x) \in \widehat q$ be a role atom. 
The definition of $\widehat q$ implies that there is a second role atom $r(z,y)$ in $\widehat q$
and that $s(x,y)$ or $s(y,x) \in q$.
The two cases are analogous so let us assume w.l.o.g.\ that $s(x,y) \in q$.
Since the construction of $s^\Imc$ is defined in such a way that $(h(x), h(y)) \in s^\Imc$ if and only if 
there is an $e \in \Delta^{\Jmc_L}$ with $(e, h(x)) \in r^{\Jmc_L}$ and $(e, h(y)) \in r^{\Jmc_L}$
we can simply extend $h$ by $h(z) = e$ and now we have $(h(z), h(x)) \in r^{\Jmc_L}$, as required.
Thus $\Jmc_L \models \widehat q$ which leads to a contradiction since (15) would now imply that $\Jmc$ is not a model of $\Omc'$.


\section{Proofs for Section~\ref{sect:repfree}}
\label{app:repfree}

We prove Proposition~\ref{thm:reprefok} and complete the proof of
Theorem~\ref{thm:firstundec}, starting with the former.

\thmreprefok*
\begin{proof}
  Assume that $C_0$ is $L_0$-satisfiable w.r.t.\ $\Omc$.  Then there
  is a model 
  $\Imc = (\Abf_\Omc,\prec,(\Imc_L)_{L \in {\AI}},\rho)$ of
  $\Omc$ with $C_0^{\Imc_{L_0}} \neq \emptyset$.  Assume that for some
  $L_1,L_2 \in \Abf_\Omc$ and $d_0 \in \Delta^{\Imc_{L_1}}$ we have
  $\rho_{L_2}(d_0) = \bar d$ with $\bar d = d_1 \cdots d_n$ such that
  an element $e_0$ occurs in positions $i_1, \dots, i_m$ of $\bar d$
  with $m \geq 2$.

  We construct a model
  $\Imc' = (\Imc_\Omc,\prec,(\Imc'_L)_{L \in {\AI}},\rho')$ of $\Omc$
  with $C^{\Imc'_L} \neq \emptyset$ such that $e_0$ no longer occurs
  multiple times in $\rho'_{L_2}(d_0)$ (in fact, it does not occur at
  all anymore). Let $e_1,\dots,e_m$ be fresh domain elements.  We
  define an A-interpretation
  $\Imc' = (\Abf_\Omc,\prec,(\Imc'_L)_{L \in {\AI}},\rho')$ that
  differs from \Imc only in $\Imc'_{L_2}$ and $\rho'$. To start, set
    $$
    \Delta^{\Imc'_{L_2}} = \Delta^{\Imc_{L_2}} \cup \{ e_1,\dots,e_m \}.
    $$
    For every $d \in \Delta^{\Imc'_{L_2}}$, put $d^\uparrow=d$ if
    $d \in \Delta^{\Imc_{L_2}}$ and $d^\uparrow=e_0$ if
    $d \notin \Delta^{\Imc_{L_2}}$. Then define
    $$
    \begin{array}{rcl}
      A^{\Imc'_{L_2}} &=& \{ d \in \Delta^{\Imc'_{L_2}} \mid 
                          d^\uparrow \in A^{\Imc_{L_2}} \} \\[1mm]
      r^{\Imc'_{L_2}} &=& \{ (d,e) \in \Delta^{\Imc'_{L_2}} \times \Delta^{\Imc'_{L_2}} \mid 
                          (d^\uparrow, e^\uparrow) \in r^{\Imc_{L_2}} \}
    \end{array}
    $$
    for all concept names $A$ and role names $r$ and define $\rho'$
    like $\rho$ except that $\rho'(d_0)$ is obtained from $\rho(d_0)$
    by replacing the occurrence of $e$ in position $i_j$ with $e_j$,
    for $1 \leq j \leq m$.
    
 Clearly, $C^{\Imc_L} \neq \emptyset$ implies $C^{\Imc'_L} \neq
 \emptyset$
 and $\Imc'$ satisfies all role inclusions in \Omc since \Imc does.
 Moreover, it is easy to show the following by induction on the
 structure of $C$:
 \\[2mm]
 {\bf Claim.} For all \ALCI-concepts $C$ and $d \in \Delta^{\Imc'_{L_2}}$:
 $d \in C^{\Imc'_{L_2}}$ iff
 $d^\uparrow \in C^{\Imc_{L_2}}$.
 \\[2mm]
 It follows that $\Imc$ also satisfies all concept inclusion in \Omc.
 We next show that it also satisfies all concept and role refinements,
 and thus is a model of \Omc.

 Let $q(\bar x)$ be a CQ such that \Omc contains a concept refinement
 $L_2{:}q(\bar x) \rfn L_1{:}C$. 
 Further assume that $d' \in C^{\Imc'_{L_1}}$.  Then
 $d' \in C^{\Imc_{L_1}}$ by the claim and thus $\rho(d')$ is defined and the
 function $h$ that satisfies $h(\bar x)=\rho(d')$ is a homomorphism
 from $q$ to $\Imc_{L_2}$. By definition of $\rho'$ and the
 claim,
 the function $h'$ that satisies $h'(\bar x)=\rho'(d')$ is a homomorphism
 from $q$ to $\Imc'_{L_2}$. The case of role refinements is similar.

 By applying this construction multiple times, we eventually arrive at
 a repetition-free model of \Omc, as desired.
\end{proof}

We next present the remaining details of the proof of
Theorem~\ref{thm:firstundec} which we repeat here for the reader's convenience.

\thmfirstundec*

Recall that we want to construct an
ontology $\Omc$ and choose a concept name $S$ and abstraction level
$L$ such that $S$ is $L$-satisfiable w.r.t.\ $\Omc$ iff $M$ does not
halt on the empty tape. A part of \Omc has already been given in the
main part of the paper. We proceed with the construction of \Omc.

We first step up the initial configuration:
\begin{align*}
  S &\sqsubseteq_L A_{q_0} \sqcap A_\square  \sqcap B_\shortrightarrow)\\
  B_\shortrightarrow &\sqsubseteq_L \forall t.(A_\square \sqcap B_\shortrightarrow)
\end{align*}
Transitions are again indicated by marker concepts:
$$
  A_q \sqcap A_\sigma \sqsubseteq_L \forall c.B_{q', \sigma', M}
$$
for all $q \in Q$ and $\sigma \in \Gamma$ such that $\delta(q,\sigma)
= (q',\sigma',M)$. We implement transitions as follows:
$$
  B_{q, \sigma, M} \sqsubseteq_L A_\sigma \quad
  \exists t.B_{q, \sigma, L} \sqsubseteq_L A_q \quad
  B_{q, \sigma, R} \sqsubseteq_L \forall t.A_q
$$
for all $q \in Q$, $\sigma \in \Gamma$, and $M \in \{L, R\}$.

We next mark cells that are not under the head:
$$
\begin{array}{r@{\;}c@{\;}lcr@{\;}c@{\;}l}
  A_q &\sqsubseteq_{L_n}& \forall t.H_{\shortleftarrow} &&
                                                              \exists t.A_q &\sqsubseteq_{L_n}& H_{\shortrightarrow} \\[1mm]
    H_{\shortleftarrow} &\sqsubseteq_{L_n}& \forall
                                               t.H_{\shortleftarrow} &&
\exists t.H_{\shortrightarrow} &\sqsubseteq_{L_n}& H_{\shortrightarrow}
\end{array}
$$
for all $q \in Q$.
We can now say that cells that are not under the head do not change:
$$
  (H_\shortleftarrow \sqcup H_\shortrightarrow) \sqcap A_\sigma \sqsubseteq_{L_n} \forall c_i.A_\sigma
$$
for all $\sigma \in \Gamma$ and $i \in \{1,2\}$.
State, content of tape, and head position must be unique:
$$
\begin{array}{c}
  A_q \sqcap A_{q'} \sqsubseteq_{L_n} \bot
  \quad A_\sigma \sqcap A_{\sigma'} \sqsubseteq_{L_n} \bot \\[1mm]
(H_\shortleftarrow \sqcup H_\shortrightarrow) \sqcap A_q \sqsubseteq_{L_n} \bot
\end{array}
$$
for all $q, q' \in Q$ and $\sigma,\sigma' \in \Gamma$ with $q \neq q'$
and $\sigma \neq \sigma'$.

Since we reduce from the complement of the halting problem, 
we do not want the halting state to be reached: 
$$
  A_{q_h} \sqsubseteq_L \bot. 
$$
This concludes the construction of $\Omc$. 
\begin{lemma}
  \label{lem:rep_free_corr}
  $S$ is $L$-satisfiable w.r.t.\ $\Omc$ iff $M$ does not halt on the
  empty tape.
\end{lemma}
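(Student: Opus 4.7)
The plan is to prove the two directions of the biconditional separately, exploiting the standard correspondence between grid-shaped models and TM computations, with the repetition-free semantics driving grid closure.

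For the direction $M$ does not halt $\Rightarrow$ $S$ is $L$-satisfiable, I would exhibit a canonical model $\Imc$. On level $L$, take $\Delta^{\Imc_L} = \mathbb{N} \times \mathbb{N}$ with $t^{\Imc_L} = \{((i,j),(i,j+1)) \mid i,j \in \mathbb{N}\}$ and $c^{\Imc_L} = \{((i,j),(i+1,j)) \mid i,j \in \mathbb{N}\}$, so that row $i$ encodes the $i$-th configuration of the infinite computation of $M$ on the empty tape. Label $(i,j)$ with $A_\sigma$ for the symbol stored in cell $j$ at time $i$, with $A_q$ at the head position, and propagate $B_{q,\sigma,M}$, $H_{\shortleftarrow}$, $H_{\shortrightarrow}$ consistently; put $(0,0) \in S^{\Imc_L}$. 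On level $L'$, pick an arbitrary one-element domain, let $\rho$ be everywhere undefined, and set $\bot^{\Imc_{L'}} = \emptyset$. The CIs are immediate from the validity of the computation, and $A_{q_h} \sqsubseteq_L \bot$ holds because the halting state is never reached. The concept abstractions $L'{:}\bot \abs L{:}q$ hold vacuously provided no repetition-free answer to any of the listed CQs exists in $\Imc_L$: the grid contains no $t$- or $c$-self-loops, no 2-cycles, and no triangles, ruling out the six auxiliary CQs; and the main CQ forces $x_4 = x'_4$ by grid closure, so its only answers are non-repetition-free.

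For the converse, let $\Imc$ be a model of $\Omc$ and $d_{0,0} \in S^{\Imc_L}$. The CI $\top \sqsubseteq_L \exists t.\top \sqcap \exists c.\top$ lets me pick, inductively, elements $d_{i,j} \in \Delta^{\Imc_L}$ with $(d_{i,j}, d_{i,j+1}) \in t^{\Imc_L}$ and $(d_{i,j}, d_{i+1,j}) \in c^{\Imc_L}$. The core step is a grid-closure claim: $d_{i+1,j+1}$ can be chosen so that both $(d_{i,j+1}, d_{i+1,j+1}) \in c^{\Imc_L}$ and $(d_{i+1,j}, d_{i+1,j+1}) \in t^{\Imc_L}$. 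Suppose not; then in $\Imc_L$ there are witnesses $x_1 = d_{i,j}$, $x_2 = d_{i+1,j}$, $x_3 = d_{i,j+1}$ together with some $x_4$ $c$-successor of $x_3$ and some $x'_4 \neq x_4$ that is a $t$-successor of $x_2$. This gives an answer to the main CQ $q(\bar x)$, and to obtain a forbidden repetition-free answer I argue that none of the remaining identifications among $\{x_1, x_2, x_3, x_4, x'_4\}$ can hold: each such identification (e.g.\ $x_1 = x_2$, $x_2 = x_3$, $x_1 = x_4$, $x_3 = x'_4$, $x_1 = x'_4$, $x_4 = x_2$) produces a repetition-free match of one of the six auxiliary CQs, contradicting the corresponding $L'{:}\bot \abs L{:}q$. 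Hence $\bar x \mapsto (x_1, x_2, x_3, x_4, x'_4)$ is repetition-free, contradicting the main abstraction. Once closure is established, the tape-labeling CIs and transition CIs force row $i$ of the grid to encode the $i$-th configuration of $M$'s computation; the CI $A_{q_h} \sqsubseteq_L \bot$ then rules out ever entering the halting state, so $M$ does not halt.

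The main obstacle is the case distinction in the closure argument, specifically verifying that the six auxiliary forbidden CQs collectively cover every non-repetition-free degeneracy of the main CQ. One must enumerate all pairs of variables in $\{x_1,\dots,x_4,x'_4\}$ that could be equal, and check that each such identification either (i)~implies closure already (so there is nothing to forbid) or (ii)~implies the existence of a repetition-free homomorphism from one of the six auxiliary CQs—e.g.\ $x_1 = x_4$ yields a $c$-self-loop or a 2-cycle, $x_2 = x_3$ combined with the edges yields the triangle pattern $t(x_1,x_2) \wedge c(x_1,x_2)$, and so on. A secondary subtlety is checking in the soundness direction that the trivial choice of $\rho$ and the one-point $L'$-domain is consistent with condition $(*)$, the tree-shape of $\prec$, and non-emptiness of interpretation domains; all three hold trivially here.
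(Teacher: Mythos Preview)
Your proposal is correct and follows essentially the same approach as the paper's own proof: build the canonical $\mathbb{N}\times\mathbb{N}$ grid model for the ``if'' direction (with a singleton $L'$-domain and empty $\rho$), and for the ``only if'' direction extract a grid from any model by arguing that the five elements $x_1,x_2,x_3,x_4,x'_4$ cannot all be distinct (main CQ) while the six auxiliary CQs exclude every identification except $x_4=x'_4$. The only cosmetic difference is that the paper traverses the grid diagonal-first and then fills cells upwards/downwards, whereas you do a direct double induction; the core repetition-freeness argument is identical.
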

\begin{proof}
  ``if''. Assume that $M$ does not halt on the empty tape. We 
  use $|w|$ to denote the length of a word $w \in \Sigma^*$ and
  construct an A-interpretation \Imc as follows:
  \begin{align*}
    \Delta^{\Imc_L}               &= \{c_i t_j \mid i,j \in \mathbb{N}\}\\
    S^{\Imc_L}                    &= \{c_0 t_0\}\\
    A_q^{\Imc_L}                  &= \{c_i t_j \mid K_i = wqw' \text{ and } |w| = j\}\\
    A_\sigma^{\Imc_L}             &= \{c_i t_j \mid K_i = \sigma_0 \cdots \sigma_k q \sigma_{k+1} \cdots \sigma_l \text{ and either }\\
                                  & \quad\;\;\; \sigma = \sigma_j  \text{ or } j > l \text{ and } \sigma = \square\}\\
    B_{\shortrightarrow}^{\Imc_L} &= \{c_0 t_j \mid j \in \mathbb{N}\}\\
    B_{q', \sigma', M}^{\Imc_L}   &= \{c_i t_j \mid K_{i-1} = wq\sigma w' \text{, } |w| = j \text{ and } \\
                                  & \quad\;\;\; \delta(q, \sigma) = (q', \sigma', M)\}\\
    H_{\shortrightarrow}^{\Imc_L}          &= \{c_i t_j \mid K_i = wqw' \text{ and } j < |w|\}\\
    H_{\shortleftarrow}^{\Imc_L}           &= \{c_i t_j \mid K_i = wqw' \text{ and } j > |w|\}\\
    t^{\Imc_L}                    &= \{(c_i t_j, c_i t_{j+1}) \mid i,j \in \mathbb{N}\}\\
    c^{\Imc_L}                    &= \{(c_i t_j, c_{i+1} t_j) \mid i,j \in \mathbb{N}\}
  \end{align*}
  for all $q \in Q$, $\sigma \in \Gamma$, and $M \in \{L, R\}$.

  For $\Imc_{L'}$ we simply define a singleton domain $\Delta^{\Imc_{L'}} = \{d\}$ and no concept extensions;
  $\rho$ we leave undefined for all elements.
  It is straightforward to verify that \Imc is a model of \Omc, and
  thus $S$ is $L$-satisfiable w.r.t.\ \Omc.

  \smallskip

  ``only if''. Assume that $S$ is $L$-satisfiable w.r.t.\ \Omc and let
  \Imc be a model of \Omc with $S^{\Imc_L} \neq \emptyset$. We
  identify a grid in $\Imc_L$ in the form of a mapping
  $\pi: \mathbb{N} \times \mathbb{N} \rightarrow \Delta^{\Imc_L}$ such
  that for all $i,j \in \mathbb{N} \times \mathbb{N}$,
  $(\pi(i,j),\pi(i+1,j)) \in t^{\Imc_L}$ and
  $(\pi(i,j),\pi(i,j+1)) \in c^{\Imc_L}$. We construct such a $\pi$ 
  in several steps:
  \begin{itemize}

  \item 
  Start with choosing some $d \in S^{\Imc_L}$ and set
  $\pi(0,0)=d$.

\item Complete the diagonal. If $\pi(i,i)$ is defined and
  $\pi(i+1,i+1)$ undefined, then choose $d,e \in \Delta^{\Imc_L}$ such
  that $(\pi(i,i),d) \in t^{\Imc_L}$ and $(d,e) \in c^{\Imc_L}$ and
  set $\pi(i+1,i)=d$ and $\pi(i+1,i+1)=e$.

\item Add grid cells in the upwards direction. If
  $\pi(i,i), \pi(i+1,i),\pi(i+1,i+1)$ are defined and $\pi(i,i+1)$ is
  undefined, then choose $d \in \Delta^{\Imc_L}$ such that
  $(\pi(i,i),d) \in c^{\Imc_L}$ and set $\pi(i+1,i)=d$. We have to
  argue that $(\pi(i+1,i),\pi(i+1,i+1)) \in t^{\Imc_L}$. There is some
  $e \in \Delta^{\Imc_L}$ with $(\pi(i+1,i),e) \in t^{\Imc_L}$.  The
  elements $\pi(i,i),\pi(i+1,i),\pi(i,i+1),\pi(i+1,i+1),e)$ cannot all
  be distinct because then the first concept abstraction in \Omc
  applies, meaning that $\Imc$ cannot be a model of \Omc. Thus at
  least two out of these five elements have to be identical; then, the
  first concept abstraction does not apply since we work under the
  repetition-free semantics. The additional concept abstraction in
  \Omc, however, rules out any identifications except that of
  $\pi(i+1,i+1)$ and $e$. Consequently,
  $(\pi(i+1,i),\pi(i+1,i+1)) \in t^{\Imc_L}$ as required.
  
\item Add grid cells in the downwards direction. Analogous to
  the upwards case.

  \end{itemize}
  We can now read off the computation of $M$ on the empty tape from the
  grid in a straightforward way, using the concept names $A_\sigma$
  for the tape content and $A_q$ for the state and head position.
  Since $A_{q_h}^{\Imc_L}=\emptyset$, the computation is non-terminating.
\end{proof}

\section{Proofs for Section~\ref{sect:nontree}}

We complete the proof of Theorem~\ref{thm:nontreeundec} which we repeat here for the reader's convenience.  

\thmnontreeundec*

Recall that
we want to construct an ontology $\Omc$ and choose a concept name $S$
and abstraction level $L$ such that $S$ is $L$-satisfiable w.r.t.\
$\Omc$ iff $M$ does not halt on the empty tape. A part of \Omc has
already been given in the main body of the paper. The construction of
\Omc is completed by adding the concept inclusions presented in
Appendix~\ref{app:repfree}.

\begin{lemma}
  \label{lem:dag_corr}
  $S$ is $L$-satisfiable w.r.t.\ $\Omc$ iff $M$ does not halt on the
  empty tape.
\end{lemma}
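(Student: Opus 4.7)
The plan is to follow the same two-direction template as Lemma~\ref{lem:rep_free_corr}, with the only real difference being how we extract, respectively enforce, the grid structure on level $L$.

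For the ``if'' direction, assuming $M$ does not halt on the empty tape, I would build a canonical A-interpretation $\Imc$ whose level-$L$ interpretation is the $\mathbb{N} \times \mathbb{N}$ grid indexed by configurations and tape positions, exactly as in the proof of Lemma~\ref{lem:rep_free_corr}, but now additionally labeled with $X_1,\ldots,X_4$ in the obvious $2\times 2$ tiling pattern (with $\pi(0,0) \in X_1^{\Imc_L}$). The concept and role names $A_q, A_\sigma, B_{q',\sigma',M}, H_{\shortleftarrow}, H_{\shortrightarrow}, t, c$ are populated from the computation in the same way. For each $i\in\{1,\dots,4\}$ and each grid position labelled $X_i$ and possessing the $q_i$-pattern with an additional $t$-edge closing the cell, I introduce one $L_i$-element $d$ with $U_i^{\Imc_{L_i}}=\{d\}$ and set $\rho_{L}(d)$ to the corresponding tuple of four grid nodes. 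Because the grid is honest and closes, the concept refinements $L{:}\,q_i\wedge t(x_3,x_4)\rfn L_i{:}\,U_i$ are trivially satisfied, and the concept abstractions $L_i{:}\,U_i \abs L{:}\,q_i$ have exactly the needed witnesses. The labeling with $X_1,\ldots,X_4$ guarantees that an $L_i$-ensemble and an $L_j$-ensemble ($i\neq j$) are the only pairs that share elements on level $L$, so condition $(*)$ on refinement functions is preserved.

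For the ``only if'' direction, I would take any model $\Imc$ of $\Omc$ with $S^{\Imc_L}\neq\emptyset$ and construct a grid map $\pi:\mathbb{N}\times\mathbb{N}\to\Delta^{\Imc_L}$ by completing the diagonal, the upwards fill, and the downwards fill, exactly as in Lemma~\ref{lem:rep_free_corr}. The only step that differs is the grid-cell closure argument. Suppose $\pi(i,j), \pi(i+1,j), \pi(i,j+1)$ are defined with, say, $\pi(i,j)\in X_k^{\Imc_L}$, and pick some $t$-successor $e$ of $\pi(i,j+1)$; the four $c$- and $t$-edges together instantiate $q_k$, so by $L{:}\,q_k\wedge t(x_3,x_4)\rfn L_k{:}\,U_k$ there is a $d\in U_k^{\Imc_{L_k}}$ with $\rho_L(d)=(\pi(i,j),\pi(i+1,j),\pi(i,j+1),e)$. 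On the other hand $\pi(i,j), \pi(i+1,j), \pi(i,j+1), \pi(i+1,j+1)$ already match $q_k$, and the concept abstraction $L_k{:}\,U_k \abs L{:}\,q_k$ produces another $L_k$-ensemble $d'$ with $\rho_L(d')$ equal to that tuple. The no-overlap condition $(*)$ together with the fact that $\rho_L(d)$ and $\rho_L(d')$ share the first three components forces $d=d'$ and hence $e=\pi(i+1,j+1)$, closing the cell.

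The main obstacle in the plan is the last argument: we need the ``clash'' to happen only between two $L_k$-ensembles and never between an $L_k$- and an $L_j$-ensemble ($j\neq k$), since otherwise condition $(*)$ would prevent the sharing that is needed to propagate closure. This is why the four labels $X_1,\ldots,X_4$ and four abstraction levels $L_1,\ldots,L_4$ were introduced, and the downwards CIs $X_k\sqsubseteq_L\forall c.X_{\bullet}\sqcap\forall t.X_{\bullet}$ from the statement of the lemma must be checked to ensure that every grid position carries exactly one label and that the same $k$ is used on both sides of each closure argument. Once the grid is established, the remainder of the reduction --- reading off a non-halting computation from $A_q,A_\sigma,H_{\shortleftarrow},H_{\shortrightarrow}$ using the CIs listed in Appendix~\ref{app:repfree} --- is identical to the corresponding step in the proof of Lemma~\ref{lem:rep_free_corr} and so does not need to be repeated.
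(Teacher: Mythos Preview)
Your ``if'' direction is essentially the paper's construction. The ``only if'' direction, however, has two genuine gaps.

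First, the diagonal-then-fill strategy from Lemma~\ref{lem:rep_free_corr} cannot be transplanted here. That lemma's ontology contains $\top \sqsubseteq_L \exists t.\top \sqcap \exists c.\top$, so \emph{every} element has a $t$-successor. In the DAG-semantics ontology the relevant CIs are $A_t \sqsubseteq_L \exists t.A_t$, $A_t \sqsubseteq_L \exists c.A_c$, and $A_c \sqsubseteq_L \exists c.A_c$; only the bottom row is guaranteed $t$-successors. Thus when you write ``pick some $t$-successor $e$ of $\pi(i,j+1)$'', such an $e$ need not exist, and the diagonal cannot be extended beyond its first step. The paper instead completes the bottom horizontal first (using the $A_t$-CIs), then all verticals (using the $A_c$-CIs), and only afterwards argues by induction on the row index that the remaining horizontal $t$-edges appear.

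Second, your closure step uses the refinement $L{:}q_k\wedge t(x_3,x_4)\rfn L_k{:}U_k$ in the wrong direction: it says that every $U_k$-instance on $L_k$ \emph{refines into} a tuple satisfying $q_k\wedge t(x_3,x_4)$, not that every such tuple on $L$ has a $U_k$-abstractor. Moreover, your tuple $(\pi(i,j),\pi(i+1,j),\pi(i,j+1),e)$ does not even match $q_k$, since $q_k$ demands a $c$-edge from $x_2$ to $x_4$ and you only have a $t$-edge into $e$. The correct (and simpler) argument composes abstraction and refinement on the \emph{same} four-tuple: once the bottom horizontal and the two verticals give you $\pi(i,j),\pi(i+1,j),\pi(i,j+1),\pi(i+1,j+1)$ with the one $t$-edge and two $c$-edges required by $q_k$, the abstraction $L_k{:}U_k\abs L{:}q_k$ yields some $d\in U_k^{\Imc_{L_k}}$ with $\rho_L(d)$ equal to that tuple, and then the refinement forces this very tuple to satisfy $t(x_3,x_4)$, giving $(\pi(i,j+1),\pi(i+1,j+1))\in t^{\Imc_L}$ directly. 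No auxiliary element $e$ and no overlap comparison between two ensembles is needed.
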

\begin{proof}
  ``if''. Assume that $M$ does not halt on the empty tape. We
  construct an A-interpretation \Imc as follows:
  \begin{align*}
    \Delta^{\Imc_L} &= \{c_i t_j \mid i,j \in \mathbb{N}\}\\
    A_t^{\Imc_L} &= \{c_0 t_j \mid j \in \mathbb{N}\}\\
    A_c^{\Imc_L} &= \Delta^{\Imc_L}\\
    S^{\Imc_L} &= \{c_0 t_0\}\\
    A_q^{\Imc_L}                  &= \{c_i t_j \mid K_i = wqw' \text{ and } |w| = j\}\\
    A_\sigma^{\Imc_L}             &= \{c_i t_j \mid K_i = \sigma_0 \cdots \sigma_k q \sigma_{k+1} \cdots \sigma_l \text{ and either }\\
                                  & \quad\;\;\; \sigma = \sigma_j  \text{ or } j > l \text{ and } \sigma = \square\}\\
    B_{\shortrightarrow}^{\Imc_L} &= \{c_0 t_j \mid j \in \mathbb{N}\}\\
    B_{q', \sigma', M}^{\Imc_L}   &= \{c_i t_j \mid K_{i-1} = wq\sigma w' \text{, } |w| = j \text{ and } \\
                                  & \quad\;\;\; \delta(q, \sigma) = (q', \sigma', M)\}\\
    H_{\shortrightarrow}^{\Imc_L}          &= \{c_i t_j \mid K_i = wqw' \text{ and } j < |w|\}\\
    H_{\shortleftarrow}^{\Imc_L}           &= \{c_i t_j \mid K_i = wqw' \text{ and } j > |w|\}\\
    X_1^{\Imc_L} &= \{c_i t_j \mid i \bmod 2 = 0 \text{ and } j \bmod 2 = 0\}\\
    X_2^{\Imc_L} &= \{c_i t_j \mid i \bmod 2 = 0 \text{ and } j \bmod 2 = 1\}\\
    X_3^{\Imc_L} &= \{c_i t_j \mid i \bmod 2 = 1 \text{ and } j \bmod 2 = 0\}\\
    X_4^{\Imc_L} &= \{c_i t_j \mid i \bmod 2 = 1 \text{ and } j \bmod 2 = 1\}\\
    t^{\Imc_L} &= \{(c_i t_j, c_i t_{j+1}) \mid i,j \in \mathbb{N}\}\\
    c^{\Imc_L} &= \{(c_i t_j, c_{i+1} t_j) \mid i,j \in \mathbb{N}\}
  \end{align*}
  for all $q \in Q$, $\sigma \in \Gamma$, and $M \in \{L, R\}$.
  

  We use $\bar e^i_j$ to denote a $4$-tuple $\bar e^i_j = c_i t_j \cdot c_i t_{j+1} \cdot c_{i+1} t_j \cdot c_{i+1} t_{j+1}$
  with $c_i, t_j \in \Delta^{\Imc_L}$ and $i,j \in \mathbb{N}$.
  We define four sets of $4$-tuples that represent the answers to the four CQs in the abstractions:
  \begin{align*}
    Q_1 &= \{\bar e^i_j \mid  i \bmod 2 = 0 \text{ and } j \bmod 2 = 0\}\\
    Q_2 &= \{\bar e^i_j \mid  i \bmod 2 = 0 \text{ and } j \bmod 2 = 1\}\\
    Q_3 &= \{\bar e^i_j \mid  i \bmod 2 = 1 \text{ and } j \bmod 2 = 0\}\\
    Q_4 &= \{\bar e^i_j \mid  i \bmod 2 = 1 \text{ and } j \bmod 2 = 1\}
  \end{align*}
  Now we define the $\Imc_{L_k}$ for all $k \in \{1, \dots, 4\}$:
  \begin{align*}
    \Delta^{\Imc_{L_k}} &= \{d^i_j \mid \bar e^i_j \in Q_k\}\\
    U_i^{\Imc_{L_k}} &= \Delta^{\Imc_{L_k}}
  \end{align*}
  Next we add $\bar e^i_j$ to $\rho_L(d^i_j)$ for all $k \in \{1, \dots, 4\}$ and $d^i_j \in \Delta^{\Imc_{L_k}}$.
  It is straightforward to verify that \Imc is a model of \Omc, and
  thus $S$ is $L$-satisfiable w.r.t.\ \Omc.

  \smallskip

  ``only if''. Assume that $S$ is $L$-satisfiable w.r.t.\ \Omc and let
  \Imc be a model of \Omc with $S^{\Imc_L} \neq \emptyset$. We
  identify a grid in $\Imc_L$ in the form of a mapping
  $\pi: \mathbb{N} \times \mathbb{N} \rightarrow \Delta^{\Imc_L}$ such
  that for all $i,j \in \mathbb{N} \times \mathbb{N}$,
  $(\pi(i,j),\pi(i+1,j)) \in t^{\Imc_L}$ and
  $(\pi(i,j),\pi(i,j+1)) \in c^{\Imc_L}$. We construct such a $\pi$ 
  in several steps:
  \begin{itemize}

    \item 
    Start with choosing some $d \in S^{\Imc_L}$ and set
    $\pi(0,0)=d$.
  
  \item Complete the bottom horizontal. If $\pi(i,0)$ is defined and
    $\pi(i+1,0)$ undefined, then choose $d \in \Delta^{\Imc_L}$ such
    that $(\pi(i,0),d) \in t^{\Imc_L}$ and
    set $\pi(i+1,0)=d$.

  \item Complete the verticals. 
    If $\pi(i,j)$ is defined and
    $\pi(i,j+1)$ undefined, then choose $d \in \Delta^{\Imc_L}$ such
    that $(\pi(i,j),d) \in c^{\Imc_L}$ and
    set $\pi(i,j+1)=d$.

  \item Complete all horizontals.
    Now we have to argue that we can find horizontals such that the verticals and horizontals form a grid.
 
    For this purpose, we do an induction for all $j \geq 0$ proving that
    $(\pi(i,j),\pi(i+1,j)) \in t^{\Imc_L}$ and
    either $\pi(i,j) \in X_k^{\Imc_L}$ and $\pi(i+1,j) \in X_{k+1}^{\Imc_L}$ for any $k \in \{1,3\}$ or 
    $\pi(i,j) \in X_k^{\Imc_L}$ and $\pi(i+1,j) \in X_{k-1}^{\Imc_L}$ for any $k \in \{2,4\}$.

    For $j = 0$ this follows from the bottom horizontal we completed and the CIs $S \sqsubseteq X_1$, $X_1 \sqsubseteq \forall c.X_3 \sqcap \forall t.X_2$,
    and $X_2 \sqsubseteq \forall c.X_4 \sqcap \forall t.X_1$.

    Now let us argue for $j > 0$.
    By the IH we have $(\pi(i,j-1),\pi(i+1,j-1)) \in t^{\Imc_L}$.
    We can complete the two vertical edges as defined above to obtain
    $(\pi(i,j-1),\pi(i,j)) \in c^{\Imc_L}$ and $(\pi(i+1,j-1),\pi(i+1,j)) \in c^{\Imc_L}$.
    Now it is straightforward to prove that by the IH and the CIs concerning the $X_k$ 
    that one of the CQs $q_k$ with $k \in \{1,2,3,4\}$ as defined in Figure~\ref{fig:undec_part_ord_grid}
    can be matched onto $\pi(i,j-1) \pi(i+1,j-1) \pi(i,j) \pi(i+1,j)$.  
    The abstraction for $q_i$ followed by the refinement for $U_i$ then imply $(\pi(i,j),\pi(i+1,j)) \in t^{\Imc_L}$, as required.
    \end{itemize}
    We can now read off the computation of $M$ on the empty tape from the
    grid in a straightforward way, using the concept names $A_\sigma$
    for the tape content and $A_q$ for the state and head position.
    Since $A_{q_h}^{\Imc_L}=\emptyset$, the computation is non-terminating.
  
  \end{proof}

  \section{Proofs for Section~\ref{sect:qvar}}

  We complete the proof of Theorem~\ref{thm:qvar} which we repeat here for the reader's convenience.

  \thmqvar*

  Recall that we want to construct an $\ALCI$ ontology $\Omc$ and choose a concept name $S$
  and abstraction level $L$ such that $S$ is $L$-satisfiable w.r.t.\
  $\Omc$ iff $M$ does not halt on the empty tape. 
  We generate an infinite $t$-path with outgoing
  infinite $c$-paths from every node:
  $$
  \begin{array}{r@{\;}c@{\;}lcr@{\;}c@{\;}l}
    S &\sqsubseteq_L& A_t && A_t &\sqsubseteq_L& \exists t . A_t
    \\[1mm]
    A_t &\sqsubseteq_L& \exists c . A_c && A_c &\sqsubseteq_L& \exists c . A_c.
  \end{array}
  $$

  Next we want to form a grid. It is not fully connected to avoid overlapping ensembles (see Figure~\ref{fig:quant_grid}) 
  but pairs of elements that have distance $1$ in a normal grid have at most distance $3$ here.
  To achieve this, we first label the grid with concept names $X_1,\dots,X_4$ as
  shown in Figure~\ref{fig:quant_grid}, using the following CIs:
  \begin{figure}
    \begin{center}
    \includegraphics[width=0.9\columnwidth]{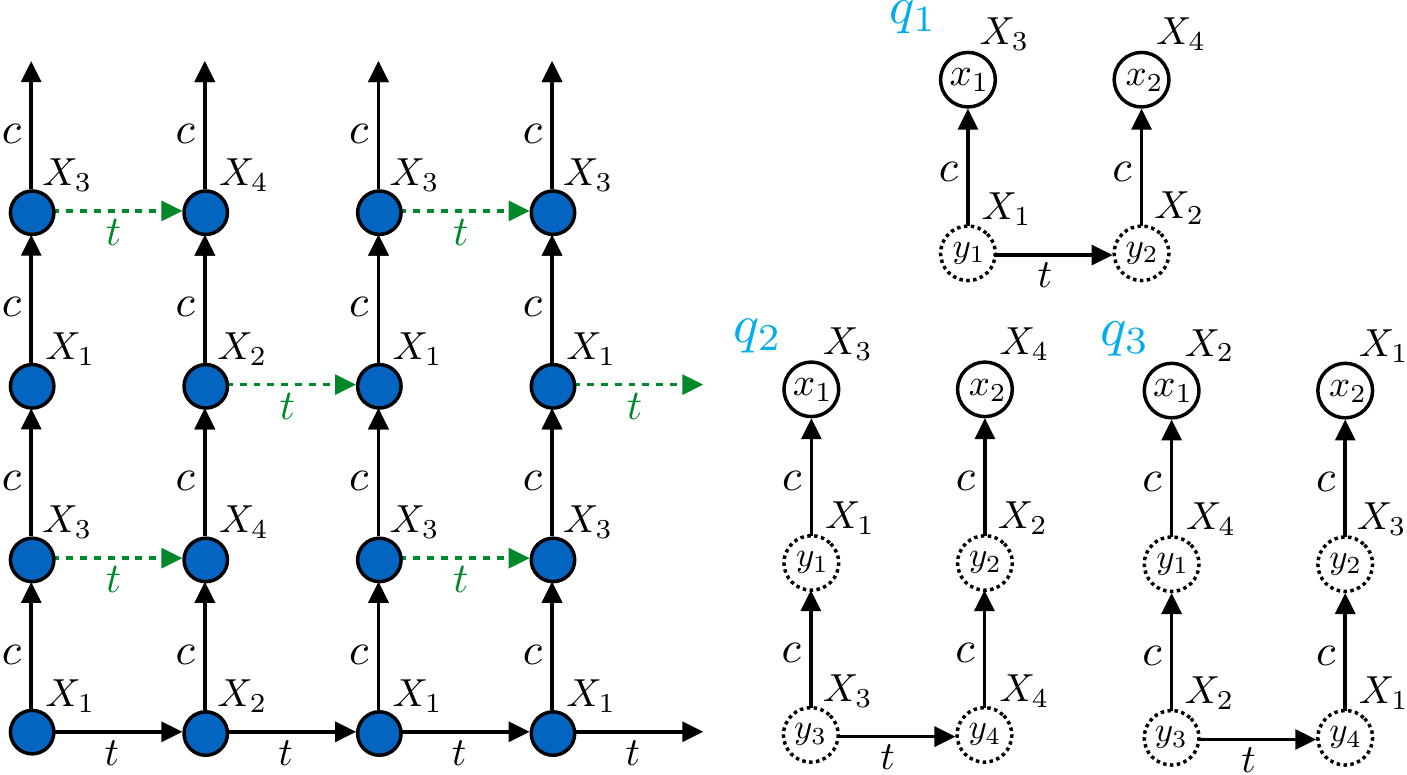}
    \end{center}
    \caption{Grid structure and CQs.}
    \label{fig:quant_grid}
    \vspace*{-5mm}
  \end{figure}

  $$
  \begin{array}{c}
    X_1 \sqsubseteq_L \forall c.X_3 \sqcap \forall t.X_2\qquad
    X_2 \sqsubseteq_L \forall c.X_4 \sqcap \forall t.X_1 \\[1mm]
    X_3 \sqsubseteq_L \forall c.X_1 \qquad
    X_4 \sqsubseteq_L\forall c.X_2 \qquad S \sqsubseteq_L X_1.
  \end{array}
  $$

  Now we close every other grid cell, as indicated by the dashed edges in Figure~\ref{fig:quant_grid}.
  We define three CQs $q_1,q_2, q_3$, as shown on the right-hand side of Figure~\ref{fig:quant_grid}.
  The dots around the $y_i$ indicate that they are quantified variables. 
  For example $q_1$ is defined as follows:
  \begin{align*}
    q_1(x_1,x_2) = {}&\exists y_1 \exists y_2 (X_1(y_1) \land X_2(y_2) \land X_3(x_1) \land \\
  & X_4(x_2) \land  t(y_1,y_2) \land c(y_1,x_1) \land c(y_2,x_2))
  \end{align*}
  The following concept abstraction and refinement then make use of the three CQs to add the additional horizontal edges on level $L$. 
  For $i \in \{1,2,3\}$:
  $$
  \begin{array}{l}
    L'{:}\, U_i \abs L{:}\,q_i \\[1mm]
    L{:}\, q_i \land t(x_1, x_2) \rfn L'{:}\,U_i.
  \end{array}
  $$
  Note that in Figure~\ref{fig:quant_grid} this makes every pair of nodes connected by a dashed $t$-edge an ensemble.
  It is straightforward to see that these ensembles never overlap.

  What remains to show is how to encode the runs of the turing machine into the grid.
  We add all the concept inclusions presented in Appendix~\ref{app:repfree}.
  The only thing missing is to ensure that the tape cells which have no $t$-successor follow the same rules.
  Intuitively, for every $\forall t. D$ appearing on the right side of a CI 
  we have to add a copy of that CI where we replace $\forall t. D$ with $\forall c.\forall t. \forall c^-.D$.
  Similar for CIs with $\exists t. D$ on the left side which will be replaced by $\exists c.\exists t. \exists c^-.D$.

  For the transitions of the turing machine, we thus add the following CIs to $\Omc$:
  $$
    \exists c.\exists t. \exists c^-.B_{q, \sigma, L} \sqsubseteq_L A_q \quad
    B_{q, \sigma, R} \sqsubseteq_L \forall c.\forall t.\forall c^-.A_q
  $$

  And analogously for marking cells not under the head, we add the following CIs to $\Omc$:
  $$
  \begin{array}{r@{\;}c@{\;}lcr@{\;}c@{\;}l}
    A_q &\sqsubseteq_{L_n}& \forall c.\forall t.\forall c^-.H_{\shortleftarrow} &&
    \exists c.\exists t. \exists c^-.A_q &\sqsubseteq_{L_n}& H_{\shortrightarrow} \\[1mm]
      H_{\shortleftarrow} &\sqsubseteq_{L_n}& \forall c.\forall t.\forall c^-.H_{\shortleftarrow} &&
      \exists c.\exists t. \exists c^-.H_{\shortrightarrow} &\sqsubseteq_{L_n}& H_{\shortrightarrow}
  \end{array}
  $$
  
  The correctness is captured by the following lemma which can be proved analogously to Lemma~\ref{lem:dag_corr}. 
  In fact, the proof is an even simpler version because we only have two abstraction levels instead of five.

  \begin{lemma}
    \label{lem:quant_corr}
    $S$ is $L$-satisfiable w.r.t.\ $\Omc$ iff $M$ does not halt on the
    empty tape.
  \end{lemma}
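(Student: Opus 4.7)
The plan is to mirror the proof of Lemma~\ref{lem:dag_corr}, exploiting the simplification that here only two abstraction levels $L, L'$ are involved. We handle the two directions separately and then identify the single delicate step.

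For the ``if'' direction, suppose $M$ does not halt on the empty tape and let $K_0, K_1, \dots$ be its infinite computation. I build a canonical A-interpretation $\Imc$ whose base level $\Imc_L$ is the standard grid $\Delta^{\Imc_L} = \{c_i t_j \mid i,j \in \mathbb{N}\}$ with $t^{\Imc_L}$ and $c^{\Imc_L}$ given by incrementing the $t$- and $c$-index respectively. The concept names $S, A_t, A_c$ and the parity classes $X_1, \dots, X_4$ are populated as dictated by Figure~\ref{fig:quant_grid}, while $A_q, A_\sigma, B_{q,\sigma,M}, H_\shortleftarrow, H_\shortrightarrow$ are set from the configurations $K_i$ exactly as in the proof of Lemma~\ref{lem:dag_corr} (cf.\ Appendix~\ref{app:repfree}). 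On level $L'$, I introduce for each grid cell $(i,j)$ one element $d^{i,j} \in U_k^{\Imc_{L'}}$, with $k \in \{1,2,3\}$ the index whose parity pattern matches the cell, and define $\rho_L(d^{i,j})$ to be the pair of upper endpoints of that cell. Verification of all inclusions, abstractions, and refinements is routine; the crucial feature is that the quantified variables $y_1, y_2$ in each $q_k$ are matched to the \emph{lower} endpoints of the cell, so these elements do not themselves become parts of any $L$-ensemble and the no-overlap condition on $\rho$ is respected.

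For the ``only if'' direction, let $\Imc$ be a model of $\Omc$ with some $d_0 \in S^{\Imc_L}$. I construct a grid embedding $\pi : \mathbb{N} \times \mathbb{N} \to \Delta^{\Imc_L}$ with $\pi(0,0) = d_0$: the generating CIs let me extend the bottom horizontal via $A_t \sqsubseteq_L \exists t.A_t$ and each vertical via $A_c \sqsubseteq_L \exists c.A_c$, and the $X_k$-inclusions ensure $\pi(i,j)$ lies in the class dictated by parity. The horizontal edges on rows $j > 0$ are then obtained by induction on $j$: given $(\pi(i,j), \pi(i+1,j)) \in t^{\Imc_L}$, the four points $\pi(i,j), \pi(i+1,j), \pi(i,j+1), \pi(i+1,j+1)$ carry $X$-labels matching exactly one $q_k$, with the lower pair serving as witnesses for the quantified variables and the upper pair as the answer variables $x_1, x_2$. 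Hence $(\pi(i,j+1), \pi(i+1,j+1)) \in q_k(\Imc_L)$; the abstraction $L'{:}U_k \abs L{:}q_k$ produces a $d \in U_k^{\Imc_{L'}}$ with $\rho_L(d) = (\pi(i,j+1), \pi(i+1,j+1))$; and the refinement $L{:}q_k \land t(x_1,x_2) \rfn L'{:}U_k$ then supplies the needed $t$-edge. Reading off the $A_q, A_\sigma$-labels of row $i$ recovers configuration $K_i$, and since $A_{q_h}^{\Imc_L} = \emptyset$ the encoded computation is non-halting.

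The main obstacle is the horizontal-closure step of the ``only if'' direction. What must be checked carefully is that the refinement really delivers a $t$-edge between precisely the intended pair $\pi(i,j+1), \pi(i+1,j+1)$ rather than between some other grid points, and that no conflicting $L$-ensembles are created along the way. Both points turn on the use of \emph{quantified} variables: the parity labelling $X_k$ pins down the homomorphism of $q_k$ so that the answer pair is unique, while $y_1, y_2$ being quantified means the lower pair of the cell is merely matched, not forced into any $L$-ensemble on $L'$. This is exactly the ingredient that the decidable fragment of Section~\ref{sec:alci_all_ar_statements_ub} forbids, and is what makes the reduction possible here.
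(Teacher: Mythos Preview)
Your ``if'' direction is essentially on track and matches the paper's approach (full grid on level $L$, sparse two-element ensembles on $L'$), though the phrase ``for each grid cell'' is imprecise: only cells whose $X$-pattern matches one of the three queries receive an abstracting element, and $q_2,q_3$ do not have the single-cell shape you describe.

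The ``only if'' direction has a genuine gap. You claim that for every $4$-corner cell the $X$-labels match exactly one of $q_1,q_2,q_3$, so that a $t$-edge at row $j$ always propagates to row $j{+}1$. This is not what the ontology provides. There are four parity classes of cells but only three queries, and in fact only $q_1$ has the single-step ``lower pair $y_1,y_2$ / upper pair $x_1,x_2$'' shape you assume; $q_2$ and $q_3$ span two $c$-steps, which is precisely why the paper's induction invokes the hypothesis at row $j{-}2$ rather than $j{-}1$. Hence the ontology forces $t$-edges only at the checkerboard positions ($i$ even, $j$ odd and $i$ odd, $j$ even), not everywhere, and your row-by-row induction fails for the remaining parity class. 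Note also that if your stronger claim \emph{were} derivable, adjacent answer pairs in the same row would share an element and violate the no-overlap condition on ensembles.

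This matters for reading off the computation. With only checkerboard $t$-edges, horizontally adjacent tape cells may be connected only by a detour $c;t;c^{-}$ of length~$3$. The ontology compensates via the additional CIs $B_{q,\sigma,R} \sqsubseteq_L \forall c.\forall t.\forall c^{-}.A_q$, \ $\exists c.\exists t.\exists c^{-}.B_{q,\sigma,L} \sqsubseteq_L A_q$, and the analogous ones for $H_\shortleftarrow,H_\shortrightarrow$, none of which your sketch invokes. The fix is to follow the paper: prove only the two parity statements for $t$-edges (by induction on $j$, using $q_1$ for the base case and $q_2,q_3$ with the hypothesis at $j{-}2$), and then argue that these detour CIs carry head, state, and symbol information across the missing horizontal links.
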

  \begin{proof}
    ``if''. Assume that $M$ does not halt on the empty tape. We
    need construct a model of $\Omc$ that $L$-satisfies $S$. 
    We use the model constructed in the proof of the `if'-direction of Lemma~\ref{lem:dag_corr}.
    Apart from reducing the size and number of ensembles (since they are more sparse in the grid now) and adjusting the refinement function accordingly, this is a model of our ontoloty and $L$-satisfies $S$. 
    
    ``only if''.
     Assume that $S$ is $L$-satisfiable w.r.t.\ \Omc and let
    \Imc be a model of \Omc with $S^{\Imc_L} \neq \emptyset$. We
    identify a grid in $\Imc_L$ in the form of a mapping
    $\pi: \mathbb{N} \times \mathbb{N} \rightarrow \Delta^{\Imc_L}$ such
    that for all $i,j \in \mathbb{N} \times \mathbb{N}$,
    $(\pi(i,j),\pi(i,j+1)) \in c^{\Imc_L}$ and either 
    $(\pi(i,j),\pi(i+1,j)) \in t^{\Imc_L}$ or $(\pi(i,j-1),\pi(i+1,j-1)) \in t^{\Imc_L}$.
    We construct such a $\pi$ in several steps:
  
    \begin{itemize}
  
    \item 
    Start with choosing some $d \in S^{\Imc_L}$ and set
    $\pi(0,0)=d$.
    
  
    \item Complete the bottom horizontal. If $\pi(i,0)$ is defined and
      $\pi(i+1,0)$ undefined, then choose $d \in \Delta^{\Imc_L}$ such
      that $(\pi(i,0),d) \in t^{\Imc_L}$ and
      set $\pi(i+1,0)=d$.
  
  
    \item Complete the verticals. 
      If $\pi(i,j)$ is defined and
      $\pi(i,j+1)$ undefined, then choose $d \in \Delta^{\Imc_L}$ such
      that $(\pi(i,j),d) \in c^{\Imc_L}$ and
      set $\pi(i,j+1)=d$.
  
  
    \item Complete horizontals.
      Now we have to argue that in our grid there are the horizontals as depicted in Figure~\ref{fig:quant_grid}.
      Formally, we show that for each $i,j \in \mathbb{N}$, either $(\pi(i,j),\pi(i+1,j)) \in t^{\Imc_L}$ or $(\pi(i,j-1),\pi(i+1,j-1)) \in t^{\Imc_L}$
  
      We do this by proving the following two statements:
      \begin{enumerate}
        \item if $i$ is even and $j$ is odd, then $(\pi(i,j),\pi(i+1,j)) \in t^{\Imc_L}$;
        \item if $i$ is odd and $j$ is even, then $(\pi(i,j),\pi(i+1,j)) \in t^{\Imc_L}$.
      \end{enumerate}
      The proof is by induction on $j$.
      From the CIs in $\Omc$ and and the already completed bottom horizontal and verticals it follows that for all $i,j \in \mathbb{N}$, $\pi(i,j)$ satisfies $X_k$ in the pattern depicted 
      in Figure~\ref{fig:quant_grid}.
      Thus for Statement~1 and $j=1$, the concept abstraction and refinement that use $q_1$ imply $(\pi(i,1), \pi(i+1,1)) \in t^{\Imc_L}$, as required.
      For odd $j > 1$, we can apply the IH to obtain $(\pi(i,j-2), \pi(i+1,j-2)) \in t^{\Imc_L}$ and use the abstraction and refinement using $q_2$
      to imply $(\pi(i,j), \pi(i+1,j)) \in t^{\Imc_L}$.
  
      The proof for Statement~2 is very similar. For $j=0$ it is already proven by our completed bottom horizontal.
      For even $j > 0$, it again follows that $(\pi(i,j-2), \pi(i+1,j-2)) \in t^{\Imc_L}$ and we can thus use the abstraction and refinement that use $q_3$
      to imply $(\pi(i,j), \pi(i+1,j)) \in t^{\Imc_L}$, as required.

      \end{itemize}

      We can now read off the computation of $M$ on the empty tape from the
      grid in a straightforward way, using the concept names $A_\sigma$
      for the tape content and $A_q$ for the state and head position.
      Since $A_{q_h}^{\Imc_L}=\emptyset$, the computation is non-terminating.
  \end{proof}

  This concludes the proof for $\ALCI^\mn{abs}[\textnormal{ca,cr}]$.
  The observation in Section~\ref{sect:undecbasic} that allows us to express universal quantification for role names
  can be easily adapted to work for inverse roles as well. Let $L_1,L_2$ be two abstraction levels with $L_1 \prec L_2$.
  If we want to simulate the CI $C \sqsubseteq_{L_1} \forall r^-.D$, then the following slight variation 
  of the role refinement and concept abstraction presented in that section works:
  $$
  \begin{array}{l}
   L_2{:}\,A(x) \land r(x,y) \rfn L_1{:}\, r(x,y) \land C(y)\\[1mm]
   L_1{:}\, D \abs L_2{:}\, A(x)
  \end{array}
  $$
  That makes the proof for $\EL^\mn{abs}[\textnormal{ca},\textnormal{cr},\textnormal{rr}]$ a minor 
  variation of the one above and thus we have proven Theorem~\ref{thm:qvar}.
\end{document}